\newcommand{\E}{\mathbb{E}}
\newcommand{\bt}{\beta}
\newcommand{\R}{\mathbb{R}}
\newcommand{\1}{\mathbf{1}}
\newcommand{\PR}{\mathbb{P}}
\newcommand{\sgen}{\alpha + \beta^\top x}
\newcommand{\si}{\alpha + \beta^\top x_i}
\newcommand{\bxi}{\beta_*^\top x_i}
\newcommand{\bx}{\beta_*^\top x}
\DeclarePairedDelimiter{\norm}{\lVert}{\rVert}
\newtheorem{theorem}{Theorem}[section]
\newtheorem{corollary}{Corollary}[section]
\newtheorem{lemma}{Lemma}[section]
\newtheorem{definition}{Definition}
\newtheorem{condition}{Condition}
\newtheorem{proposition}{Proposition}[section]
\title{\LARGE\bf Linear Classifiers Under Infinite Imbalance}
\author{Paul Glasserman and Mike Li \\ Columbia Business School}
\date{June 2021; revised October 2022}
\begin{document} 

\maketitle

\begin{abstract}
We study the behavior of linear discriminant functions for binary classification in the
infinite-imbalance limit, where the sample size of one class grows without
bound while the sample size of the other remains fixed. The coefficients
of the classifier minimize an empirical loss specified through a weight function.
We show that for a broad class of weight functions, the intercept diverges but the
rest of the coefficient vector has a finite almost sure limit under infinite imbalance,
extending prior work on logistic regression. The limit
depends on the left-tail growth rate of the weight function, 
for which we distinguish two cases: subexponential and exponential.
The limiting coefficient vectors reflect robustness or conservatism properties in the sense that they
optimize against certain worst-case alternatives. In the subexponential case,
the limit is equivalent to an implicit choice of upsampling distribution for the minority class.
We apply these ideas in a credit risk setting, with particular emphasis on performance
in the high-sensitivity and high-specificity regions.
\end{abstract}

\baselineskip18pt

\section{Introduction}

Binary classification tasks often face a severe problem of imbalanced data:
observations from one class are plentiful but observations from the other
class are scarce. In detecting rare diseases, for example, one may have
access to nearly unlimited measurements from healthy patients but
only a few from sick patients; lenders often have rich data on low-risk borrowers but fewer observations of borrowers who default.

Under extreme imbalance, a simple rule that predicts that all observations
are from the majority class achieves near-perfect accuracy on training data.
Such a rule is clearly useless in detecting observations from the minority class,
which is often the main objective of the classification. 

A common approach to binary classification evaluates a scoring rule at each
observation and then applies a threshold to the scores to assign each observation
to one class or the other. In analyzing performance under imbalance we
would like to separate the choice of threshold (which can be adjusted to
correct for imbalance) from the effect of the features used in the scoring
rule.

In the case of logistic regression, this separation is provided by the intercept
and the rest of the coefficient vector. Owen \cite{owen} analyzed the
behavior of logistic regression in the infinitely imbalanced limit, where 
the size of the majority class becomes infinite while the size of the minority
class remains fixed. He showed that the intercept tends to negative infinity,
as a consequence of the growing imbalance, but the rest of the coefficient
vector approaches a finite limit.

We extend Owen's \cite{owen} result to
a wide class of linear discriminant functions exhibiting a variety of behaviors.
These classifiers assign a score to each observation using a linear
combination of features, with coefficients chosen to minimize a loss
function; the score is compared to a threshold to classify the observation.
In the framework of Eguchi and Copas \cite{eguchi}, the loss is
determined by a weight function that penalizes
high scores for one class and low scores for the other class. 
We prove infinite-imbalance limits for the coefficient vectors of a broad
family of such classifiers, with explicit expressions for the limits.
Even in the case of logistic regression, our results extend Owen's \cite{owen}
because we directly analyze the empirical loss rather than the approximation
used in Owen \cite{owen}.

We distinguish two broad categories of classifiers we call \emph{asymptotically
subexponential} and \emph{asymptotically exponential}, based on 
the left-tail growth rates of their weight functions. The first category includes
bounded weight functions, which further include logistic regression as a 
special case. We show that all classifiers in this category have the
same limit under infinite imbalance and are therefore equivalent to
logistic regression in this limiting regime.

The asymptotically exponential category includes the loss function in the 
AdaBoost method (as formulated in Freund and Schapire \cite{freund}, and Friedman, Hastie, and Tibshirani \cite{friedman}) and asymmetric extensions of this
method. For this category we show that the limiting coefficient vector
depends on the exponent in the left-tail growth rate of the weight
function. From this perspective, the limit for logistic regression can be
seen as a very special boundary case, corresponding to an exponent of zero.

The asymptotically exponential case allows a richer set of limits,
and varying the exponent in this family of methods provides useful flexibility in controlling the performance of a classifier with highly imbalanced data. By varying the exponent we can put more weight on specificity (the true negative rate) or sensitivity (the true positive rate) in the classification task.

To support this interpretation, we study
the form of the limiting coefficient vectors to understand what
the infinite-imbalance limit says about the classification rules.
We show that the limits reflect robustness properties, in the sense that they
are optimized against certain worst-case alternatives. Different types of robustness
properties can also be seen as different types of conservatism in selecting which
errors to emphasize under extreme imbalance.

This robustness or conservatism is
easiest to appreciate when the weight function is asymptotically subexponential, which includes the case of logistic regression.
We know from the Neyman-Pearson lemma that the optimal rule for classifying an 
observation as coming from one probability
distribution or another uses the likelihood ratio between the two distributions.
The limiting coefficient vector with a 
subexponential weight function is the log likelihood ratio between the distribution
of the majority class and a ``worst-case'' alternative. 
Among the set of distributions having the mean of the minority class, this alternative 
is the distribution closest to that of the majority class, with closeness measured through relative entropy or Kullback-Leibler divergence. This is the worst case because distributions
that are closer are harder to separate. Thus, we show that
the limiting coefficient vector provides the best (Neyman-Pearson) classifier for the worst alternative to the majority distribution among all distributions with the mean of the minority class. 

We also prove a version of this result when the left tail of the weight functions grows exponentially. The subexponential case implicitly emphasizes conservatism
with respect to false positives in identifying draws from the minority class. The asymptotically exponential case balances concerns about false negatives and false positives, with the relative weight determined by the choice of exponent.

Imbalance is often addressed through downsampling (discarding observations
from the majority class) or upsampling (reproducing  or creating synthetic observations
from the minority class); see, for example, the methods in Chawla et al. \cite{smote},
Drummond and Holte \cite{drummond}, and Kubat and Matwin \cite{kubat}, and
the comparison of methods in Liu, Wu, and Zhou \cite{lwz}.
The infinite imbalance limits we study can also be understood from this
perspective in the asymptotically subexponential case.
Linear discriminant rules in this case (including logistic
regression) become equivalent, in the infinite-imbalance limit, to an
implicit choice of upsampling distribution. This implicit rule upsamples
the minority class using the worst-case alternative to the majority class.

We illustrate these ideas through numerical examples and an empirical application.
As is customary, we examine classification performance through the
receiver operating characteristic (ROC) curve. We use partial area-under-the-curve (pAUC) 
measures, as introduced in McClish \cite{mcclish}, to focus attention on the high-specificity and high-sensitivity endpoints. We argue that these regions are where the choice of weights
for the discriminant function matters most.
Using exponential weight functions, we find a consistent ranking of performance according
to the size of the exponent, but the ordering flips between the high-sensitivity and
high-specificity regions. Consistent with our limiting results, the behavior of 
logistic regression becomes similar to that of a classifier from an exponential weight function
with a small exponent, as the degree of imbalance grows.

We apply these ideas in a credit risk setting using mortgage data from Freddie Mac.
We consider the problem of predicting default in the first two years of a loan, using
features available at the time the loan was made. Defaults are rare, making the data
highly imbalanced. We take the view that in an initial screening, a lender would want to
achieve a high level of sensitivity in detecting likely defaulters. We calibrate logistic
and exponential classifiers to achieve high true positive rates in training data and then
compare true positive and true negative rates in test data. The relative performance
of the exponential classifiers and logistic regression are consistent with the predictions
from our theoretical analysis.

Section~\ref{s:setup} discusses the class of linear discriminant functions we study
based on minimizing expected loss measures.
Section~\ref{s:existence} establishes the existence of unique minimizers for empirical
loss measures.
Section~\ref{s:main} presents our main theoretical results, the coefficient limits under
infinite imbalance.
Section~\ref{s:interp} discusses the interpretation of the limits.
Section~\ref{s:numerical} presents numerical results, and Section~\ref{s:credit}
develops the credit risk application. Proofs appear in appendices.

\section{Discriminant Functions}
\label{s:setup}

\subsection{Logistically Consistent Objectives}
\label{s:ec}

We consider data in which each observation takes the form of a pair $(x, y) \in \R^d \times \{0,1\}$, 
where $x$ is a vector of features or attributes, and $y$ is a binary class label. When we
introduce imbalance, 0 will label the majority class, and 1 will label the minority class.
A discriminant function $\eta(x)$ assigns a score to each feature vector $x$, 
with the intention that points from class 1 will tend to have higher scores than points from class 0, 
so that the score can be used for classifying unlabeled observations: 
an observation $x$ is predicted to be from class 1
if and only if $\eta(x)>t$, for some threshold $t$.
A linear discriminant function takes the form $\eta(x) = \alpha + \beta^{\top}x$,
for some $\alpha\in\R$ and $\beta\in\R^d$.
We are primarily interested in the
vector $\beta$; if our rule for predicting class 1 based on features $x$ is $\sgen >t$, then the effect
of $\alpha$ can be absorbed into the threshold $t$.

We select $(\alpha,\beta)$ by minimizing a loss function. To formulate the objective,
it is useful to introduce distributions $F_0$ and $F_1$ on $\R^d$, describing the distributions of features in the two classes, and marginal probabilities $\pi_0$
and $\pi_1 = 1-\pi_0$ for the two labels. Write $\mathbb{E}_i$ for
expectation with respect to $F_i$, $i=0,1$, and write $\mathbb{P}_i$ for
the corresponding probability measures. The loss function is defined
by two increasing functions $U$, $V$ on $\R$, which yield the objective
\begin{equation}
C(\alpha, \bt) = -\pi_1 \E_1[U(\alpha + \beta^\top X)] + \pi_0 \E_0[V(\alpha + \beta^\top X)].
\label{cpi}
\end{equation}
The first term on the right penalizes small scores in class 1, and the second term
penalizes large scores in class 0.
We can also write this loss function as
\begin{equation}\label{cost_general}
    C(\alpha, \bt) = \E[-Y U(\alpha + \beta^\top X) + (1-Y) V(\alpha + \beta^\top X)], 
\end{equation}
by taking the expectation with respect to the unconditional distribution of $(X,Y)$,
under which $\mathbb{P}(Y=i)=\pi_i$ and $X$ has distribution $F_i$, given
$Y=i$, $i=0,1$.

Let $\eta$ be the log likelihood ratio of $F_1$ with respect to $F_0$,
which is well-defined on the intersection of the support of $F_0$ and $F_1$,
and which is not in general linear. 
Let $p(x) = \PR(Y=1|X=x)$, and define the log-odds
\begin{equation}
\eta_o(x) = \log \frac{p(x)}{1-p(x)} = \log\frac{\pi_1}{\pi_0} + \log\frac{dF_1}{dF_0}(x) \equiv \log\frac{\pi_1}{\pi_0} + \eta(x).
\label{etadef}
\end{equation}
By the Neyman-Pearson lemma, (\ref{etadef}) provides the optimal discriminant function in the sense
that it minimizes the error probability $\PR_1(\eta(X)\le t)$ for any value of 
the error probability $\PR_0(\eta(X)> t)$, as $t$ varies.
(See, e.g., Theorem 3.2.1 of Lehmann and Romano \cite{lehmann} for a precise statement.)
The log-odds $\eta_o$ provides an equivalent classifier because it differs
from $\eta$ by a constant that can be absorbed in the threshold $t$.

The log-odds need not be linear and thus need not be achievable by minimizing (\ref{cost_general}).
Eguchi and Copas \cite{eguchi} proposed the following 
\emph{logistically consistent} restriction on $U$ and $V$:
\emph{if} the log-odds function is linear, $\eta_o(x) = \alpha_o + \beta_o^{\top}x$, then (\ref{cost_general}) should be minimized at $(\alpha_o,\beta_o)$. 
In other words, $U$ and $V$ should deliver the optimal classifier if the
optimal classifier is linear. 

Eguchi and Copas \cite{eguchi} show that this consistency condition holds if
the penalty functions $U$ and $V$ satisfy
\[
\frac{\partial V(u)}{\partial u} = e^u \frac{\partial U(u)}{\partial u};
\]
equivalently,
\begin{equation}
U(u) = C_U - \int_{u}^\infty w(s) ds,\quad  V(u) = C_V+\int_{-\infty}^u e^s w(s) ds,
\label{uvint}
\end{equation}
for some positive function $w$ and some constants $C_U$ and $C_V$.
Our analysis applies to linear discriminant functions obtained by minimizing
an empirical counterpart of
(\ref{cost_general}) with $U$ and $V$ of this form. 
The constants $C_U$ and $C_V$ have no effect in minimizing (\ref{cost_general}).
 
Before proceeding further, we briefly review some common evaluation metrics in 
binary classification that we will use later.
The \emph{sensitivity} of a classifier refers to the true positive rate, 
or the probability that a positive instance will be classified as positive.
For a discriminant function $\eta$ and threshold $t$, the sensitivity
is $\mathbb{P}_1(\eta(X)>t)$, if we interpret a positive instance to be
an observation from class 1.
The \emph{specificity} of a classifier refers to the true negative rate, 
or the probability $\mathbb{P}_0(\eta(X)\le t)$
that a negative instance will be classified as negative.  
Setting the threshold $t$ to positive infinity classifies all instances
to the negative class (class 0) and achieves $100\%$ specificity but suffers
from 0\% sensitivity. 
As one decreases the decision threshold, the classifier's sensitivity improves at 
the expense of specificity, until it reaches the other extreme point 
where all instances are classified as positive, achieving $100\%$ sensitivity
and 0\% specificity.

The intrinsic trade-off between sensitivity and specificity is often illustrated
through the \emph{receiver operating characteristic curve} (ROC curve).
The ROC curve is the set of points traced by coordinates 
$(\mathbb{P}_0(\eta(X)> t),\mathbb{P}_1(\eta(X)> t))$, $-\infty \le t \le \infty$,
connecting point $(0,0)$ when $t \to \infty$ and $(1,1)$ when $t \to -\infty$. 
The second coordinate is the sensitivity of the classifier, 
and the first is 1 minus the specificity, so higher levels of the ROC curve
indicate better performance.
We will see examples of ROC curves in Section~\ref{s:numerical}.

The Neyman-Pearson lemma implies (see Lehmann and Romano \cite{lehmann},
p.62) that the ROC curve for the log-odds classifier 
lies above the ROC curve for any other discriminant function. 
As discussed in Section 2.3 of Eguchi and Copas \cite{eguchi}, 
the loss $C(\alpha, \bt)$ can be interpreted as the weighted area between the log-odds ROC curve  
and the ROC curve for the linear classifier determined by $(\alpha,\bt)$;
the weight assigned to the gap between the curves at a score of $u$ is $w(u)$.
By minimizing $C(\alpha, \bt)$, we find the linear score $\sgen$ that is closest to the true log-odds function, 
in the sense of this weighted area.
Different weight functions balance the sensitivity-specificity trade-offs differently. 
The weight $w(u)$ at large positive $u$ emphasizes the high-specificity/low-sensitivity region of the ROC curve, and the weight $w(u)$ at large negative $u$ emphasizes the high-sensitivity/low-specificity region of the ROC curve. We return to these ideas in
Section~\ref{s:robust2} and in the application of Section~\ref{s:credit}.

\subsection{Examples of Objective Functions}
\label{s:examples}

With $w(u) = w_0(u) = 1/(1+e^u)$, and $C_U=C_V=0$, we get
\begin{equation}
U(u) = \log \frac{e^u}{1+e^u}, \quad V(u) = - \log \frac{1}{1+e^u},
\label{lr}
\end{equation}
and the loss (\ref{cost_general}) becomes
\[
C(\alpha, \bt) = - \E \left [Y \log \frac{e^{\alpha + \beta^{\top}X}}{1+e^{\alpha + \beta^{\top}X}} + (1-Y) \log \frac{1}{1+e^{\alpha + \beta^{\top}X}} \right ].
\]
Minimizing this expression (or more precisely its empirical counterpart) is equivalent to 
maximizing the likelihood function in ordinary logistic regression. 
That is, ordinary logistic regression is a special case of this family of objectives.
The discriminant functions $x\mapsto \alpha + \beta^{\top}x$ and
$x\mapsto \exp(\alpha + \beta^{\top}x)/(1+\exp(\alpha + \beta^{\top}x))$ yield equivalent
classification rules because each is a monotone transformation of the other.

Among other examples, we will also consider exponential weight functions,
\[
w(u) = \lambda (1-\lambda) e^{-\lambda u},\quad U(u) = -(1-\lambda) e^{-\lambda u},\quad V(u) = \lambda e^{(1-\lambda) u},
\]
with $\lambda \in (0,1)$, for which the loss function becomes
\begin{equation}
C(\alpha, \bt) = \E[Y (1-\lambda) e^{-\lambda (\alpha + \beta^{\top}X)} + (1-Y) \lambda e^{(1-\lambda)(\alpha + \beta^{\top}X)}].
\label{eloss}
\end{equation}
As illustrated in Figure~\ref{f:wuv}, the logistic weight $w_0$
is bounded whereas the exponential $w$ is not.
A larger $\lambda \in (0,1)$ attaches a greater penalty to large negative values of $\sgen$ when $y=1$, 
and a smaller $\lambda \in (0,1)$ attaches a greater penalty to large positive values of $\sgen$ when $y=0$. 
Informally, $\lambda$ balances a trade-off between false negative
and false positive probabilities. (Recall that we take draws from class 1 to be positive cases.)
We will see that a larger $\lambda$ is more sensitive to the distribution of the minority class.
The symmetric case $\lambda=1/2$ corresponds to the loss function behind the AdaBoost
method of Freund and Schapire \cite{freund}, as discussed in Section 4.1 of Friedman, Hastie, and Tibshirani \cite{friedman} and Section 2.4 of Eguchi and Copas \cite{eguchi}.

Our analysis also allows weight functions like
$$
w(u) = \left\{\begin{array}{ll}
1 - 2u, & u\le 0; \\
(1+u)^{-2}, & u>0,
\end{array}\right.
$$
that are in between an exponential weight function and the bounded weight
function underlying logistic regression in (\ref{lr}), in the sense that
in this example $w(u)$ grows linearly as $u\to-\infty$. We provide
precise conditions on $w$ in Section~\ref{s:main}.

\begin{figure}
\centering
\begin{subfigure}{.5\textwidth}
    \centering 
    \caption*{Logistic Case}
  \includegraphics[width=.6\linewidth]{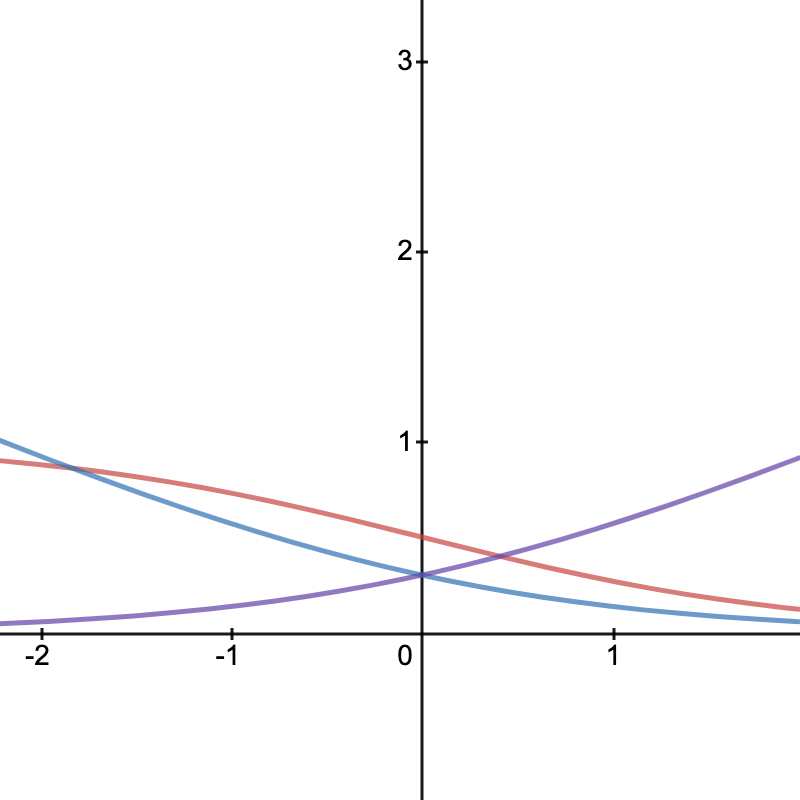}
  \caption{\textcolor{red}{$w(u) = \frac{1}{1+e^u}$}, \\
  \textcolor{blue}{$-U(u) = -\log \frac{e^u}{1+e^u}$}, \textcolor{violet}{$V(u) = -\log \frac{1}{1+e^u}$}}
\end{subfigure}%
\begin{subfigure}{.5\textwidth}
  \centering 
  \caption*{Exponential Case}
  \includegraphics[width=.6\linewidth]{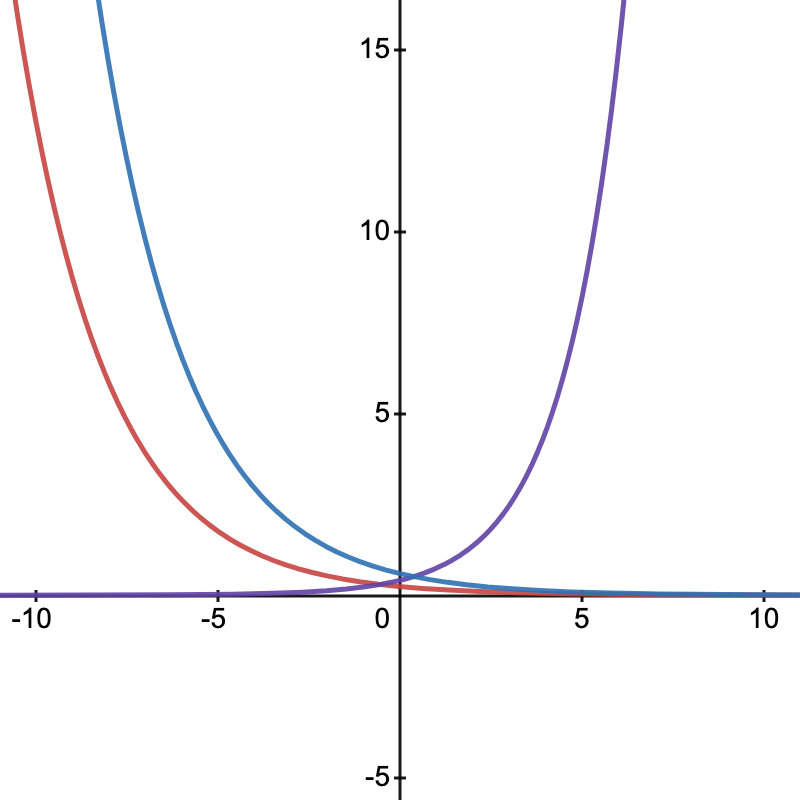}
  \caption{
  \textcolor{red}{$w(u) = \lambda (1-\lambda) e^{-\lambda u}, \lambda \in (0,1)$}, \\
  \textcolor{blue}{$-U(u) = (1-\lambda) e^{-\lambda u}$}, 
  \textcolor{violet}{$V(u) = \lambda e^{(1-\lambda) u}$}}
\end{subfigure}
\caption{Examples of weight functions $w$ and penalty functions $U$ and $V$.}
\label{f:wuv}
\end{figure}

\subsection{Empirical Loss}
In estimating $(\alpha,\beta)$, we minimize an empirical version of the loss (\ref{cost_general}).
Let $x_1, \dots, x_n$ denote $n$ observations from class 1,
and let $X_1, \dots, X_N$ denote $N$ independently and identically distributed (i.i.d.) 
samples from class 0, with underlying distribution $F_0$.
Define
\begin{equation}\label{cost}
\bar{C}_N(\alpha, \beta) = \sum_{i=1}^{n} -U(\alpha + \beta^\top x_i) + \sum_{j=1}^N V(\alpha + \beta^\top X_j).
\end{equation}
To go from (\ref{cpi}) to (\ref{cost}), replace the expectations
with sample means, replace
$\pi_1$ with $n/(n+N)$ and $\pi_0$ with $N/(n+N)$, and multiply by $n+N$.
For fixed $N$, let $(\alpha_N, \bt_N)$ minimize (\ref{cost}). We study the behavior of $(\alpha_N, \bt_N)$ as $N \to \infty$, with $n$ fixed.

\section{Existence of a Minimizer}
\label{s:existence}

In this section, we provide conditions ensuring the existence of a unique, finite minimizer
of (\ref{cost}) for all sufficiently large $N$, a.s. We first state some basic assumptions:

\begin{condition}[Basic properties]\label{assump_basic}
The weight function $w$ is strictly positive on $\R$. The penalty functions $U$ and $V$
in (\ref{uvint}) are well-defined and finite on all of $\R$. 
\end{condition}

The conditions on $U$ and $V$ imply that $w(s)\to 0$ and $e^{-s}w(-s)\to 0$,
as $s\to\infty$. For the loss to be convex,
we want $w(u)$ to be decreasing and $e^uw(u)$ to be increasing, so we assume 
\begin{condition}[Convexity] \label{assump_conv}
For all $u\in\R$, $w'(u) \le 0$ and $w(u) + w'(u) > 0$.
\end{condition}

Although weak inequalities would suffice for convexity, we make the second inequality
strict to ensure strict convexity.

For ordinary logistic regression, Silvapulle \cite{silvapulle} provides necessary and sufficient
conditions for the existence of maximum likelihood estimates. These conditions include
a requirement of overlap between the two classes.
Owen \cite{owen} proposes a stronger, but broadly applicable condition to prevent degeneracy.
The key idea is that the empirical distribution of the minority class and the true distribution $F_0$
for the majority class should overlap at least to the extent that there is probability mass (with respect to $F_0$) along every possible direction away from the empirical mean of the minority class. 
We will use this condition as well. It relies on the following
definition from Owen \cite{owen}.

\begin{definition}[Surrounding property]
The distribution $F$ on $\R^d$ surrounds the point $x^*$ if for some $\epsilon > 0$, for some $\delta > 0$ and for all $\omega \in \Omega = \{\omega \in \R^d | \omega^\top \omega = 1\}$
\begin{equation}\label{surrounding}
    \int_{(x-x^*)^\top \omega > \epsilon} d F(x) > \delta.
\end{equation}
\end{definition}

\begin{figure}
    \centering
    \includegraphics[width=6cm]{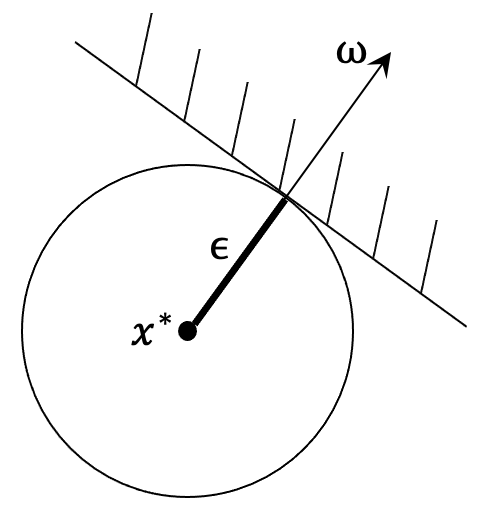}
    \caption{$F$ surrounds point $x^*$ if it assigns mass at least $\delta>0$ to
    the shaded half-space, for every direction $\omega$, for some $\epsilon>0$.}
    \label{fig:surr}
\end{figure}
Figure~\ref{fig:surr} illustrates the surrounding condition: $F$ surrounds point $x^* \in \R$ if $F$ has at least $\delta$ mass in \emph{every} half-space that is $\epsilon$ away from $x^*$. This holds, for example, if $F$ has a density that is bounded
away from zero in a neighborhood of $x^*$.

\begin{lemma}[Convex Objective]\label{cvx_obj}
Under Conditions \ref{assump_basic}--\ref{assump_conv}, $\bar{C}_N$ is 
a.s. convex.
If in addition $F_0$ surrounds at least one point in $\R^d$, 
$\bar{C}_N$ is strictly convex for all sufficiently large $N$, a.s.
\end{lemma}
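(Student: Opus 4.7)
The plan is to reduce the lemma to two separate ingredients: one-dimensional convexity of $-U$ and $V$, and a spanning statement about the augmented sample vectors $(1, X_j^\top) \in \R^{d+1}$. For convexity, I would differentiate the integral representation (\ref{uvint}) to get $(-U)''(u) = -w'(u) \ge 0$ and $V''(u) = e^u(w(u)+w'(u)) > 0$ directly from Condition~\ref{assump_conv}; this shows $-U$ is convex and $V$ is strictly convex on $\R$. Since each summand in $\bar C_N$ is the composition of one of these with the affine map $(\alpha,\beta)\mapsto \alpha+\beta^\top x$ evaluated at a data point, every summand is convex in $(\alpha,\beta)$ and the sum is convex a.s.

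For strict convexity the crucial observation is that $V(\alpha+\beta^\top X_j)$ is strictly convex along a nonzero direction $v=(\delta_\alpha,\delta_\beta)$ iff $\delta_\alpha+\delta_\beta^\top X_j \ne 0$; equivalently, the Hessian of $\bar C_N$ contains the PSD term $\sum_j V''(\cdot)(1,X_j^\top)^\top(1,X_j^\top)$ with strictly positive coefficients. Hence $\bar C_N$ is strictly convex iff the augmented vectors $\{(1,X_j^\top)\}_{j\le N}$ span $\R^{d+1}$. Because this spanning property is monotone in $N$, it suffices to prove that a.s.\ some finite $N$ works; equivalently, that one can find $d+1$ affinely independent samples among the $X_j$.

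The reduction isolates the real work: showing that under the surrounding hypothesis, $F_0(H)<1$ for every affine hyperplane $H\subset\R^d$. Suppose for contradiction that $F_0$ were supported on $H=\{x:b^\top x=c\}$ with $\|b\|=1$. Applying (\ref{surrounding}) to $\omega=\pm b$ yields two half-spaces of the form $\{\pm b^\top(x-x^*)>\epsilon\}$ that are disjoint; $H$ can lie entirely in at most one of them, so at least one of these half-spaces receives $F_0$-mass zero, contradicting the definition. Given $F_0(H)<1$ for every hyperplane, I would construct affinely independent samples inductively: if the affine hull $A_{k-1}$ of the first $k-1$ chosen points has dimension $k-1<d$, it lies in some hyperplane and so $F_0(A_{k-1})<1$, whence by the Borel--Cantelli lemma (or i.i.d.\ repetition) a future sample a.s.\ falls outside $A_{k-1}$. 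After $d$ such extensions, we a.s.\ have $d+1$ affinely independent $X_j$, whose augmented vectors are linearly independent in $\R^{d+1}$. The main obstacle is the translation from the geometric surrounding property to the non-support-on-hyperplane statement; once the inequality $F_0(H)<1$ is established the rest is a routine inductive construction.
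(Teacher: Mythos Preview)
Your argument is correct and follows the same outline as the paper's: convexity from $-U'' = -w'\ge 0$ and $V'' = e^u(w+w')>0$, and strict convexity from the fact that the sampled $X_j$ eventually fail to lie in any common hyperplane because the surrounding condition forces $F_0(H)<1$ for every hyperplane $H$. The only substantive difference is the device used to pass from ``$F_0$ is not supported on any hyperplane'' to ``a.s.\ the $X_j$ eventually span affinely'': you build $d+1$ affinely independent samples inductively (using that conditional on the current affine hull $A_{k-1}$, the i.i.d.\ future samples escape $A_{k-1}$ a.s.\ since $F_0(A_{k-1})<1$), whereas the paper applies the Hewitt--Savage zero--one law to the permutation-invariant event ``all $X_i$ lie on a common hyperplane.'' Your route is more elementary and constructive; the paper's is shorter once Hewitt--Savage is at hand. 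One small remark: your ``iff'' between strict convexity of $\bar C_N$ and spanning of the augmented majority vectors is not literally an equivalence, since the $-U$ terms could contribute strictly when $w'<0$; but only the ``if'' direction is needed, and that direction is exactly what your Hessian argument delivers.
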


It turns out that if $F_0$ surrounds the 
minority class mean $\bar{x}$, then the loss function $\bar{C}_N(\alpha, \bt)$ in (\ref{cost}) has a unique finite minimizer, as established below.

\begin{condition}[Surrounding minority mean]\label{assump_surr_mean}
$F_0$ surrounds $\bar{x}$ with some parameters $\epsilon,\delta>0$,
where $\bar{x} = (x_1+\cdots+x_n)/n$ is the mean of the minority class.
\end{condition}

\begin{lemma}[Existence]\label{existence}
Let $n \ge 1$ and $x_1,\dots,x_n \in \R^d$ be given.
If Conditions \ref{assump_basic}--\ref{assump_surr_mean} hold
then the loss function $\bar{C}_N(\alpha, \bt)$ in (\ref{cost}) has a unique finite minimizer 
$(\alpha_N, \bt_N)$ for all sufficiently large $N$, a.s.
\end{lemma}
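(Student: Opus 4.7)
The plan is to combine strict convexity, already supplied by Lemma~\ref{cvx_obj}, with coercivity: a continuous strictly convex function on $\R^{d+1}$ has a unique finite minimizer if and only if it is coercive, so the task reduces to showing that $\bar{C}_N(\alpha,\beta)\to\infty$ whenever $\norm{(\alpha,\beta)}\to\infty$, almost surely for all sufficiently large $N$.

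Before the case analysis I would record two tail facts for $U$ and $V$ that follow directly from Conditions~\ref{assump_basic}--\ref{assump_conv}. Since $w$ is positive and decreasing, $w\ge w(0)>0$ on $(-\infty,0]$, so $\int_{-\infty}^{0}w(s)\,ds=\infty$ and $-U(u)=-C_U+\int_u^\infty w(s)\,ds\to+\infty$ as $u\to-\infty$. The derivative $(e^s w(s))'=e^s(w(s)+w'(s))$ is strictly positive by Condition~\ref{assump_conv}, so $e^s w(s)$ is increasing and bounded below by $w(0)>0$ on $[0,\infty)$, giving $V(u)\to+\infty$ as $u\to+\infty$. On the opposite tails, $U\le C_U$ and $V\ge C_V$, so every summand in $\bar{C}_N$ is bounded below; hence a single summand tending to $+\infty$ is enough to drive $\bar{C}_N\to\infty$.

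Now take any sequence $(\alpha_k,\beta_k)$ with $t_k:=\norm{(\alpha_k,\beta_k)}\to\infty$, normalize to $(\tilde\alpha_k,\tilde\beta_k)=(\alpha_k,\beta_k)/t_k$, and pass to a subsequence along which $(\tilde\alpha_k,\tilde\beta_k)\to(\alpha_*,\beta_*)$ on the unit sphere. Three cases exhaust the possibilities. (i) If $\alpha_*+\beta_*^\top\bar{x}<0$, then because $\bar{x}$ is the average of $x_1,\dots,x_n$, some $x_{i^*}$ must satisfy $\alpha_*+\beta_*^\top x_{i^*}<0$, so $\alpha_k+\beta_k^\top x_{i^*}\to-\infty$ and $-U$ blows up on that term. (ii) If $\alpha_*+\beta_*^\top\bar{x}\ge 0$ and $\beta_*=0$, then $\alpha_*>0$ and $V(\alpha_k+\beta_k^\top X_j)\to V(\infty)=\infty$ for every $j$. (iii) If $\alpha_*+\beta_*^\top\bar{x}\ge 0$ and $\beta_*\ne 0$, set $\omega=\beta_*/\norm{\beta_*}$; Condition~\ref{assump_surr_mean} together with a uniform strong law over half-spaces yields, a.s.\ for all large $N$, at least one $X_{j^*}$ with $(X_{j^*}-\bar{x})^\top\omega>\epsilon$, so that $\alpha_*+\beta_*^\top X_{j^*}\ge\epsilon\norm{\beta_*}>0$ and $V$ blows up on that term.

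The main technical point is the uniformity in the subsequential direction in case (iii): for any prescribed $\omega$ the surrounding condition and the ordinary strong law deliver such an $X_{j^*}$ on a full-measure event that may depend on $\omega$, but the limiting direction is not known in advance. I would dispose of this with standard Vapnik--Chervonenkis bounds for the class of half-spaces in $\R^d$, which has finite VC dimension and so obeys a uniform Glivenko--Cantelli law. This produces a single almost-sure event on which, uniformly in unit $\omega$, the empirical measure of $\{x:(x-\bar{x})^\top\omega>\epsilon\}$ exceeds $\delta/2$ for all large $N$; on this event the three-case analysis delivers coercivity in every direction simultaneously, which combined with strict convexity from Lemma~\ref{cvx_obj} completes the proof.
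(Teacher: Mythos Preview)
Your argument is correct and complete. The approach, however, differs from the paper's in its execution. The paper argues that a strictly convex function either has a unique minimizer or is strictly decreasing along some ray, and then rules out the latter by differentiating $\bar{C}_N$ along rays and showing the derivative eventually becomes strictly positive; this leads to the four-term decomposition (\ref{t1})--(\ref{t4}) and bounds based on monotonicity of $w(s)$ and $e^s w(s)$. You instead establish coercivity directly from the tail behavior $-U(-\infty)=V(+\infty)=+\infty$, avoiding differentiation altogether. Both routes hinge on the same structural fact --- that the empirical distribution of $X_1,\dots,X_N$ eventually surrounds $\bar{x}$ uniformly in the direction $\omega$ --- but the paper obtains this via a compactness/covering argument (their Lemma~\ref{l:surround}), whereas you invoke the uniform Glivenko--Cantelli theorem for the VC class of half-spaces. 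Your approach is somewhat more elementary in that it bypasses the derivative bookkeeping, and the VC argument is a clean off-the-shelf substitute for the paper's hands-on covering; the paper's route, on the other hand, yields quantitative bounds (\ref{lb1})--(\ref{lb2}) that are reused later in the analysis of $\beta_N$.
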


Although the existence result Lemma~\ref{existence} only requires that $F_0$ surround the minority class mean $\bar{x}$, for our main results we will assume that $F_0$ surrounds every minority class observation $x_1, \dots, x_n$, as stated in the following condition:

\begin{condition}[Surrounding all minority observations]\label{assump_surr_all_min}
With some parameters $\epsilon^o,\delta^o>0$,
$F_0$ surrounds every minority class observation $x_1,\dots,x_n$.
\end{condition}

\section{Convergence Under Infinite Imbalance}
\label{s:main}

This section states our main result, describing the behavior of the optimal $(\alpha_N,\beta_N)$ in the limit of infinite imbalance. Our analysis considers weight functions $w(u)$ based primarily on their 
properties for large negative values of $u$. 
We are particularly interested in distinguishing weight functions that grow
exponentially or subexponentially as $u\to-\infty$, and to make this
distinction precise we introduce additional conditions.
After stating these conditions, we will show that they
are satisfied by simple and easily interpreted examples,
including all the examples in Section~\ref{s:examples}.
We use the relation $\sim$ to indicate that the ratio
of two functions converges to 1.

\begin{definition}\label{d:wti}
For a weight function $w$ we define the following conditions.
\begin{itemize}
    \item Left-tail condition: $w(u) \sim e^{-\lambda u} h(u)$, as $u \to -\infty$, for some $\lambda \in [0,1)$ and some differentiable $h: \R \mapsto \R_+$ that satisfies
    \begin{itemize}
        \item[(i)] $h(u) > 0$, for all $u\in\R$;
        \item[(ii)] $-(1-\lambda) h(u) < h'(u) \le \lambda h(u)$, for all $u\in\R$;
        \item[(iii)] $\liminf_{u \to -\infty} h'(u)/h(u) \ge 0$;
        \item[(iv)] there exists some $C > 0$ and $\xi>0$ such that for any $\epsilon > 0$, there exists some $u_0 < 0$ such that for any $u,u+s \le u_0$, $s \in \R$, we have 
        \begin{equation}\label{h-rate}
            \left |\frac{h(u+s)}{h(u)} - 1 \right | \le \epsilon \max \{C, e^{\xi|s|}\}.
        \end{equation}
    \end{itemize}
    \item Right-tail condition: If $w$ is unbounded then $\limsup_{u \to \infty} \frac{V(u)}{e^u w(u)} < \infty$. 
\end{itemize}
\end{definition}

The first condition on $h$ is natural since we require $w(u) > 0$; the second condition 
is needed for Condition~\ref{assump_conv}. The third condition implies that
$h$ grows subexponentially in the following sense:
for any $\epsilon_h > 0$, 
there exists some $u_h < 0$ such that for all $u \le u_h$, $h(u) \le C e^{-\epsilon_h u}$.  An immediate consequence of condition (iv) is that for any $s \in \R$, 
\[
\left |\frac{h(u+s)}{h(u)} - 1 \right | \to 0
\]
as $u \to -\infty$. Condition (iv) controls the speed of this convergence
as $s$ varies.

Our focus in Definition~\ref{d:wti} is on the left-tail behavior of the weight function $w$ 
because under infinite imbalance we expect $\alpha_N\to -\infty$; this limit is suggested by 
letting $\pi_1\to 0$ in (\ref{etadef}). The additional condition on the right tail 
for unbounded $w$ helps ensure that 
$\alpha_N$ indeed diverges and $\beta_N$ remains bounded.
The right-tail condition is satisfied if $e^uw(u)$ is log-convex on some interval $[u_0,\infty)$, 
and this condition is satisfied if $w(u)=Ce^{-\lambda u}$, $\lambda\in(0,1)$, or $w(u) = Cu^{-k}$, $k>0$, for large $u$; see Section~\ref{s:vb} of the appendix for a more general result.

The left-tail condition in Definition~\ref{d:wti} is satisfied by the following
examples.

\begin{lemma}\label{l:w_examples}
    The following weight functions satisfy the left-tail condition in Definition~\ref{d:wti}:
    \begin{itemize}
        \item $w(u) \sim C$ as $u \to -\infty$, for some $C > 0$;
        \item $w(u) \sim C |u|^k$ as $u \to -\infty$, for some $C > 0$, $k \ge 0$;
        \item $w(u) \sim C e^{-\lambda u} |u|^k$ as $u \to -\infty$, for some $C > 0$,  $\lambda \in (0,1)$, and $k \ge 0$.
    \end{itemize}
\end{lemma}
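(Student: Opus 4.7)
My plan is to exhibit, for each weight function in the lemma, an explicit pair $(\lambda, h)$ with $w(u)\sim e^{-\lambda u}h(u)$ as $u\to -\infty$, and then to verify the four conditions on $h$ from Definition~\ref{d:wti}. The exponent $\lambda$ is forced by the exponential order of $w$ at $-\infty$: take $\lambda=0$ in the first two cases and the prescribed $\lambda\in(0,1)$ in the third. For Case~1 I take $h\equiv C$, making (i)--(iv) trivial. For Cases~2 and~3 the naive choice $h(u)=C|u|^k$ violates the monotonicity required by (ii) on $u>0$, so I instead set $h(u) = C\,[T\,\psi(u/T)]^k$, where $T>0$ is a large scaling constant and $\psi\colon \R\to(0,\infty)$ is a smooth, strictly decreasing function agreeing with $-u$ for $u\le -1$ and with $1$ for $u\ge 0$ (obtained by smoothing $\max(-u,1)$).

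Conditions (i) and (iii) are then immediate: $h>0$ since $\psi>0$, and $h'(u)/h(u) = k\,\psi'(u/T)/[T\,\psi(u/T)] \to 0$ as $u\to -\infty$ because $\psi(u/T)\to\infty$. The same identity, together with $|\psi'|\le 1$ and $\psi\ge 1$, yields $|h'/h|\le k/T$; choosing $T>k/(1-\lambda)$ makes $h'>-(1-\lambda)h$ strict, and $h'\le\lambda h$ holds automatically because $h'\le 0\le\lambda h$, giving (ii). By construction $h(u)\sim C|u|^k$, so $w(u)\sim e^{-\lambda u}h(u)$ as $u\to -\infty$.

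The main obstacle is condition (iv). Once $-u_0>T$, the function $h$ equals $C(-u)^k$ on $(-\infty,u_0]$, and for $u,u+s\le u_0$,
\[
\frac{h(u+s)}{h(u)} = \left(\frac{|u+s|}{|u|}\right)^k,
\]
with $|u|,|u+s|\ge M:=-u_0$. I split by the sign of $s$. For $s\le 0$, the ratio is at most $(1+|s|/M)^k\le e^{k|s|/M}$, so $|h(u+s)/h(u)-1|\le e^{k|s|/M}-1$. For $s\ge 0$, the ratio lies in $(0,1]$, and either Bernoulli's inequality (when $k\ge 1$) or the elementary bound $(1-x)^k\ge 1-x$ on $[0,1]$ (when $0<k<1$) yields $|h(u+s)/h(u)-1|\le\max(1,k)\,|s|/M$.

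To convert these pointwise estimates into the uniform bound~(iv), I fix $\xi=1$ once and for all, and take the constant multiplying the $\max$ in (iv) to be $1$. For $|s|$ beyond a critical scale of order $\log(1/\epsilon)$, the factor $\epsilon e^{\xi|s|}$ exceeds both pointwise bounds, provided $M>k$ so that the exponential rate $\xi$ on the right beats $k/M$ on the left. For $|s|$ below that scale, both pointwise bounds are of order $\log(1/\epsilon)/M$ and can be made smaller than $\epsilon$ by taking $M \gg \log(1/\epsilon)/\epsilon$. Choosing $u_0$ so that $M=-u_0$ exceeds both thresholds completes the verification of~(iv). Case~3 reuses this argument verbatim, since extracting $e^{-\lambda u}$ from $w$ leaves the same polynomial template for $h$, and condition (iv) depends only on $h$.
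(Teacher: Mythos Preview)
Your approach is correct and, in one respect, more careful than the paper's own argument. The paper simply takes $h(u)=C|u|^k$ and verifies only conditions (iii) and (iv), glossing over the fact that this choice fails (i), (ii), and differentiability at $u=0$ (and, when $\lambda=0$, fails the upper bound $h'\le\lambda h$ for all $u>0$). Your smoothed construction repairs this. For condition (iv) the paper takes a slightly different route: it shows directly that $\bigl|\,|1+s/u|^k-1\,\bigr|\le\epsilon\max\{1,|s|^k\}$ for $u$ sufficiently negative, by splitting on $|s|\le 1$ versus $|s|>1$, and then absorbs $|s|^k$ into $\max\{C,e^{\xi|s|}\}$ for any $\xi>0$. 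You instead bound the ratio by $e^{k|s|/M}-1$ (for $s\le 0$) and $\max(1,k)|s|/M$ (for $s\ge 0$) and compare each to $\epsilon e^{|s|}$. Both arguments work; the paper's has the minor advantage of showing that $\xi$ may be taken arbitrarily small, which is used later in Condition~\ref{assump_tail}, while yours fixes $\xi=1$.

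One small slip in your construction: the requirements that $\psi$ be smooth, strictly decreasing, equal to $-u$ for $u\le -1$, equal to $1$ for $u\ge 0$, and satisfy $\psi\ge 1$ are mutually inconsistent. Smoothness at $u=-1$ forces $\psi'(-1)=-1$, so $\psi$ dips below $1$ immediately to the right of $-1$; and a function constant on $[0,\infty)$ cannot be strictly decreasing. The fix is cosmetic: replace ``$u\le -1$'' by ``$u\le -2$'' (so $\psi(-2)=2>1$ leaves room to descend), drop ``strictly,'' and interpolate monotonically on $[-2,0]$ with $|\psi'|\le 1$. None of your subsequent estimates are affected.
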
 
The first two examples in Lemma~\ref{l:w_examples} correspond to taking 
$\lambda = 0$ with $h(u) = C$ and $h(u) = |u|^k$, respectively, in Definition~\ref{d:wti}; 
the last example corresponds to taking $\lambda \in (0,1)$ and $h(u) = |u|^k$. 
For logistic regression (\ref{lr}), the weight function is bounded, so the
first case in Lemma~\ref{l:w_examples} applies, and the right-tail condition
in Definition~\ref{d:wti} is not needed.

In our main result, Theorem~\ref{bdd-thm}, the exponent $\lambda$
in Definition~\ref{d:wti}
determines the behavior of linear classifiers under infinite imbalance.
It will be useful to distinguish the following two cases:
\begin{itemize}
    \item {\bf asymptotically subexponential case:} weight functions with $\lambda = 0$. 
    \item {\bf asymptotically exponential case:} weight functions with $\lambda \in (0,1)$.
\end{itemize}
The first two examples in Lemma~\ref{l:w_examples} are in the 
subexponential category, and the last example is in the exponential category. 
For these categories to be meaningful, we need to ensure that $\lambda$
is well-defined, which we do through the following result.

\begin{proposition}\label{prop:id}
Suppose $w$ satisfies Definition~\ref{d:wti} with $w(u) \sim e^{-\lambda u} h(u)$ as $u \to -\infty$. 
Suppose $w$ also satisfies Definition~\ref{d:wti} with
$w(u) \sim e^{-\tilde{\lambda} u} \tilde{h}(u)$ as $u \to -\infty$.
Then $\lambda = \tilde{\lambda}$ and $h(u) \sim \tilde{h}(u)$ as $u \to -\infty$.
\end{proposition}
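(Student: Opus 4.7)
The plan is to extract from condition~(iv) of Definition~\ref{d:wti} the following \emph{additive slow-variation} property at $-\infty$: for each fixed $s \in \R$,
\[
\frac{h(u+s)}{h(u)} \longrightarrow 1 \quad \text{as } u \to -\infty.
\]
This is immediate from~(iv): once $s$ is fixed, the upper bound $\epsilon \max\{C, e^{\xi|s|}\}$ is a constant multiple of $\epsilon$, which can be driven to zero by shrinking $\epsilon$ (and thereby pushing $u_0$ further to the left). The identical argument gives the same property for $\tilde{h}$.

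Combining $w(u) \sim e^{-\lambda u} h(u)$ with $w(u) \sim e^{-\tilde{\lambda} u}\tilde{h}(u)$ yields
\[
\frac{h(u)}{\tilde{h}(u)} \sim e^{(\lambda - \tilde{\lambda})\, u}, \qquad u \to -\infty.
\]
Replacing $u$ by $u+s$ in this asymptotic and dividing, I obtain, for each fixed $s \in \R$,
\[
\frac{h(u+s)/h(u)}{\tilde{h}(u+s)/\tilde{h}(u)} \;\longrightarrow\; e^{(\lambda - \tilde{\lambda})\, s} \quad \text{as } u \to -\infty.
\]
By the slow-variation step above, both the numerator and denominator on the left-hand side tend to $1$, so $e^{(\lambda - \tilde{\lambda})\, s} = 1$ for every $s \in \R$. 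This forces $\lambda = \tilde{\lambda}$. Feeding this equality back into the preceding display then gives $h(u)/\tilde{h}(u) \to 1$, i.e., $h \sim \tilde{h}$ at $-\infty$, completing both conclusions.

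Conditions (i)--(iii) of Definition~\ref{d:wti} play no explicit role in this argument; they control the size and monotonicity of $h$ but are irrelevant to identifiability. The one step that warrants real care is the passage from~(iv) to additive slow variation with $s$ fixed: condition~(iv) couples the quantifiers for $\epsilon$, $u$, and $s$, and a careless reading of the quantifier order could obstruct the $\epsilon \to 0$ reduction on which the entire argument hinges. Once that consequence is correctly isolated, the remainder is essentially a one-line algebraic identity, reflecting the rigidity of requiring the factor $e^{-\lambda u}$ to carry all of the exponential growth while $h$ absorbs only additively slowly varying corrections.
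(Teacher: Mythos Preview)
Your argument is correct and follows essentially the same approach as the paper: both extract from condition~(iv) the additive slow-variation property $h(u+s)/h(u)\to 1$ for fixed $s$, and then use the relation $h(u)/\tilde{h}(u)\sim e^{(\lambda-\tilde{\lambda})u}$ to force $e^{(\lambda-\tilde{\lambda})s}=1$. The paper phrases the final step as a contradiction (computing $\tilde{h}(u+s)/\tilde{h}(u)\to e^{\Delta s}$ and noting this violates~(iv) for $\tilde{h}$ unless $\Delta=0$), whereas you invoke slow variation for both $h$ and $\tilde{h}$ symmetrically, but the underlying idea is identical.
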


For our main result, we require the following tail condition on 
the majority class distribution $F_0$:
\begin{condition}[Tail condition]\label{assump_tail} 
For some $r>\max\{1,1-\lambda+\xi\}/\gamma$,
\begin{equation}\label{tail}
    \int e^{r\|x\|} \, d F_0(x) < \infty,
\end{equation}
where
$\gamma = (1-\lambda) \epsilon\delta$, with $\epsilon,\delta>0$
the surrounding parameters in Condition \ref{assump_surr_mean},
and where $\lambda$ and $\xi$ are the parameters from Definition~\ref{d:wti}.
\end{condition}

This condition is satisfied by distributions $F_0$ with bounded support, with Gaussian tails,
or with tails that decay exponentially at a rate faster than $r$.
A larger $\xi$ imposes a weaker condition in (\ref{h-rate}) but a stronger
condition in Condition~\ref{assump_tail}.
For the second and third cases in Lemma~\ref{l:w_examples} we can take
$\xi>0$ arbitrarily small, and for bounded $w$ we can take $\xi=0$.
We can now show that $\beta_N$ converges a.s. under
infinite imbalance and we can identify its limit.

\begin{theorem}\label{bdd-thm}
Suppose Conditions \ref{assump_basic}--\ref{assump_tail} hold
and $w$ satisfies Definition~\ref{d:wti} with $w(u) \sim e^{-\lambda u}h(u)$ as $u \to -\infty$.
Then the minimizer $(\alpha_N, \bt_N)$ of $\bar{C}_N$ in (\ref{cost}) satisfies $\alpha_N \to -\infty$ 
and $\bt_N \to \bt_*$, a.s., where $\bt_*$ is the unique solution to 
\begin{equation}\label{bstar:exp}
    \begin{aligned}
    \frac{\int x e^{(1-\lambda) \bx} dF_0(x)}{\int e^{(1-\lambda) \bx} dF_0(x)}
=    \frac{\sum_{i=1}^n x_i e^{-\lambda \bxi}}{\sum_{i=1}^n e^{-\lambda \bxi}}.
    \end{aligned}
\end{equation}
In particular, when $\lambda = 0$, $\beta_*$ is the unique solution to 
\begin{equation}\label{bstar:sub}
    \begin{aligned}
    \frac{\int x e^{\bx} dF_0(x)}{\int e^{\bx} dF_0(x)} = \bar{x},
    \end{aligned}
\end{equation}
where $\bar{x} = n^{-1}\sum_{i=1}^n x_i$ is the minority class mean.
\end{theorem}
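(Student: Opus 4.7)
The plan is to analyze the first-order optimality conditions for $(\an, \bn)$ and pass to the infinite-$N$ limit, exploiting the asymptotic factorization $w(u) \sim e^{-\lambda u} h(u)$ from Definition~\ref{d:wti}. Using $U'(u) = w(u)$ and $V'(u) = e^u w(u)$ from \eqref{uvint}, the stationarity equations $\nabla_{\alpha,\beta} \bar{C}_N = 0$ read
\begin{align*}
\sum_{i=1}^n w(\an + \bn^\top x_i) &= \sum_{j=1}^N e^{\an + \bn^\top X_j} w(\an + \bn^\top X_j),\\
\sum_{i=1}^n x_i \, w(\an + \bn^\top x_i) &= \sum_{j=1}^N X_j \, e^{\an + \bn^\top X_j} w(\an + \bn^\top X_j).
\end{align*}
Dividing the vector (second) equation componentwise by the scalar (first) equation eliminates the overall scale; the common factor $e^{-\lambda \an}$ on the minority side and $e^{(1-\lambda)\an}$ on the majority side cancels, producing what I intend to show is the limiting identity \eqref{bstar:exp}.

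Next I would establish a priori control on the minimizer: $\{\bn\}$ is a.s. eventually bounded and $\an \to -\infty$ a.s. Lemma~\ref{existence} already supplies existence and uniqueness. For boundedness, the idea is that if $\|\bn\|$ escaped to infinity along some subsequence in a direction $\omega$, the surrounding Condition~\ref{assump_surr_all_min} would guarantee a positive fraction of $X_j$ satisfying $\omega^\top(X_j - \bar x) > \epsilon^o$, and the growth of $V$ in its right tail would then force the majority term in $\bar C_N$ to exceed any constant multiple of $N$, contradicting optimality relative to any fixed feasible pair. Once $\bn$ is bounded, $\an \to -\infty$ follows because otherwise the majority contribution is $\Theta(N)$ while the minority contribution is $O(1)$.

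Given these bounds, I would substitute $w(u) = e^{-\lambda u} h(u)(1 + o(1))$ into the ratio of the FOCs. On the minority side, factoring $e^{-\lambda \an}$ out and using condition (iv) of Definition~\ref{d:wti} to replace each $h(\an + \bn^\top x_i)$ by $h(\an)$ up to vanishing error reduces the ratio to $\sum_i x_i e^{-\lambda \bxi}/\sum_i e^{-\lambda \bxi}$ along any convergent subsequence $\bn \to \bs$. On the majority side, after dividing by $N$ and cancelling $e^{(1-\lambda)\an}$, a uniform law of large numbers over a compact $\beta$-neighborhood of $\bs$ yields convergence to $\int x\, e^{(1-\lambda)\bx}\, dF_0(x)/\int e^{(1-\lambda)\bx}\, dF_0(x)$. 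Uniqueness of $\bs$ satisfying \eqref{bstar:exp} follows by noting that the equation is the gradient condition of a strictly convex function of $\beta$ (the limiting log-partition function plus its minority counterpart), so has at most one root.

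The main technical obstacle is justifying the majority-side limit, since $\an$ and $\bn$ are random and $N$-dependent, ruling out a direct SLLN. I plan to build a uniform law of large numbers by constructing, via \eqref{h-rate}, an $F_0$-integrable envelope for the integrand across a compact range of $\beta$: condition (iv) lets me approximate $h(\an + \beta^\top X_j)$ by $h(\an)$ at a cost bounded by $\epsilon \max\{C, e^{\xi |\beta^\top X_j|}\}$, after which $h(\an)$ cancels between numerator and denominator. The remaining integrand grows at most like $e^{(1-\lambda+\xi)\|\beta\|\,\|X_j\|}$, which is uniformly $F_0$-integrable precisely under the threshold $r > (1-\lambda+\xi)/\gamma$ imposed by Condition~\ref{assump_tail}; the factor $\gamma$ is inherited from the surrounding-based a priori bound on $\|\bn\|$.
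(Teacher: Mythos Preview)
Your overall architecture matches the paper's: derive the first-order conditions, divide to obtain a self-normalized ratio, use the left-tail approximation $w(u)\sim e^{-\lambda u}h(u)$ on both sides, and identify subsequential limits via a uniqueness argument for \eqref{bstar:exp}. The uniqueness argument you sketch (gradient of a strictly convex cumulant function) is exactly what the paper does.

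There is, however, a genuine gap on the majority side. To invoke the left-tail approximation and condition~(iv) of Definition~\ref{d:wti} at $u=\an+\bn^\top X_j$, you need $\an+\bn^\top X_j\le u_0$ for \emph{every} $j=1,\dots,N$ simultaneously. But $\max_{j\le N}\|X_j\|$ grows without bound as $N\to\infty$ (roughly like $\log N$ under Condition~\ref{assump_tail}), so the qualitative statement $\an\to-\infty$ is not enough: you need a rate. The paper handles this in two steps you have not provided. First, it shows $\limsup_N(\an+\log N)<\kappa$ a.s.\ for some constant $\kappa$, by exploiting the first-order condition in $\alpha$ at the index achieving $\min_i(\an+\bn^\top x_i)$. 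Second, it combines this rate with the tail bound \eqref{tail} and a Borel--Cantelli argument to conclude that $\max_{j\le N}(\an+\bn^\top X_j)\le u_0$ for all large $N$ a.s. Without this, your substitution $w(u)=e^{-\lambda u}h(u)(1+o(1))$ and your use of \eqref{h-rate} on the majority terms are unjustified, and the envelope you construct is not yet available.

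A secondary issue is your a priori bound on $\|\bn\|$. The sketch you give (if $\|\bn\|\to\infty$ then the $V$-sum blows up) does not obviously survive the possibility that $\an\to-\infty$ at a compensating rate; and in any case you need the \emph{quantitative} bound $\limsup_N\|\bn\|\le 1/\gamma$ with $\gamma=(1-\lambda)\epsilon\delta$, since that is exactly what makes the exponent $(1-\lambda+\xi)\|\bn\|$ fit under the threshold $r$ in Condition~\ref{assump_tail}. The paper obtains this bound by a different route: it compares $\tilde C_N(\tilde\alpha_N,0)$ with $\tilde C_N(\tilde\alpha_N,\bn)$ after centering at $\bar x$, bounds $V(\tilde\alpha_N)$ from above via integration by parts and property~(iii) of $h$, and bounds the $V$-sum from below using the surrounding condition. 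You should either reproduce that argument or supply a careful alternative that delivers the same explicit constant.
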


The second part of
Theorem~\ref{bdd-thm} extends Owen's \cite{owen} result by showing that a broad class of linear discriminant functions are asymptotically equivalent to logistic regression under infinite imbalance:
all subexponential ($\lambda=0$) weight functions share the same limiting coefficient vector $\beta_*$  in (\ref{bstar:sub}).
In contrast, asymptotically exponential weight functions allow a wider range of limits
(\ref{bstar:exp}), dependent on the exponent $\lambda$. 
The limiting $\beta_N$ in (\ref{bstar:sub})
depends on the minority class observations only through their mean
$\bar{x}$.
With $\lambda\in(0,1)$, (\ref{bstar:exp})
shows that the limiting $\beta_N$ will depend on the full empirical
distribution of $x_1,\dots,x_n$.

We can interpret the left side of (\ref{bstar:sub}) as the mean of the
distribution obtained after applying an exponential tilt or reweighting to
$F_0$. Equation (\ref{bstar:sub}) then says that $\beta_*$ tilts $F_0$
to the mean of the minority class. Equation (\ref{bstar:exp})
tilts both $F_0$ and the empirical distribution of $x_1,\dots,x_n$
to a common mean. We build on this interpretation in the next section.

Even in the case of logistic regression, Theorem~\ref{bdd-thm} extends
Owen's \cite{owen} result because our $(\alpha_N,\beta_N)$ minimize
the empirical loss function (\ref{cost}) rather than an approximate loss function
that replaces the empirical distribution over $X_1,\dots,X_N$ with
the population distribution $F_0$.

Although our focus is on the behavior of $\beta_N$, as a byproduct of
our analysis we show that $\alpha_N$ diverges at least as fast as
$-\log N$. See Corollary~\ref{c:alog} in the appendix.

To see that not all choices of $w$ lead to similar limits, we briefly consider a case suggested in
Section 2.4 of Eguchi and Copas \cite{eguchi}, in which the weight function
is replaced by a measure. 
Taking $w$ to be a delta function
with unit mass at a point $u_0$, they arrive at the objective 
\begin{equation}
C(\alpha,\beta) = \pi_1\PR_1(\alpha+\beta^{\top}X\le u_0) + \pi_0e^{u_0}\PR_0(\alpha+\beta^{\top}X>u_0).
\label{cdelta}
\end{equation}
Through appropriate choice of $u_0$, $C(\alpha,\beta)$ can be interpreted as 
balancing two types of misclassification costs. 
However, we show in Appendix~\ref{s:delta} that the resulting
discriminant function degenerates under imbalance, in the sense that for all sufficiently
large $N$, $\beta_N=0$ a.s. and $\alpha_N$ can be any value less than or equal to $u_0$.

Li, Belloti, and Adams \cite{yazheLi} consider regularized logistic regression with an
$L_1$ or $L_2$ penalty on $\beta$. They show that the optimal $\beta_N$ converges
to zero under infinite imbalance. In this sense, the regularized discriminant
function degenerates under imbalance.

\section{Robustness Interpretation of \texorpdfstring{$\bt_*$}{Lg}}
\label{s:interp}

We now turn to the interpretation of the limits $\beta_*$ defined by (\ref{bstar:sub}) and (\ref{bstar:exp}). 
We will show that these limits reflect robustness properties,
in the sense that the coefficients are optimized against certain worst-case errors or combinations of errors. 
These robustness properties reflect implicit choices
of conservatism towards different types of errors.

\subsection{Asymptotically Subexponential Weight Functions}
\label{s:robust1}

The robustness interpretation is easiest to formulate in the limit (\ref{bstar:sub}) 
for the subexponential case, which includes logistic regression. 
For $\beta\in\R^d$, define the cumulant generating function of $F_0$ by setting
\begin{equation}
\psi(\bt) = \log \int e^{\bt^\top x} d F_0(x),
\label{cgf}
\end{equation}
and let $B_{\psi} = \{\beta: \psi(\beta)<\infty\}$. 
Define the exponential family of distributions $
F_{\beta}$, $\beta\in B_{\psi}$, on $\R^d$
by setting
$$
dF_{\beta}(x) = e^{\beta^{\top}x - \psi(\beta)}\, dF_0(x).
$$
Part of the content of Theorem~\ref{bdd-thm} is that $\beta_*\in B_{\psi}$;
the normalizing factor $e^{\psi(\beta_*)}$ is the denominator in (\ref{bstar:sub}).
Write $F_*$ for the special case $F_{\beta_*}$. Equation (\ref{bstar:sub}) tells us that
\begin{equation}
\int x \, dF_*(x) = \int x e^{\beta_*^{\top}x - \psi(\beta_*)}\, dF_0(x) = \bar{x},
\label{fstarm}
\end{equation}
so the mean of $F_*$ is $\bar{x}$.
We can also write (\ref{fstarm}) as $\nabla\psi(\beta_*) = \bar{x}$.
The robustness interpretation comes from identifying $F_*$ as a worst-case alternative and identifying $\beta_*$
as the optimal classifier for this worst-case alternative. 

For distributions $G$ and $F$ on $\R^d$, define the relative entropy (or Kullback-Leibler
divergence),
\begin{equation}
D(G \|F) = \int \log\frac{dG}{dF}\, dG,
\label{kldef}
\end{equation}
with $D(G \|F)=\infty$ if the support of $G$ is not contained within the support of $F$.
Relative entropy is always non-negative, and it is zero if and only if $G$ and $F$ coincide.
It is not symmetric, but $D(G \|F)$ can be interpreted as a measure of the ``distance'' of $G$
from $F$. If $D(G \|F)$ is small then $G$ is close to $F$, making the problem of discriminating
between the two distributions difficult.
Kullback and Leibler \cite{kl} interpret $D(G \|F)$ as the mean information for discriminating
between $G$ and $F$ per observation from $G$.

Let $\mathcal{M}_{\bar{x}}$ be the set of probability distributions on $\R^d$ with mean $\bar{x}$.
We know from (\ref{fstarm}) that $F_*\in \mathcal{M}_{\bar{x}}$. 
In fact, of all elements of $\mathcal{M}_{\bar{x}}$, the one ``closest" to $F_0$ 
with respect to relative entropy is $F_*$, as the following result shows. 
This result is a special case of Corollary 3.1 of Csiszar \cite{csiszar}.
\begin{lemma}\label{l:proj}
The problem 
\begin{equation}\label{min_p1}
    \min_G D(G \|F_0) \text{ subject to } \int x\ dG(x) = \bar{x},
\end{equation}
where $G$ is a probability distribution on $\R^d$, is solved by $G = F_*$. 
\end{lemma}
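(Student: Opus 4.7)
The plan is to prove this by the standard ``Pythagorean'' decomposition of relative entropy that characterizes the $I$-projection of $F_0$ onto the mean-constrained set $\mathcal{M}_{\bar{x}}$. The key observation is that, by the very definition of $F_*$, the Radon--Nikodym derivative splits as $\log(dG/dF_0) = \log(dG/dF_*) + \log(dF_*/dF_0) = \log(dG/dF_*) + \beta_*^{\top} x - \psi(\beta_*)$ whenever $G \ll F_*$ (equivalently $G \ll F_0$, since these measures have the same null sets). Integrating against $G$ gives the identity
\[
    D(G \| F_0) = D(G \| F_*) + \beta_*^{\top} \int x \, dG(x) - \psi(\beta_*),
\]
valid for any probability $G$ with finite $D(G \| F_0)$. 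The case $D(G \| F_0) = +\infty$ is trivial, so I may restrict attention to $G \ll F_0$.

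Next I would impose the linear constraint $\int x \, dG(x) = \bar{x}$ from (\ref{min_p1}). It collapses the last two terms to the constant $\beta_*^{\top}\bar{x} - \psi(\beta_*)$, which by (\ref{fstarm}) is precisely $D(F_* \| F_0)$ (obtained by plugging $G = F_*$ into the same identity; note that $F_* \in \mathcal{M}_{\bar{x}}$ by (\ref{fstarm})). Hence, for every $G \in \mathcal{M}_{\bar{x}}$,
\[
    D(G \| F_0) = D(G \| F_*) + D(F_* \| F_0).
\]
Because relative entropy is non-negative and equals zero only when its two arguments coincide, this immediately yields $D(G \| F_0) \ge D(F_* \| F_0)$, with equality if and only if $G = F_*$, proving both that $F_*$ solves (\ref{min_p1}) and that it is the unique solution.

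I do not anticipate any real obstacle: once the decomposition of $\log(dG/dF_0)$ is written down, the remainder is essentially bookkeeping. The only points warranting explicit verification are that $F_* \in \mathcal{M}_{\bar{x}}$, which is exactly the content of (\ref{fstarm}) and is already guaranteed by Theorem~\ref{bdd-thm}, and that $\beta_* \in B_{\psi}$ so that $\psi(\beta_*)$ is finite (also provided by Theorem~\ref{bdd-thm}, since $e^{\psi(\beta_*)}$ appears as the denominator in (\ref{bstar:sub})). No additional regularity on $F_0$ beyond what is already assumed is required.
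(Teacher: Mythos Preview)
Your argument is correct: the Pythagorean identity $D(G\|F_0)=D(G\|F_*)+D(F_*\|F_0)$ for $G\in\mathcal{M}_{\bar{x}}$ follows exactly as you describe, and nonnegativity of $D(G\|F_*)$ gives both optimality and uniqueness of $F_*$. The only technical step worth making explicit is that when $D(G\|F_0)<\infty$ and $G\in\mathcal{M}_{\bar{x}}$, the term $\int \beta_*^{\top}x\,dG=\beta_*^{\top}\bar{x}$ is finite, so the additive decomposition of the integrals is legitimate; you allude to this but it deserves a sentence.

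The paper does not give its own proof: it simply observes that the lemma is a special case of Corollary~3.1 in Csisz\'ar \cite{csiszar}. Your proof is precisely the standard argument behind that corollary, specialized to an exponential-family tilt and a linear (mean) constraint. So you are not taking a different route so much as supplying, self-contained, the classical $I$-projection computation that the paper outsources to a reference. The benefit of your version is that it is short and requires nothing beyond the definition of $F_*$ and (\ref{fstarm}); the benefit of the citation is that Csisz\'ar's result also covers existence of the projection under weaker hypotheses, though here existence of $\beta_*$ (hence of $F_*$) is already provided by Theorem~\ref{bdd-thm}.
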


Lemma~\ref{l:proj} leads to a robustness property of $\beta_*$.
Let $G$ be any distribution on $\R^d$ with the same support as $F_0$. 
As discussed at the end of Section~\ref{s:ec}, the
Neyman-Pearson lemma implies that the ROC curve defined by the log likelihood 
ratio $\eta(x) = \log dG(x)/dF_0(x)$ 
lies above the ROC curve for any other discriminant function. 

In the case of the distribution $F_*$ defined by $\beta_*$, the log likelihood ratio is given by
\begin{equation}
\eta_*(x) = \log\frac{dF_*}{dF_0}(x) = \beta_*^{\top}x - \psi(\beta_*).
\label{estar}
\end{equation}
The linear discriminant function $\beta_*^{\top}x$ coincides with the optimal classifier
$\eta_*$. The constant $\psi(\beta_*)$ shifts the threshold $t$, but the two functions
trace the same ROC curve. That is, the limit in (\ref{bstar:sub}) picks the optimal
classifier for discriminating between $F_0$ and $F_*$.

Combining this lemma with (\ref{estar}), we arrive at the following conclusion:
The limiting coefficient $\beta_*$ in (\ref{bstar:sub}) provides the optimal classifier
for the worst-case alternative to the majority class distribution $F_0$, among all distributions
with the same mean $\bar{x}$ as the observations from the minority class.
The distribution $F_*$ presents the worst case because it is hardest to distinguish from $F_0$, in the sense of Lemma~\ref{l:proj}.
This robustness property does not necessarily translate to better performance;
optimizing against a worst case can be overly conservative if the true distribution
of the minority class distribution is very different from $F_*$.

The definition in (\ref{kldef}) suggests an interpretation of the conservatism 
implicit in the focus on $\beta_*$ and $F_*$.
In making $D(G \|F_0)$ smaller, we are, roughly speaking, making the optimal discriminant
function (the log likelihood ratio $\log (dG/dF_0)$) smaller at observations that have 
higher probability under $G$. In other words, in focusing on $F_*$ we are focusing on 
a distribution whose optimal classifier (relative to $F_0$) will have low sensitivity. 
This perspective suggests that the limiting coefficient
$\beta_*$ in (\ref{bstar:sub}) --- the optimal classifier for $F_*$ --- 
is implicitly conservative in classifying to the minority class and
should perform better at low-sensitivity (high-specificity) thresholds
than at high-sensitivity thresholds. 
This interpretation will be supported by our discussion of exponential weight functions
in Section~\ref{s:robust2} and the numerical results of Sections~\ref{s:numerical}
and~\ref{s:credit}, because the subexponential case considers only
$\lambda=0$ whereas the exponential case considers all $\lambda\in(0,1)$.

\subsection{Infinite Upsampling}
\label{s:up}

The problem of imbalance is sometimes dealt with in practice through upsampling --- creating
artificial data from the minority class. We now show that for the class of linear discriminant
functions defined by minimizing (\ref{cpi}) with asymptotically subexponential $w$
(including logistic regression), the estimate of $\beta$ obtained
in the infinite imbalance limit coincides with the estimate obtained from a specific choice 
of upsampling distribution, namely $F_*$. 

Suppose, then, that we have infinitely upsampled the minority class so that 
a fraction $\pi_1$ of our data is drawn from $F_*$.  Suppose we also have infinitely
many observations, a fraction $\pi_0 = 1-\pi_1$ of the total, from $F_0$.
The loss function then takes the form (\ref{cpi}). 
Differentiating (\ref{cpi}) with respect to $\beta$
and recalling that $U'(s)=w(s)$, $V'(s) = e^sw(s)$, we get the first-order condition
$$
-\pi_1\mathbb{E}_1[w(\alpha+\beta^{\top}X)X] + \pi_0 \mathbb{E}_0 [e^{\alpha+\beta^{\top}X}w(\alpha+\beta^{\top}X)X] = 0,
$$
where $\E_1$ is now expectation with respect to $F_*$. Using the likelihood ratio
$dF_*/dF_0$, as in (\ref{estar}), we can write the first term as an expectation with
respect to $F_0$ to get
$$
-\pi_1\E_0[e^{\beta_*^{\top}X-\psi(\beta_*)}w(\alpha+\beta^{\top}X)X] + \pi_0 \mathbb{E}_0 [e^{\alpha+\beta^{\top}X}w(\alpha+\beta^{\top}X)X] = 0.
$$
This equation is solved by taking
$$
\beta=\beta_*, \quad e^{\alpha} = (\pi_1/\pi_0)e^{-\psi(\beta_*)}.
$$
The limiting coefficient vector $\beta_*$ in (\ref{bstar:sub}) is thus
precisely what one would obtain through infinite upsampling of the minority class 
using $F_*$ when we have infinitely many observations from the majority class.
The relative degree of upsampling, as reflected in $\pi_1/\pi_0$, affects $\alpha$ but not $\beta$.

The choice of $F_*$ as an upsampling distribution is not arbitrary. We know from Section~\ref{s:robust1} 
that $F_*$ is the distribution ``closest'' to $F_0$ among all distributions with the mean $\bar{x}$ 
in the data from the minority class. We have argued that the optimal classifier $\beta_*$ corresponding to 
$F_*$ is implicitly conservative in classifying into the minority class. In this sense, 
upsampling according to $F_*$ is conservative if one is 
particularly concerned about classifying with high specificity.

\subsection{The Gaussian Case}

The Gaussian case provides some convenient simplifications and helps illustrate the transition from 
the subexponential limit in (\ref{bstar:sub}) to the exponential limit in 
(\ref{bstar:exp}). We begin by reviewing some properties of the Gaussian setting.

Suppose that $F_0$ is the multivariate normal distribution 
$N(\mu_0,\Sigma_0)$ on $\mathbb{R}^d$, with mean $\mu_0$ and full-rank covariance matrix
$\Sigma_0$. The cumulant generating function (\ref{cgf}) becomes
$$
\psi(\beta) = \log\int e^{\beta^{\top}x}\, dF_0(x)
= \beta^{\top}\mu_0 + \frac{1}{2}\beta^{\top}\Sigma_0\beta,
$$
for all $\beta\in\mathbb{R}^d$; this is a special case of Brown \cite{brown}, Example 1.14. 
The tilted distribution defined by
\begin{equation}
e^{\beta^{\top}x - \psi(\beta)}dF_0(x)
\label{efam}
\end{equation}
is multivariate normal with mean $\mu_0 + \Sigma_0\beta$. In other words,
the ratio of the $N(\mu_0+\Sigma_0\beta,\Sigma_0)$ density to the
$N(\mu_0,\Sigma_0)$ density is the exponential factor in (\ref{efam}).
Every point in $\mathbb{R}^d$ can be expressed as $\mu_0+\Sigma_0\beta$,
for some $\beta\in\mathbb{R}^d$, so as $\beta$ ranges over $\mathbb{R}^d$,
(\ref{efam}) ranges over all multivariate normal distributions with covariance matrix $\Sigma_0$. 
That is, the multivariate normal distributions with covariance matrix $\Sigma_0$ and arbitrary mean
form an exponential family with parameter $\beta$.

Setting aside the issue of imbalance for a moment, let $F_1$ be the multivariate
normal distribution $N(\mu_1,\Sigma_0)$, for some mean vector $\mu_1$.
Let $\beta_*$ be as in (\ref{bstar:sub}), but with $\bar{x}$ replaced by
$\mu_1$. Then $\beta_*$ is the parameter that makes the mean
of (\ref{efam}) equal to $\mu_1$, or 
$\mu_1 = \mu_0 + \Sigma_0\beta^*$, and therefore
\begin{equation}
\beta_* = \Sigma_0^{-1}(\mu_1-\mu_0).
\label{blda}
\end{equation}

We recognize (\ref{blda}) as the coefficient vector in
classical linear discriminant analysis. (See, e.g., Theorem 6.4.1 of
Anderson \cite{anderson}.)
Linear discriminant analysis classifies an observation $x$ to class 1 or 0
depending on whether
$\beta_*^{\top}x$
is larger or smaller than some threshold. We ignore the intercept in the
discriminant function because it can be
absorbed into the choice of threshold.
Thus, the coefficient vector defined by (\ref{bstar:sub}) when $F_0$
is multivariate normal coincides with the
coefficient vector in linear discriminant analysis that applies when $F_0$ and $F_1$
are multivariate normal with a common covariance matrix.
That is, when $F_0$ is multivariate normal, 
the infinite imbalance limit (\ref{bstar:sub}) chooses $\beta_*$
\textit{as if} $F_1$ were multivariate normal with the same covariance
matrix as $F_0$.
This is a special case of the upsampling interpretation in Section~\ref{s:up}
because the $\beta_*$ we get from (\ref{bstar:sub}) is the same coefficient
vector we would get if we upsampled the minority class to the distribution
$N(\bar{x},\Sigma_0)$ and then applied logistic regression.

In the left panel of Figure~\ref{f:norm}, the ellipses show probability contours for
two bivariate normal distributions $N(\mu_i,\Sigma_i)$, $i=0,1$,
$$
\mu_0 = \left(\begin{array}{c}
2 \\
2
\end{array}\right),
\quad
\Sigma_0 = \left(\begin{array}{cc}
1.4^2 & 1.4\cdot 2.2\cdot 0.6 \\
1.4\cdot 2.2\cdot 0.6 & 2.2^2
\end{array}\right),
\quad
\mu_1 = \left(\begin{array}{c}
6 \\
8
\end{array}\right),
\quad
\Sigma_1 = \left(\begin{array}{cc}
2 & 0 \\
0 & 1
\end{array}\right),
$$
Class 0 is indicated in blue,
and class 1 in red.
The blue line illustrates a set of points $x$ with a constant value of
$\beta_*^{\top}x$, where the vector $\beta_* = \Sigma_0^{-1}(\mu_1-\mu_0)$
is computed from the perspective of $F_0$, using the covariance matrix 
$\Sigma_0$.
The intercept of the line is arbitrary;
the classification rule defined by $\beta_*$ should be thought of as the set of lines
parallel to the blue line, with different intercepts corresponding to different
classification thresholds. 
The only property of $F_1$ used in calculating $\beta_*$ is the
mean $\mu_1$, just as in (\ref{bstar:sub}).
With different covariance matrices for the two classes, linear discriminant analysis is
still useful but it loses optimality properties (Anderson \cite{anderson}, Section 6.10.2); the choice of $\beta_*$ in the infinitely
imbalanced limit (\ref{bstar:sub}) fails to capture the difference in covariance
matrices and is akin to assuming that $F_1$
has the same covariance as $F_0$.
The red line similarly illustrates a classification boundary, but now using
$\beta_* = \Sigma_1^{-1}(\mu_0-\mu_1)$ calculated with
the roles of $F_0$ and $F_1$ reversed.

\begin{figure}
\centerline{\includegraphics[width=3in]{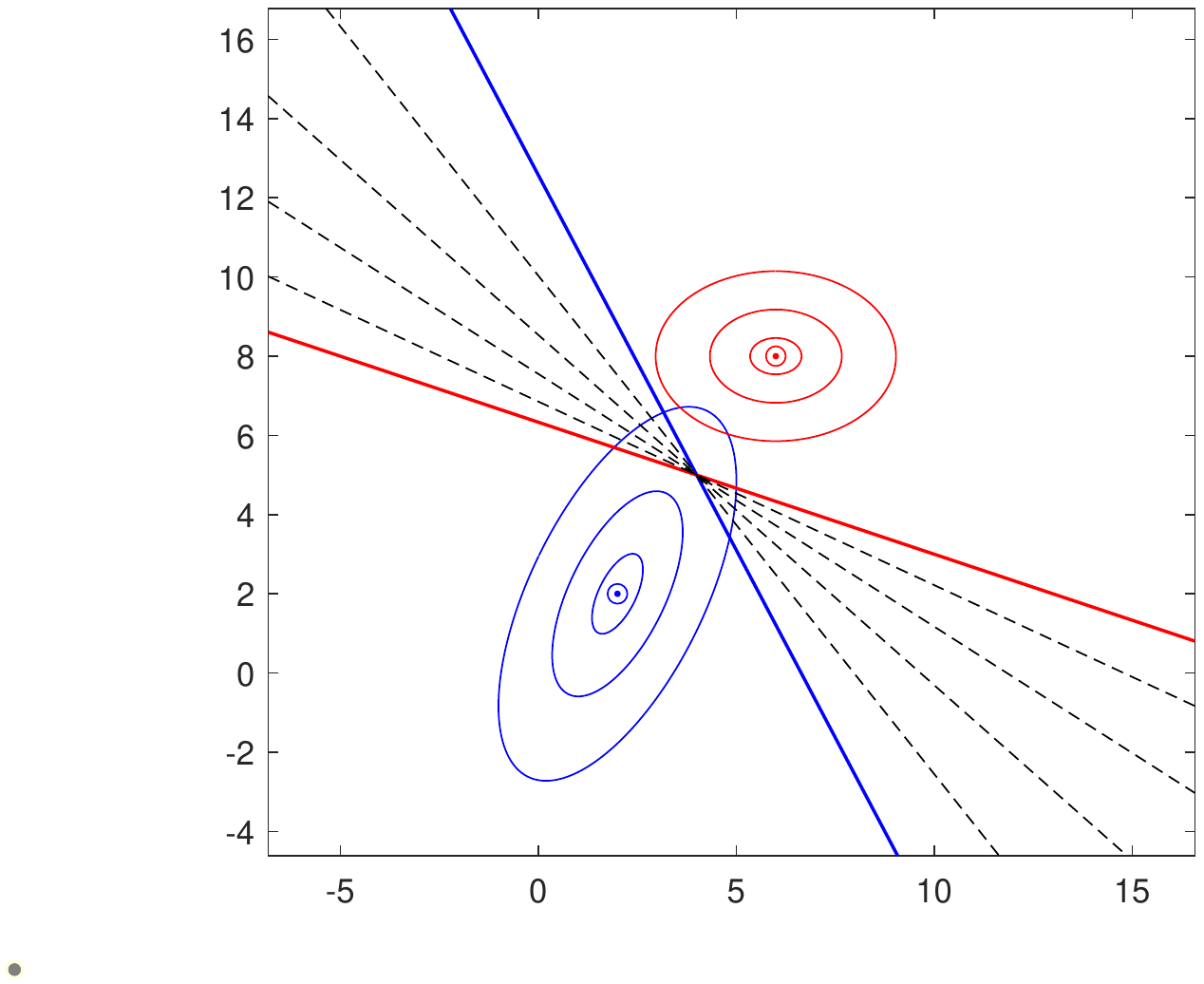}\includegraphics[width=3in]{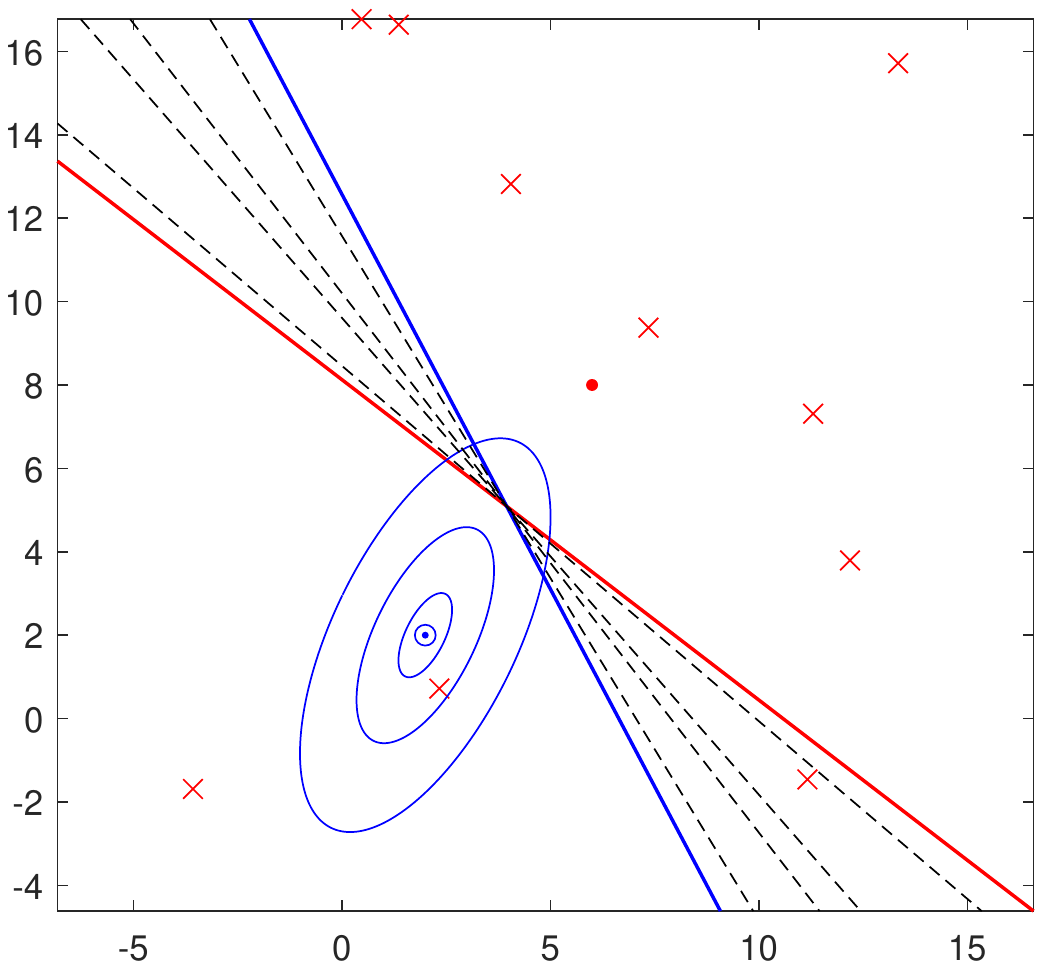}}
\caption{Classification boundaries with unequal covariance matrices.
Dashed lines show the effect of varying $\lambda$.}
\label{f:norm}
\end{figure}

Now consider the analog of (\ref{bstar:exp}) in which
the distributions on the two sides ($F_0$ on the left, the empirical distribution on $x_1,\dots,x_n$ on the right) are replaced with the normal distributions $F_0$ and $F_1$. 
In this case (\ref{bstar:exp}) reduces to 
$$
\mu_0 + (1-\lambda)\Sigma_0\beta_* = \mu_1 - \lambda\Sigma_1\beta_*;
$$
i.e.,
\begin{equation}
\beta_* = (\lambda\Sigma_1 + (1-\lambda)\Sigma_0)^{-1}(\mu_1-\mu_0).
\label{bminmax}
\end{equation}
The exponent $\lambda$ in the asymptotically exponential weight function $w$
balances the relative influence of the two distributions in setting the slope of the classifier.
The dashed lines in the left panel of Figure~\ref{f:norm} show the effect of varying 
$\lambda$ between 0 and 1.

In classical linear discriminant analysis, Anderson \cite{anderson},
p.247, shows that a coefficient vector of the form (\ref{bminmax})
solves a minimax problem of balancing the two types of misclassification
errors that can occur in binary classification.
Based on the discussion in Section~\ref{s:robust1}, we interpret
the extremes $\lambda=0$ and $\lambda=1$ as two forms
of conservatism, from the perspectives $F_0$ and $F_1$, respectively.

Now replace the normal distribution for $F_1$ with the empirical distribution on $x_1,\dots,x_n$,
while keeping $F_0$ normal. In this case (\ref{bstar:exp}) becomes
$$
\mu_0 + (1-\lambda)\Sigma_0\beta_* = 
\frac{\sum_{i=1}^n x_i e^{-\lambda \bxi}}{\sum_{i=1}^n e^{-\lambda \bxi}},
$$
which can be solved numerically. The crosses in the right panel of Figure~\ref{f:norm}
represent $x_1,\dots,x_n$, $n=10$. 
Their mean is at $(6,8)$, just as it is for the red normal distribution
in the left panel. The slopes of the solid blue lines are therefore the same in the two panels:
the discriminant computed from the blue distribution ($\lambda=0$) depends on the red distribution only through its mean. 
At $\lambda=0$, the infinite imbalance limit chooses
$\beta_*$ as if $F_1$ were the multivariate normal distribution
$N(\bar{x},\Sigma_0)$.
The solid red line ($\lambda=1$) similarly depends on $F_0$ only
through $\mu_0$. 

The dashed black lines correspond to intermediate values of $\lambda$. 
These make use of the distributions of both classes, not just their means, indicating
a potential advantage of the exponential objective over logistic regression and other
asymptotically subexponential cases.
Although it is not evident from the figure, the slopes of the lines need not change monotonically with
$\lambda$, nor is the slope at an intermediate $\lambda$ necessarily between the
slopes of the blue and red lines.

The explicit expressions in (\ref{blda}) and (\ref{bminmax}) are potentially
useful in non-Gaussian settings as starting values for numerical calculation
of optimal coefficient vectors using estimated means and covariance matrices.
Related suggestions are made in Owen \cite{owen} and Deo and Juneja \cite{deo}.

\subsection{Asymptotically Exponential Weight Functions}
\label{s:robust2}

We have seen that in the limit (\ref{bstar:sub}) for asymptotically subexponential weight functions, 
the linear classifier under infinite imbalance is 
implicitly optimized for low sensitivity and high specificity.
In this section, we extend this interpretation to the asymptotically exponential case 
(\ref{bstar:exp}) with $\lambda \in (0,1)$,
and we argue that the exponent $\lambda$ balances 
the classifier's emphasis along the sensitivity-specificity trade-off.

For distributions $F_0$, $F_1$ on $\R^d$, define the cumulant 
generating functions $\psi_i$, $i=0,1$, as in (\ref{cgf}),
with domains $B_i$, $i=0,1$, and define the exponential families of distributions
$$
dF_{i,\beta}(x) = e^{\beta^{\top}x - \psi_i(\beta)}\, dF_i(x), \quad \beta\in B_i.
$$
\begin{proposition}\label{p:dmix}
For $\lambda\in(0,1)$, suppose there is a $\beta_* \in B_0\cap B_1$ for which
\begin{equation}
\nabla\psi_0((1-\lambda)\beta_*) = \nabla\psi_1(-\lambda\beta_*).
\label{psistar}
\end{equation}
Then the problem
\begin{equation}
\min_{G_0,G_1} \lambda D(G_0 \|F_0) + (1-\lambda)D(G_1 \|F_1)
\mbox{ subject to }
\int x\, dG_0(x) = \int x\, dG_1(x),
\label{dmix}
\end{equation}
where $G_0$ and $G_1$ are distributions on $\R^d$,
is solved by $G_0=F_{0,(1-\lambda)\beta_*}$ and $G_1=F_{1,-\lambda\beta_*}$.
\end{proposition}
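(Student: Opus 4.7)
The plan is to bypass function-space Lagrangian arguments and use the standard ``Pythagorean'' identity for exponential families, which reduces (\ref{dmix}) to a sum of non-negative relative entropies that vanish at the candidate optimizers while making the role of (\ref{psistar}) transparent.

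First I would record the identity: for any probability measure $G$ on $\R^d$ with $G \ll F_i$ and well-defined mean $m = \int x\, dG(x)$, and any $\beta \in B_i$, using $dF_{i,\beta}/dF_i = e^{\beta^\top x - \psi_i(\beta)}$ yields
\begin{equation*}
D(G \| F_i) = D(G \| F_{i,\beta}) + \beta^\top m - \psi_i(\beta).
\end{equation*}
I apply this to $G_0$ with $(i,\beta) = (0,(1-\lambda)\beta_*)$ and to $G_1$ with $(i,\beta) = (1,-\lambda\beta_*)$. Writing $m_j = \int x\, dG_j(x)$ and forming $\lambda(\cdot) + (1-\lambda)(\cdot)$, the terms linear in $m_j$ combine to $\lambda(1-\lambda)\beta_*^\top (m_0 - m_1)$, which vanishes under the constraint $m_0 = m_1$ in (\ref{dmix}).

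Under the constraint, the objective therefore equals
\begin{equation*}
\lambda D(G_0 \| F_{0,(1-\lambda)\beta_*}) + (1-\lambda) D(G_1 \| F_{1,-\lambda\beta_*}) + c,
\end{equation*}
where $c = -\lambda\psi_0((1-\lambda)\beta_*) - (1-\lambda)\psi_1(-\lambda\beta_*)$ is independent of $(G_0,G_1)$. Non-negativity of relative entropy gives the pointwise lower bound $c$, achieved by $G_0 = F_{0,(1-\lambda)\beta_*}$ and $G_1 = F_{1,-\lambda\beta_*}$. Hypothesis (\ref{psistar}) says that the candidate means $\nabla\psi_0((1-\lambda)\beta_*)$ and $\nabla\psi_1(-\lambda\beta_*)$ coincide, so the candidate pair is feasible and therefore solves (\ref{dmix}).

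The only real obstacle is a minor technical one. The Pythagorean identity requires $G_i \ll F_i$ and a finite mean; if $G_i \not\ll F_i$ then $D(G_i \| F_i) = \infty$ and $(G_0,G_1)$ cannot improve on the finite value attained by the candidates, so such pairs can be excluded. For the remaining competitors, the assumption $\beta_* \in B_0 \cap B_1$ supplies enough $F_i$-exponential integrability for $\int x\, dG_i(x)$ to exist whenever $D(G_i \| F_i) < \infty$, and standard properties of cumulant generating functions (e.g., Brown \cite{brown}) give that the candidate tilted distributions have means $\nabla\psi_0((1-\lambda)\beta_*)$ and $\nabla\psi_1(-\lambda\beta_*)$ in the first place.
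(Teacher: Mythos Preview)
Your proof is correct and takes a genuinely different route from the paper's. The paper first fixes the common mean $\mu$, invokes Lemma~\ref{l:proj} (the I-projection result) to replace each $D(G_i\|F_i)$ by the convex conjugate $\beta_i^\top\mu - \psi_i(\beta_i)$ with $\nabla\psi_i(\beta_i)=\mu$, and then differentiates the resulting function of $\mu$ to obtain $\lambda\beta_0+(1-\lambda)\beta_1=0$. Your argument instead applies the Pythagorean identity $D(G\|F_i)=D(G\|F_{i,\beta})+\beta^\top m-\psi_i(\beta)$ once with the specific tilts $(1-\lambda)\beta_*$ and $-\lambda\beta_*$, observes that the linear-in-mean terms cancel under the constraint, and reduces the objective to a non-negative quantity plus a constant. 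This buys you a cleaner verification: you avoid the implicit assumptions in the paper's calculus step (differentiability of $\beta_i(\mu)$, and that every candidate $\mu$ is actually hit by some $\nabla\psi_i$), and you do not need Lemma~\ref{l:proj} as a separate input. The paper's approach, by contrast, is more constructive in that it shows how one would \emph{discover} $\beta_*$ rather than merely verify it.
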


Equation (\ref{psistar}) generalizes (\ref{bstar:exp}); it reduces to (\ref{bstar:exp}) when
$F_1$ is the empirical distribution on $x_1,\dots,x_n$ (and the existence
of $\beta_*$ solving (\ref{bstar:exp}) is proved as part of Theorem~\ref{bdd-thm}).
The limiting coefficient vector
in (\ref{bstar:exp}) can therefore be interpreted as the result of minimizing the
objective in (\ref{dmix}), with this substitution for $F_1$. 
Proposition~\ref{p:dmix} then generalizes Lemma~\ref{l:proj}.
Indeed, as $\lambda$ approaches $0$, $D(G_1\|F_1)$ dominates the objective function, 
so the minimizer $G_1$ approaches $F_1$. In particular, the mean of $G_1$ approaches the mean of $F_1$. 
Then in solving for $G_0$, we are solving $\min_{G_0} D(G_0 \| F_0)$ 
subject to $\int x dG_0(x) = \int x d F_1(s)$, reducing to the minimization problem 
to Lemma~\ref{l:proj}.
Whereas the limit of the asymptotically subexponential case implicitly focuses on 
the worst-case distribution from the perspective of $F_0$, the objective
in (\ref{dmix}) balances the worst case as seen from both $F_0$ and $F_1$. 
In doing so, it balances the focus on the high-specificity and high-sensitivity
regions.

To make this balance more explicit, 
recall from the discussion at the end of Section~\ref{s:ec} 
that the weight function $w(u)$ can be interpreted as a penalty on the 
difference between the ROC curves for the optimal discriminant function 
and an approximating linear discriminant function. With $w(u)$ proportional to $e^{-\lambda u}$, 
we thus expect a better approximation at large negative $u$ (the high-sensitivity region of the ROC curve) when $\lambda$ is close to 1, and a better approximation 
at large positive $u$ (the high-specificity region of the ROC curve) when $\lambda$ is close to 0:
\begin{eqnarray*}
\lambda \approx 0 &\Rightarrow & \mbox{emphasizes high-specificity region;} \\
\lambda \approx 1 &\Rightarrow & \mbox{emphasizes high-sensitivity region.}
\end{eqnarray*}
This pattern is what we find in the experiments of Sections~\ref{s:numerical} and~\ref{s:credit}. 
In Appendix~\ref{s:robust_nonlinear}, we provide further discussion on the connection 
between (\ref{dmix}) and the original classification problem by generalizing 
the minimization of (\ref{eloss}) over possibly nonlinear discriminant functions.

The symmetric case $\lambda = 1/2$ admits a further interpretation.
For any discriminant function, the area under the curve measure AUC, 
discussed further in the next section, equals the probability that a draw $X_1$ from $F_1$
scores higher than an independent draw $X_0$ from $F_0$. Thus,
for a linear discriminant function $x\mapsto\beta^{\top}x$,
Markov's inequality yields
$$
\mbox{AUC} = 1-\PR(\beta^{\top}X_0 \ge \beta^{\top}X_1)
\ge 1 - \E[e^{\beta^{\top}(X_0-X_1)/2}]
\ge 1- e^{\psi_0(\frac{1}{2} \bt) + \psi_1(-\frac{1}{2} \bt)}.
$$
A cumulant generating function is convex on its domain.
Maximizing the lower bound over $\beta$ therefore leads to the first-order
condition $\nabla\psi_0(\beta/2) = \nabla\psi_1(-\beta/2)$,
which is the condition in (\ref{psistar}) with $\lambda = 1/2$.
This observation is consistent with the idea that taking $\lambda=1/2$
balances overall performance without emphasizing either specificity or sensitivity
over the other.


\section{Numerical Examples}
\label{s:numerical}
We use simulations to examine the convergence of $\bt_N$ and to illustrate properties of the
classifiers derived using various choices of the penalty function $w$.

\subsection{Convergence Simulations} 

For simplicity, we examine convergence in a one-dimensional example. We have just two
observations from the minority class, $x_1=0$ and $x_2=1$. For the majority class, we
use $N$ i.i.d. samples from the standard normal distribution, $N(0,1)$. We compare
results at several values of $N$.

Table \ref{t:convergence} reports the mean and the standard error of 
the coefficients $(\alpha_N, \bt_N)$ 
for ordinary logistic regression, an exponential objective, 
and an asymptotically linear objective, averaging over 1,000 independent runs.
For the exponential objective we consider $\lambda = 0.5$, 
and for the asymptotically linear objective we use 
$w(u) = -2u+1$, for $u \le 0$, and $w(u) = (u+1)^{-2}$ for $u > 0$,
as in Section~\ref{s:examples}. 

We solve for the optimal coefficients using the \textit{minimize} function in the \textit{scipy.optimizer} 
package with the \textit{Newton-CG} optimization method.
The last row of  Table \ref{t:convergence} reports the limiting value $\bt_*$ 
determined by Theorem~\ref{bdd-thm}, calculated by solving (\ref{bstar:exp}).

\begin{table}[h]
{\small
\centering
\begin{tabular}{r|rr|rr|rr}
\hline
& \multicolumn{2}{|c|}{Logistic $w$} 
& \multicolumn{2}{|c|}{$\lambda = 0.5$}
& \multicolumn{2}{|c}{Linear $w$}  \\
\multicolumn{1}{c|}{$N$} &  \multicolumn{1}{l}{~~~$\alpha_N$} &  \multicolumn{1}{l}{~~$\beta_N$}  & \multicolumn{1}{|l}{~~~$\alpha_N$} &  \multicolumn{1}{l}{~~$\beta_N$} & \multicolumn{1}{|l}{~~~$\alpha_N$} &  \multicolumn{1}{l}{~~$\beta_N$}\\
\hline
10     &  -1.75 (0.047) &  0.86 (0.272) & -1.85 (0.052) &  1.11 (0.372) & -1.77 (0.062) &  1.17 (0.432)  \\
100    &  -4.04 (0.002) &  0.53 (0.014) & -4.17 (0.004) &  0.82 (0.020) & -4.08 (0.003) &  0.63 (0.016)  \\
1,000   &  -6.34 (0.000) &  0.50 (0.001) & -6.48 (0.000) &  0.80 (0.002) & -6.37 (0.000) &  0.57 (0.001)  \\
10,000  &  -8.64 (0.000) &  0.50 (0.000) & -8.78 (0.000) &  0.80 (0.000) & -8.66  (0.000) &  0.55 (0.000) \\
100,000 & -10.95 (0.000) &  0.50 (0.000) & -11.08 (0.000) &  0.80 (0.000) & -10.96 (0.000) &  0.54 (0.000) \\
\hline
True $\bt_*$ & & \multicolumn{1}{l|}{0.50} & & \multicolumn{1}{l|}{0.80} & & \multicolumn{1}{l}{0.50}\\ 
\hline
\end{tabular}}
\caption{Convergence of coefficients as the sample size $N$ of the majority class grows.
Numbers in parentheses are standard errors.}\label{t:convergence}
\end{table}

In all cases, we see that $\alpha_N \to -\infty$ at rate $\log N$, consistent with the 
findings in Corollary~\ref{c:alog} of the appendix. For logistic regression and the exponential objective, 
$\bt_N$ is close to $\bt_*$ at $N = $1,000, when the data is only 0.2\% imbalanced, and 
we have observed a similar convergence rate for other values of $\lambda$.
When the left tail of the weight function diverges linearly, 
we know from Theorem~\ref{bdd-thm} that $\bt_N$ approaches 
the same limit $\bt_*$ as in the case of logistic regression,
but the results in the table indicate that the convergence is much slower. 
We have observed the same behavior with other weight functions whose left tail diverges at a polynomial rate. 

\subsection{High-Sensitivity and High-Specificity Regions}
We turn next to a comparison of logistic regression with exponential objectives at
various values of the exponent $\lambda$. As we discussed in Sections~\ref{s:examples} and~\ref{s:robust2}, 
we expect the value of $\lambda$ to control the relative performance of a classifier 
as measured by sensitivity or specificity.

We consider a two-dimensional example in which $F_0$ is the bivariate standard normal
distribution. Samples from the minority class are drawn from a mixture of two normals: we have
a sample of $n=500$, of which $10\%$ are drawn from $N(\mu_{1,1}, \Sigma_{1,1})$ and $90\%$ from $N(\mu_{1,2}, \Sigma_{1,2})$, with
\[
\mu_{1,1} = \begin{pmatrix}
0\\2
\end{pmatrix}, 
\Sigma_{1,1} = \begin{pmatrix}
0.3 & 0 \\0 & 0.3
\end{pmatrix}, 
\quad
\mu_{1,2} = \begin{pmatrix}
2.3\\2.3
\end{pmatrix}, 
\Sigma_{1,2} = \begin{pmatrix}
0.2 & 0 \\0 & 0.2
\end{pmatrix}.
\]
We will compare results at various values of the sample size $N$ for the majority class.
\begin{figure}[h]
    \centering
    \includegraphics[trim=0.5in 0.75in 0.5in 1in,clip,width=0.5\linewidth]{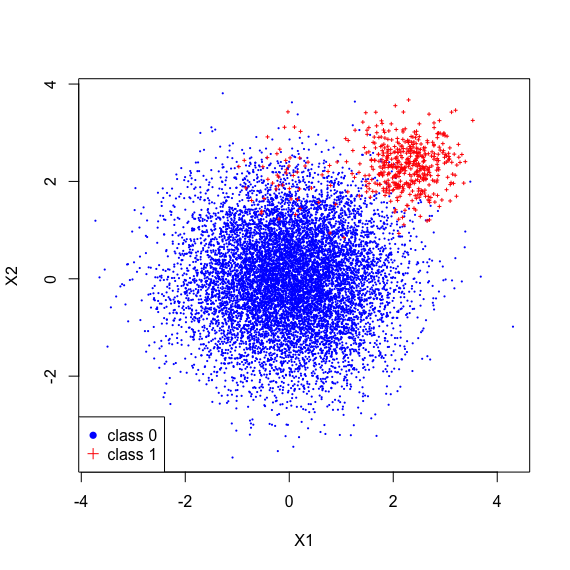}
    \caption{Two-Dimensional Example}
    \label{fig:2d_sim_points}
\end{figure}

Figure \ref{fig:2d_sim_points} shows points drawn from the two classes, with $N =$ 10,000 and $n = 500$. 
The example is designed so that  $90\%$ of the minority class comes from a distribution 
that is easily distinguishable from the majority class, but the remaining $10\%$ makes the classification task challenging.

Recall from Section~\ref{s:ec} that the ROC curve for a linear discriminant function 
$\beta^{\top}x$ is a plot of the true positive rate $\PR_1(\beta^{\top}X>t)$ 
against the false positive rate $\PR_0(\beta^{\top}X>t)$, for all $t\in \R$. 
The area under the ROC, abbreviated AUC, provides an overall summary of the performance 
of the classifier, but we are more
interested in comparing performance at high levels of sensitivity (high true positive rates)
and high levels of specificity (low false positive rates). 
We therefore make comparisons based
on partial AUC (pAUC) measures, as introduced in McClish \cite{mcclish}, for the regions of interest.

The calculation of a specificity-oriented pAUC measure is illustrated in the left panel
of Figure~\ref{fig:pAUC}.
In this example, we focus on the area under the curve between 0 and FP$_1$ (call that ``Area'')
and then normalize it to fall between 1/2 and 1 through the transformation
\[
\mbox{\emph{pAUC}}
= \frac{1}{2}\left(1 + \frac{\text{Area} - \min}{\max - \min}\right)
= \frac{1}{2}\left(1 + \frac{\text{Area} - \text{FP$_1^2/2$}}{\text{FP$_1$} - \text{FP$_1^2/2$}}\right).
\]
Here, max is the area FP$_1$ of the shaded rectangle, and min is the area FP$_1^2/2$
of the triangular portion of the rectangle below the diagonal. 
An ideal classifier over the interval from 0 to FP$_1$ would have a pAUC of 1, whereas
a random assignment of observations to classes would have a pAUC of 1/2.
To focus on high specificity, we consider values of FP$_1$ decreasing from 0.10 to 0,
which corresponds to a true negative rate (TNR) increasing from 0.90 to 1.

The calculation of a sensitivity-oriented pAUC measure on the right side
of Figure~\ref{fig:pAUC} works similarly. 
To focus on high sensitivity, we take a rectangle along the top end of the unit square. 
The lower boundary of that rectangle is defined by a true positive rate (TPR) that we initially set
equal to 0.90 and then increase toward 1. In the normalization of the area for this case,
min is the area to the right of the diagonal. 
We use the R package \emph{pROC} (Robin et al. \cite{robin}) to facilitate the 
calculation and plotting of pAUC values.

\begin{figure}[h]
    \centering
    \includegraphics[width=2.99in]{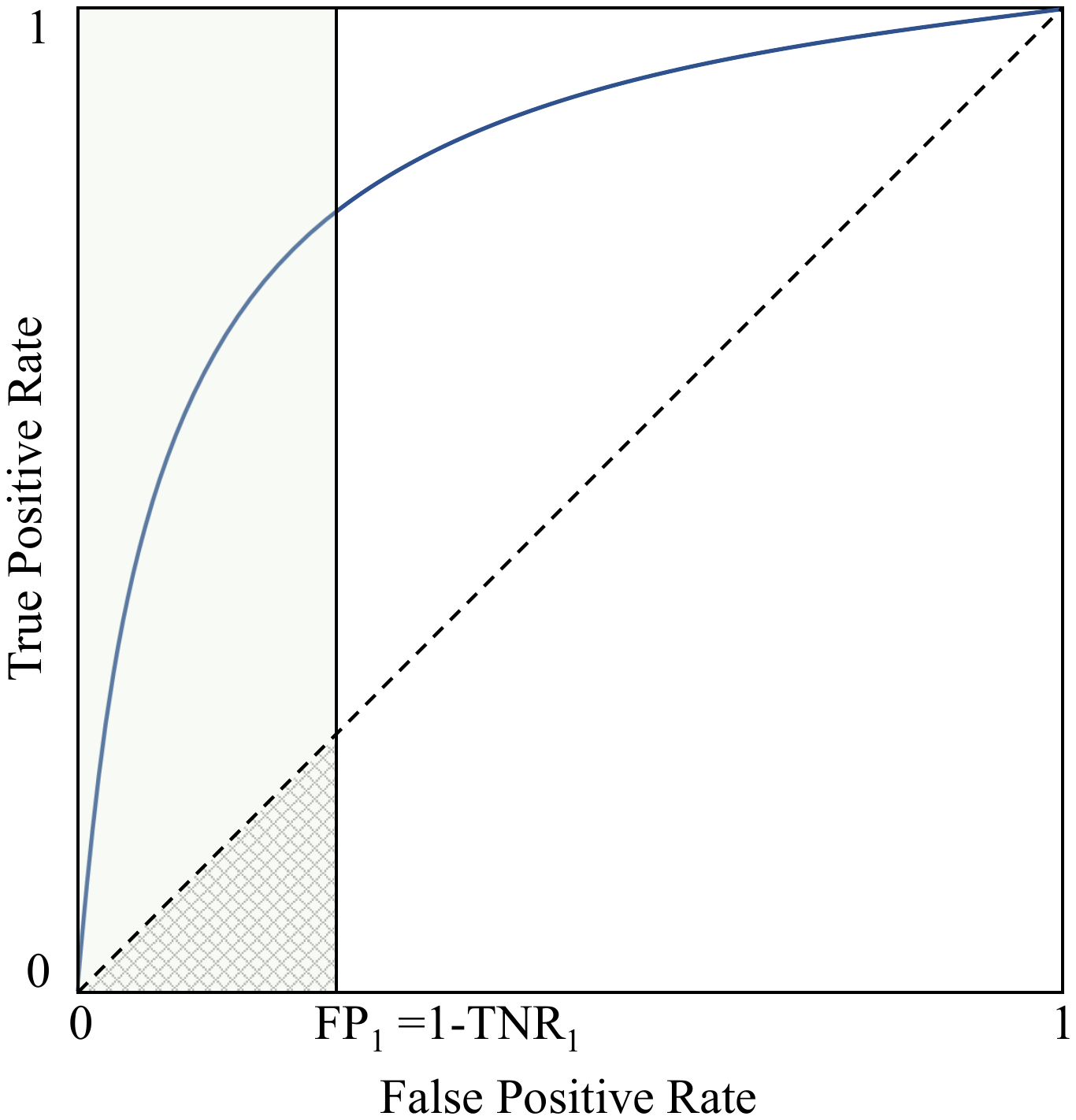}\includegraphics[width=3.10in]{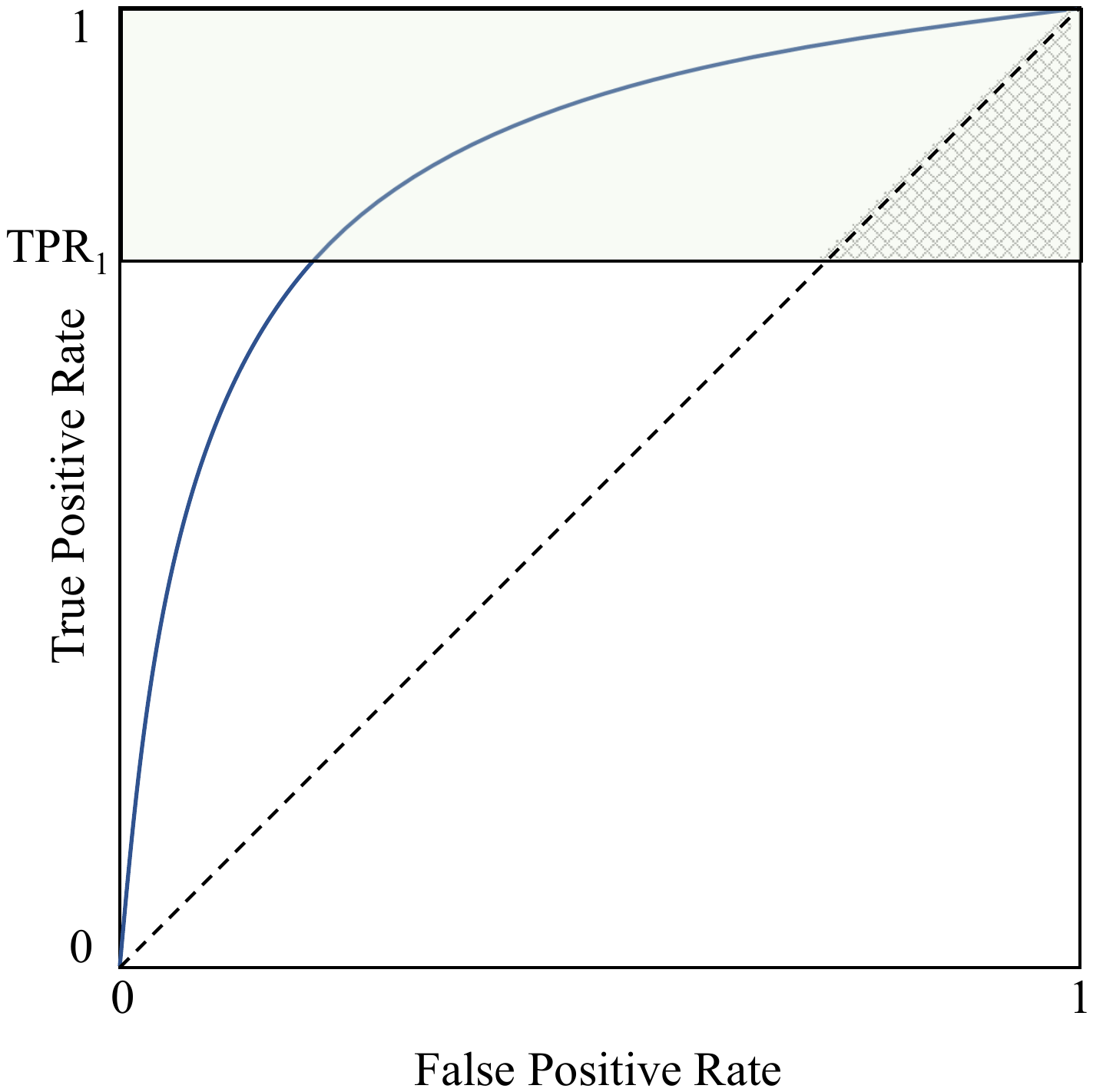}
    \caption{Illustration of a specificity-oriented (left) and sensitivity-oriented (right) pAUC.}
    \label{fig:pAUC}
\end{figure}

Figure \ref{fig:pauc_2d} compares pAUC values for logistic and exponential classifiers with
a sample size of $N=$10,000 for the majority class.
Panel (a) plots pAUCs in the high-sensitivity region, with the true positive rate
$\PR_1(\beta^{\top}X>t)$ increasing from 0.90 toward 1.
Among the exponential
classifiers, we see that, at high levels of sensitivity, the classifier with $\lambda=0.9$
outperforms the classifier with $\lambda=0.5$, which outperforms the classifier with
$\lambda=0.1$. Panel (b) focuses on the region of high specificity, where the true
negative rate $\PR_0(\beta^{\top}X\le t)$ increases from 0.90 toward 1.
Here we see the ordering of the exponential classifiers reversed. This pattern
is consistent with our interpretation of the exponential objective in Section~\ref{s:robust2}: 
higher $\lambda$ puts more weight on sensitivity, and lower $\lambda$ puts more weight on specificity.

We see this pattern as the key consideration in choosing $\lambda$.
In applications such as disease testing or screening for default risk, where a false negative may be much more costly than a false positive, a larger $\lambda$ should be preferred; but if the
goal is to maintain high specificity while optimizing for sensitivity, then a smaller $\lambda$ is more appropriate.
The choice of $\lambda$ does not solve the problem of imbalanced data,
but it helps control the consequences of the imbalance.

At both extremes, Figure \ref{fig:pauc_2d} indicates that the performance of the logistic classifier falls between the exponential classifiers with $\lambda=0.1$ and $\lambda=0.5$.
We investigate this pattern further in Figure~\ref{fig:pauc_varyingN}, where we consider
the effect of a smaller ($N=$1,000) or larger ($N=$50,000) sample size.
Comparing these results with those in Figure \ref{fig:pauc_2d} reveals a consistent pattern:
as $N$ increases, the performance of the logistic classifier becomes indistinguishable
from that of an exponential classifier with small $\lambda$. 
This pattern is consistent with Theorem~\ref{bdd-thm}: if we think of $\bt_*(\lambda)$ as a function of the exponent $\lambda$, then (\ref{bstar:exp}) suggests that $\lim_{\lambda \downarrow 0} \bt_*(\lambda) = \bt_*$, where $\bt_*$ is the limiting coefficient vector for ordinary logistic regression. 

\begin{figure}[h]
    \centering
    \begin{subfigure}[b]{0.475\textwidth}
        \centering
        \includegraphics[width=\textwidth]{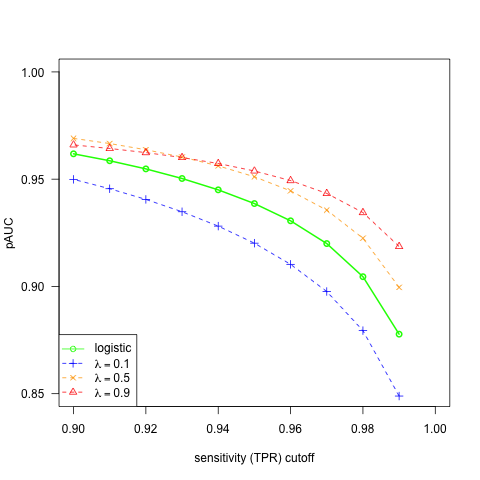}
        \caption[]%
        {{\small High-sensitivity region}}    
    \end{subfigure}
    \hfill
    \begin{subfigure}[b]{0.475\textwidth}  
        \centering 
        \includegraphics[width=\textwidth]{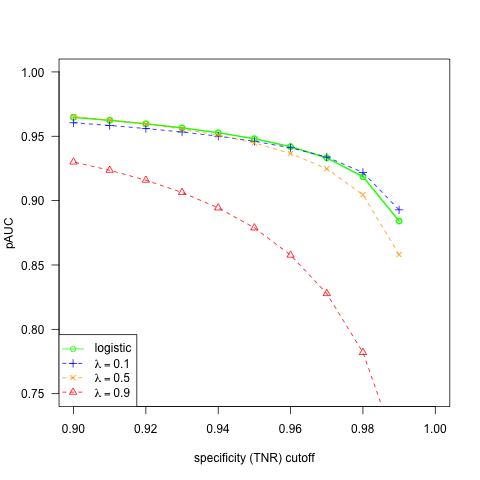}
        \caption[]%
        {{\small High-specificity region}}    
    \end{subfigure}
    \caption{Comparison of pAUC values for logistic and exponential ($\lambda=0.1, 0.5, 0.9$) classifiers}\label{fig:pauc_2d}
\end{figure}

\begin{figure}
    \centering
    \begin{subfigure}[b]{0.475\textwidth}
        \centering
        \includegraphics[width=\textwidth]{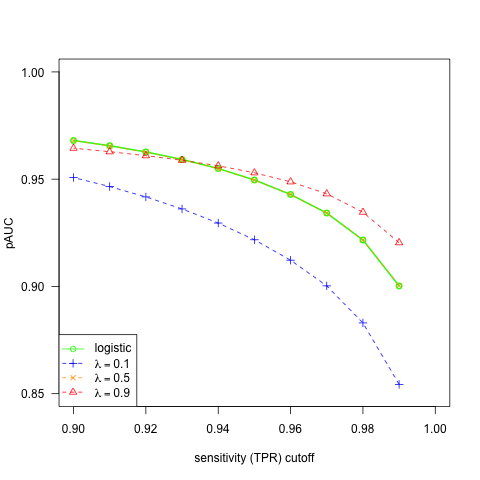}
        \caption{High-sensitivity region, $N = 1,000$}
    \end{subfigure}
    \hfill
    \begin{subfigure}[b]{0.475\textwidth}  
        \centering 
        \includegraphics[width=\textwidth]{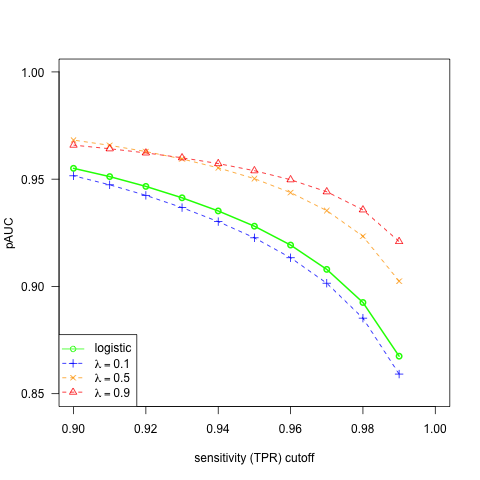}
        \caption{High-sensitivity region, $N = 50,000$}
    \end{subfigure}
    \vskip\baselineskip
    \begin{subfigure}[b]{0.475\textwidth}   
        \centering 
        \includegraphics[width=\textwidth]{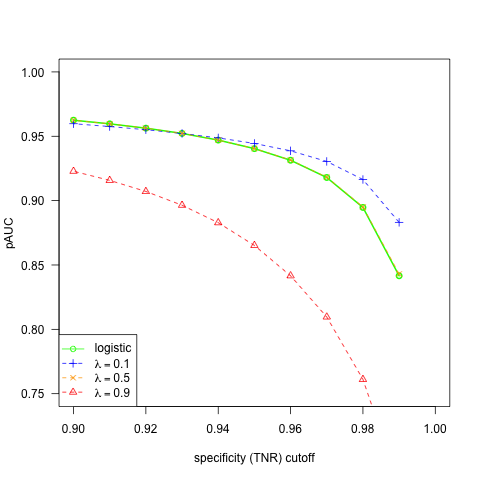}
        \caption{High-specificity region, $N = 1,000$}
    \end{subfigure}
    \hfill
    \begin{subfigure}[b]{0.475\textwidth}   
        \centering 
        \includegraphics[width=\textwidth]{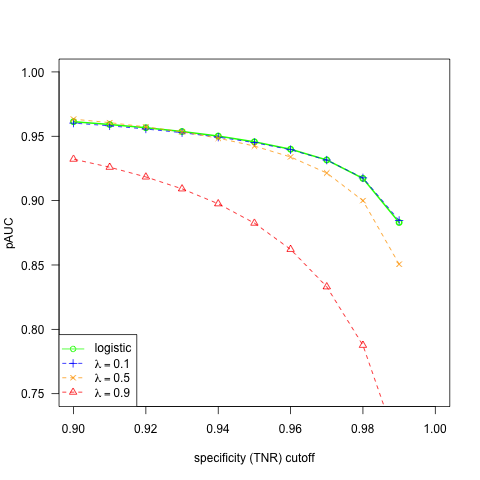}
        \caption{High-specificity region, $N = 50,000$}
    \end{subfigure}
    \caption{As $N$ increases, the logistic pAUC values move closer to the exponential pAUC values with small $\lambda$ in both the high-sensitivity and high-specificity regions}
    \label{fig:pauc_varyingN}
\end{figure}

\section{A Credit Risk Application}
\label{s:credit}

\subsection{Freddie Mac Data}

The task of classifying borrowers by their credit risk is challenged by 
imbalanced data in settings where defaults are rare. In this section, we apply ideas
from previous sections to quarterly data from the
Freddie Mac Single Family Loan-Level Dataset, from $2003$ to $2016$. The dataset can be accessed from \url{http://www.freddiemac.com/research/datasets/sf_loanlevel_dataset.page}.
The dataset covers mortgages purchased or guaranteed by Freddie Mac.

Our outcome of interest --- the binary label we attach to each loan --- is whether the loan defaults within two years of origination. We define a loan to be in default if it is 180 days or more past due. Our goal is to predict this outcome based on loan and borrower features available at origination.
This setup is consistent with Li et al. \cite{yazheLi},
although our sample is much larger.

Figure~\ref{fig:num_loans} plots the number of loans originated in each quarter, and 
Figure~\ref{fig:default_rate} plots the default rate over time from 2003 to 2019.
We exclude from our analysis all loans that were repurchased within two
years of origination.
The default rate is almost always less than $0.03\%$, except around the financial crisis of 2008 when it climbs near $3.5\%$. 
We are thus dealing with extremely imbalanced data and considerable variation in the degree of imbalance.

In predicting outcomes, we use a combination of numerical and categorical attributes.
The numerical variables are credit score, original debt-to-income ratio, 
log of original unpaid principal balance, original loan-to-value ratio, 
and original interest rate; the categorical variables are
number of borrowers (one or more than one), first time homebuyer flag, number of units, occupancy status, loan origination channel, prepayment penalty mortgage flag, property type, and loan purpose.
Precise definitions of these variables can be found in the Freddie Mac \cite{fmac} user guide.

\begin{figure}[h]
\begin{subfigure}{\textwidth}
    \centering
    \caption{Number of Loans} \label{fig:num_loans}
    \includegraphics[width=\linewidth]{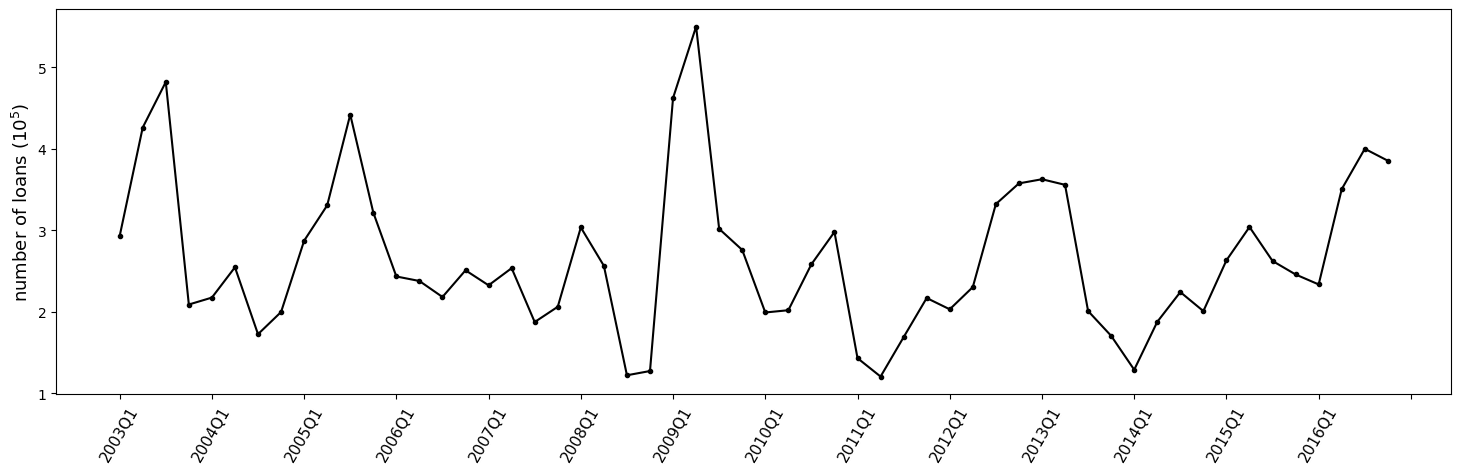}
\end{subfigure}
\bigskip
\\
\begin{subfigure}{\textwidth}
    \centering
    \caption{Default Rate over Time} \label{fig:default_rate}
    \includegraphics[width=\linewidth]{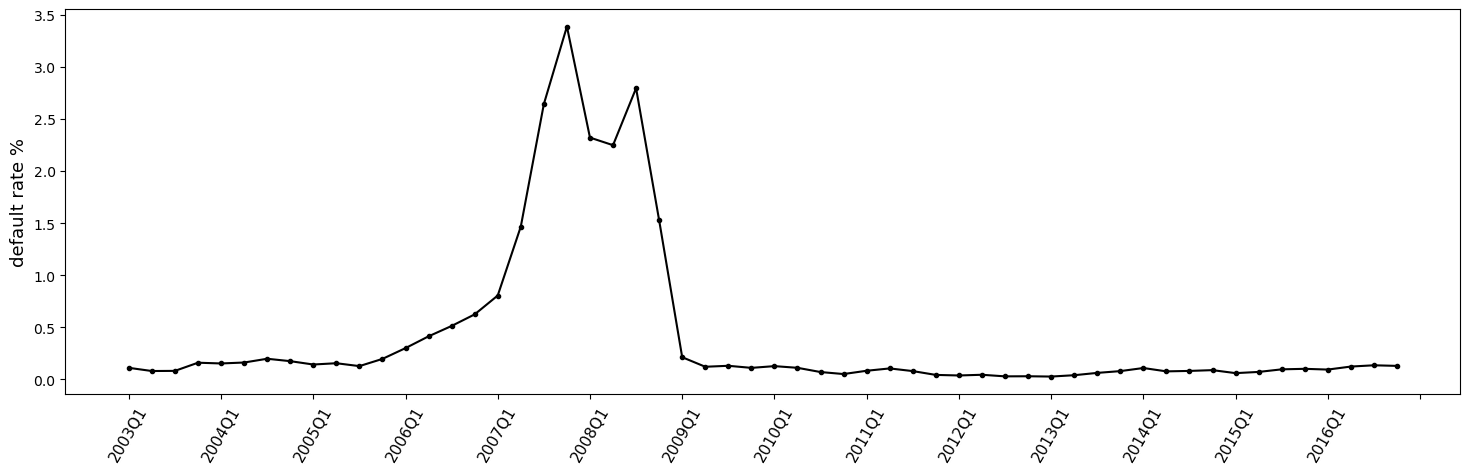}
\end{subfigure}
\caption{Freddie Mac Summary Data}
\end{figure}

We estimate linear classifiers over a rolling window, for $t=2003,\dots,2013$.
For $t=2003$, the process works as follows. We use 80\% of loans originated in any of the four
quarters of 2003 and their default status in the corresponding quarter of 2005 ($t+2$)
to estimate a model, reserving the other 20\% of the data for later validation. This is our training
data for $t=2003$.
We then apply the trained model to the attributes of loans originated in the first quarter
of 2004 to predict default status as of the first quarter of 2006 ($t+3$). This is our
test data for $t=2003$.
We apply the same process, retraining the model with $t=2004$, 
to predict default status in the first quarter of 2007
for loans originated in the first quarter of 2005. Our last forecast is for defaults in
the first quarter of 2016, for loans originated in the first quarter of 2014, trained
based on loans originated in $t=2013$.

We remove loans that are missing values for any numerical variables. For each categorical variable, we interpret missing values as a separate category. 
At each $t$, we check each variable to ensure that  
we have at least two distinct values of the variable in the data
to avoid degeneracy. 
We omit 
the variable for that $t$ if the variable fails this check, 
which happens in fewer than 1\% of cases.

Using this process, we estimate four classifiers at each $t$, using logistic regression and exponential objectives with $\lambda = 0.1, 0.5, 0.9$. 
In Appendix \ref{appendix-fredmac}, we report AUCs for training, validation, and test data
for each classifier, for each $t$. All AUCs are above 0.8, indicating that linear classifiers
perform reasonably well in this task.
The validation AUCs and testing AUCs are all very close to the training AUCs, 
allaying any concerns about overfitting.

\subsection{High-Sensitivity Classifiers}
We consider a lender that would like to apply a simple first-pass classifier that correctly identifies
at least 99\% of customers who would default as high risk. Those classified as high
risk would then undergo a costlier in-depth review. The lender thus wants the
first-pass classifier to have a high TPR to make it highly sensitive
to likely defaulters.
We have seen that, in highly imbalanced settings, logistic regression becomes similar
to an exponential classifier with $\lambda$ close to zero; but we have also seen that
in the high-sensitivity region we should prefer to take $\lambda$ close to one. We investigate
this comparison using the Freddie Mac data.

For each classifier and each year $t$, we set a classification threshold to achieve
a TPR of 99\% in the training data. We then evaluate the TPR and TNR
in the test set for each classifier and each year.
    
Appendix \ref{appendix-testTPR} reports the test TPRs for all methods and all years.
In all cases, the test TPR is close to 99\%, indicating that the threshold set in the training
data works well in the test data.
However, we see clear differences in the test TNRs reported in Table \ref{t:train_TPR_test_TNR}.
In all years, the exponential classifier with $\lambda=0.9$ achieves the best 
or near the best performance, as we expected in this high-sensitivity region. 
The classifier with $\lambda=0.5$ consistently outperforms the logistic classifier and the
case $\lambda=0.1$, and the last two are difficult to distinguish. All of these findings
are consistent with our interpretation of the effect of the parameter $\lambda$ and the
relationship between the logistic and exponential objectives.

\begin{table}[ht]
\centering
\begin{tabular}{rrrrr}
\hline
Year &  Logistic &  $\lambda=0.1$ &  $\lambda =0.5$ &  $\lambda = 0.9$ \\
\hline
       2003 &     29.57 &  29.06 &  33.11 &  \textbf{36.34} \\
       2004 &     24.34 &  24.11 &  \textbf{26.68} &  26.11 \\
       2005 &     23.51 &  23.59 &  \textbf{24.45} &  24.14 \\
       2006 &     22.93 &  23.62 &  26.17 &  \textbf{28.06} \\
       2007 &     21.97 &  21.84 &  23.09 &  \textbf{24.07} \\
       2008 &     23.57 &  23.13 &  24.33 &  \textbf{24.50} \\
       2009 &     29.65 &  25.91 &  29.29 &  \textbf{32.14} \\
       2010 &     17.14 &  17.80 &  22.26 &  \textbf{26.33} \\
       2011 &     31.68 &  30.59 &  29.52 &  \textbf{33.67} \\
       2012 &     33.08 &  32.98 &  35.63 &  \textbf{42.17} \\
       2013 &     26.74 &  26.25 &  23.72 &  \textbf{30.25} \\
\hline
\end{tabular}
\caption{True negative rates (in percent) in test data for classifiers trained at a true positive rate of 99\%}  \label{t:train_TPR_test_TNR}
\end{table}

\subsection{pAUC plots}
To gain further insight into the comparison of the classifiers, we examine pAUC plots
like those introduced in Section~\ref{s:numerical}, but now using the Freddie Mac data.
Figure \ref{fig:pauc_full} shows results for 2007, but we find the same pattern in all years:
as expected, a higher $\lambda$ gives better results in the high-sensitivity region,
and a lower $\lambda$ works better in the high-specificity region.
The performance of logistic regression is similar to that of $\lambda=0.1$ in the first
case but closer to that of $\lambda=0.5$ in the second case due to the finite imbalance in the data.

To compare these linear classifiers with upsampling 
methods often used in practice to address data imbalance, we apply
the SMOTE method of Chawla et al. \cite{smote}.
We apply it using $5$ nearest neighbors, and we upsample the default class to match
the sample size of the non-default class. We then plot the pAUC curve 
of the logistic regression classifier trained on the transformed data
in Figure \ref{fig:pauc_full}.
The results in the figure indicate that SMOTE does not improve
performance in either the high-sensitivity or high-specificity regions.
We have also found that it results in a lower overall AUC than the other methods.

\begin{figure}[p]
    \centering
    \begin{subfigure}[b]{0.475\textwidth}
        \centering
        \includegraphics[width=\textwidth]{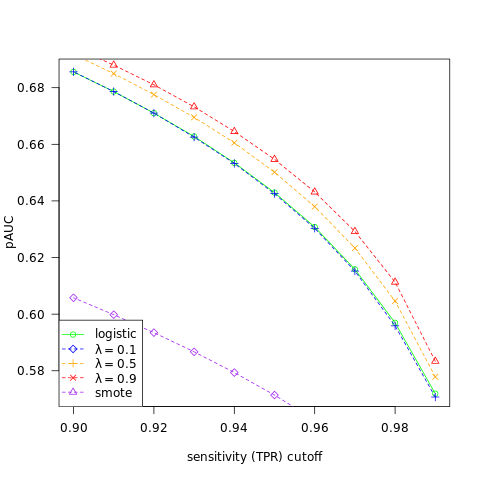}
        \caption[]%
        {{\small High-sensitivity region}}    
    \end{subfigure}
    \hfill
    \begin{subfigure}[b]{0.475\textwidth}  
        \centering 
        \includegraphics[width=\textwidth]{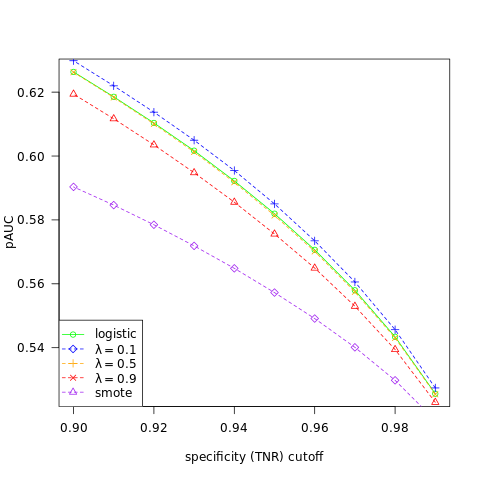}
        \caption[]%
        {{\small High-specificity region}}    
    \end{subfigure}
    \caption{Comparison of pAUC values in test data for logistic and exponential ($\lambda=0.1, 0.5, 0.9$) classifiers using Freddie Mac loan data. SMOTE upsampling with logistic regression is also included for comparison}\label{fig:pauc_full}
\end{figure}

To gauge the statistical significance of differences across $\lambda$ values, Figure \ref{fig:pauc_ci} includes
$90\%$ bootstrap confidence intervals around the pAUC curves. For clarity, we compare
just two cases, $\lambda=0.9$ in red and $\lambda=0.1$ in blue.
The confidence bands barely overlap, indicating that the ordering of the two curves
is reliable.

\begin{figure}[p]
    \centering
    \begin{subfigure}[b]{0.475\textwidth}
        \centering
        \includegraphics[width=\textwidth]{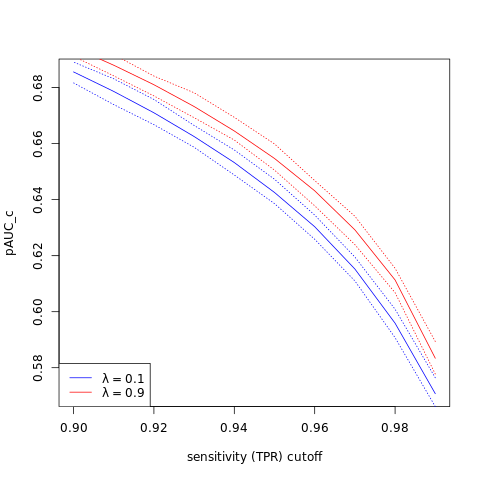}
        \caption[]%
        {{\small High-sensitivity region}}       
    \end{subfigure}
    \hfill
    \begin{subfigure}[b]{0.475\textwidth}  
        \centering 
        \includegraphics[width=\textwidth]{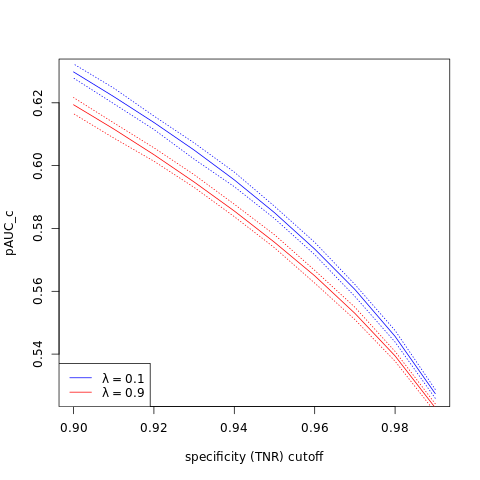}
        \caption[]%
        {{\small High-specificity region}}      
    \end{subfigure}
    \caption{Test data pAUC plots with $90\%$ confidence bands for $\lambda=0.1$ and $\lambda=0.9$.}\label{fig:pauc_ci}
\end{figure}

\section{Concluding Remarks}
\label{s:conc}

We have shown that a broad family of linear discriminant functions have explicit
limits as the sample size of one class grows while the sample size for the other
remains fixed. Linear discriminant functions defined by asymptotically subexponential
weight functions share a common limit with logistic regression.
A wider range of limits applies using asymptotically exponential weights.
The limits of these classifiers reflect different types of robustness or
conservatism towards worst-case false-positive and false-negative
errors. 
Our analysis does not solve the problem of imbalanced data, but it
does provide the user some control over the consequences of
imbalance through the exponent $\lambda$, favoring performance
in either the high-specificity or high-sensitivity regions.
We illustrated these ideas through numerical examples and
an application to credit risk in predicting mortgage defaults.

Our analysis is limited to linear discriminant functions. Linear classifiers are widely
used in practice, at least in part because they are easy to interpret.
Note, also, that the features used for classification could include scores computed from nonlinear models. 
In other words, a linear discriminant function can be used to aggregate results from multiple models.
It would be interesting to know if similar limits hold for scoring rules
derived from regression trees or neural networks. The loss functions
used to optimize these types of rules over parameters are typically nonconvex,
which significantly complicates any analysis of their limiting behavior.


\appendix
\section{Proofs for Section~\ref{s:existence}}

\begin{proof}[Proof of Lemma~\ref{cvx_obj}] Differentiating (\ref{uvint}) yields
\[
\frac{d U}{d u} = w(u), \quad \frac{d^2 U}{d u^2} = w'(u)
\]
and
\[
\frac{d V}{d u} = e^u w(u),\quad \frac{d^2 V}{d u^2} = e^u(w(u) + w'(u)),
\]
so Condition \ref{assump_conv} implies that $U$ is concave and $V$ is strictly convex. 
It follows that each term $-U(\alpha+\beta^{\top}x_i)$ and $V(\alpha+\beta^{\top}X_i)$
is convex in $(\alpha,\beta)$, and thus that $\bar{C}_N$ in (\ref{cost}) is convex.

To establish strict convexity of $\bar{C}_N$, we first claim that, almost surely,
$X_1,\dots,X_N$ do not fall on a hyperplane, for all sufficiently large $N$.
Permuting the order of the $X_i$ does not change whether they fall
on a hyperplane, so we may apply the Hewitt-Savage zero-one law 
(as in, e.g., Durrett \cite{durrett}, p.71)
to conclude that
the probability that all $X_i$ fall on a hyperplane is zero or one.
By the surrounding condition, $F_0$ is not supported on any hyperplane,
so the probability that all $X_i$ fall on a hyperplane is less than one and
must therefore be zero.

Suppose, then, that $N$ is sufficiently large that $X_1,\dots,X_N$ do not fall
on a hyperplane.
Then for any distinct $(\alpha_1,\beta_1)$ and $(\alpha_2,\beta_2)$ there is some
$i_0\in \{1,\dots,N\}$ for which
$\alpha_1+\beta_1^{\top}X_{i_0}\not=\alpha_2+\beta_2^{\top}X_{i_0}$.
Then, for any $\nu\in(0,1)$,
\begin{eqnarray*}
\lefteqn{
\sum_{i=1}^N V(\nu(\alpha_1+\beta_1^{\top}X_i)+(1-\nu)(\alpha_2+\beta_2^{\top}X_i))
} && \\
&=&
V(\nu(\alpha_1+\beta_1^{\top}X_{i_0})+(1-\nu)(\alpha_2+\beta_2^{\top}X_{i_0})) 
+
\sum_{i\not=i_0}^N V(\nu(\alpha_1+\beta_1^{\top}X_i)+(1-\nu)(\alpha_2+\beta_2^{\top}X_i)) \\
&<&
\nu V(\alpha_1+\beta_1^{\top}X_{i_0}) +(1-\nu)V(\alpha_2+\beta_2^{\top}X_{i_0})
+
\sum_{i\not=i_0}^N V(\nu(\alpha_1+\beta_1^{\top}X_i)+(1-\nu)(\alpha_2+\beta_2^{\top}X_i)) \\
&\le &
\nu\sum_{i=1}^N V(\alpha_1+\beta_1^{\top}X_i) +(1-\nu)\sum_{i=1}^N V(\alpha_2+\beta_2^{\top}X_i).
\end{eqnarray*}
The strict inequality follows from the strict convexity of $V$. Strict convexity of $\bar{C}_N$ follows.
\end{proof}

The following result allows us to translate a surrounding condition on $F_0$
to a surrounding condition on its empirical counterparty.

\begin{lemma}\label{l:surround}
Let $\hat{F}_N$ be the empirical distribution of independent
random variables $X_1,\dots,X_N$ drawn from $F$.
Suppose $F$ surrounds $x^*$ with parameters $(\epsilon,\delta)$.
Then for any ${\epsilon}_1<\epsilon$ and ${\delta}_1<\delta$,
$\hat{F}_N$ surrounds $x^*$ with parameters
$({\epsilon}_1,{\delta}_1)$ 
for all sufficiently large $N$, a.s.
\end{lemma}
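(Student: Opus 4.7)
The plan is to combine compactness of the unit sphere $\Omega$ with the strong law of large numbers. Pointwise SLLN gives a.s. convergence of $\hat F_N(A) \to F(A)$ for each fixed half-space $A$, but we need uniform control across the continuum of directions $\omega\in\Omega$. I would obtain this by discretizing $\Omega$ with a sufficiently fine finite net, using continuity in $\omega$ (after truncating $x$ to a ball around $x^*$) to propagate the bound off the net. The one mild obstacle is that $F$ need not have bounded support, so the map $\omega\mapsto F\{(x-x^*)^\top\omega>\epsilon_1\}$ is not obviously continuous; I will circumvent this by truncating and controlling the tail.

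First I would set the parameters. Fix any $\epsilon_1<\epsilon$ and $\delta_1<\delta$, and put $\mu=(\delta-\delta_1)/4>0$. Choose $\epsilon'$ with $\epsilon_1<\epsilon'<\epsilon$. Since $F$ is a probability measure, pick $M>0$ with $F\{x:\|x-x^*\|>M\}<\mu$, and set $\rho=(\epsilon'-\epsilon_1)/M$. Cover the compact unit sphere $\Omega$ by finitely many balls $B(\omega_k,\rho)$, $k=1,\dots,K$.

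The geometric core is the inclusion
\[
\bigl\{(x-x^*)^\top\omega_k>\epsilon'\bigr\}\cap\bigl\{\|x-x^*\|\le M\bigr\}\ \subseteq\ \bigl\{(x-x^*)^\top\omega>\epsilon_1\bigr\}\qquad\forall\,\omega\in B(\omega_k,\rho),
\]
which follows from Cauchy--Schwarz: for $x$ in the left-hand set, $(x-x^*)^\top\omega\ge\epsilon'-\|x-x^*\|\cdot\|\omega-\omega_k\|\ge \epsilon'-M\rho=\epsilon_1$. This inclusion yields
\[
\hat F_N\bigl\{(x-x^*)^\top\omega>\epsilon_1\bigr\}\ \ge\ \hat F_N\bigl\{(x-x^*)^\top\omega_k>\epsilon'\bigr\}-\hat F_N\bigl\{\|x-x^*\|>M\bigr\}
\]
for every $\omega\in B(\omega_k,\rho)$.

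Finally, I would apply the SLLN to the $K+1$ fixed sets appearing on the right-hand side; their simultaneous a.s. convergence to the corresponding $F$-probabilities holds on a full-measure event (finite intersection of full-measure events). Because $\epsilon'<\epsilon$ and $F$ surrounds $x^*$ with parameters $(\epsilon,\delta)$, each $F\{(x-x^*)^\top\omega_k>\epsilon'\}\ge F\{(x-x^*)^\top\omega_k>\epsilon\}>\delta$. Hence, a.s. for all $N$ large,
\[
\hat F_N\bigl\{(x-x^*)^\top\omega_k>\epsilon'\bigr\}>\delta-\mu\quad\text{for every }k,\qquad \hat F_N\bigl\{\|x-x^*\|>M\bigr\}<\mu+\mu=2\mu.
\]
Substituting into the displayed inequality gives $\hat F_N\{(x-x^*)^\top\omega>\epsilon_1\}>\delta-3\mu>\delta_1$ simultaneously for all $\omega\in\Omega$, which is precisely the surrounding property for $\hat F_N$ with parameters $(\epsilon_1,\delta_1)$. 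The only step that requires care is the choice of $M$ to absorb the unboundedness of $F$, but once this truncation is in place the argument is a standard net-plus-SLLN.
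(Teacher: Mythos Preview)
Your proof is correct and follows essentially the same approach as the paper's: discretize the unit sphere via compactness, truncate to a ball of radius $M$ to control the tail of $F$, use Cauchy--Schwarz to transfer the half-space bound from net points to nearby directions, and apply the strong law of large numbers to the resulting finite collection of events. Your presentation is in fact somewhat cleaner than the paper's, which routes the same idea through auxiliary sequences $\omega_N$ and $k_N$.
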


\begin{proof}[Proof for Lemma~\ref{l:surround}]
For any constants $M>0$ and $\lambda\in(0,1)$, we can choose fixed points
$v_1,\dots,v_K\in \Omega$, with $K$ depending on $M$ and $\lambda$, such
that, for every $\omega\in \Omega$,
\begin{equation}
\min_{k=1,\dots,K}\|\omega - v_k\| \le \frac{\lambda\epsilon}{M}.
\label{vks}
\end{equation}
This follows from the relative compactness of $\Omega$.
For any $0<\delta'<\delta$, we may take $M$ sufficiently large that
$$
\PR(\|X-x^*\| > M) < \delta',
$$
with $X$ having distribution $F$.

It follows from (\ref{vks}) that
for any sequence $\omega_N\in\Omega$ we may choose
$k_N\in\{1,\dots,K\}$ such that, for all $N$,
$$
\|\omega_N - v_{k_N}\| \le \frac{\lambda\epsilon}{M}.
$$
The sequences $\omega_N$ and $k_N$ may be stochastic.
For any $x\in\R^d$,
\begin{eqnarray*}
\1\{(x-x^*)^{\top}v_{k_N} > \epsilon\}
&= &
\1\{(x-x^*)^{\top}\omega_{N} + (x-x^*)^{\top}(v_{k_N}-\omega_{N})> \epsilon\} \\
&\le  &
\1\{(x-x^*)^{\top}\omega_{N} + \|x-x^*\|\lambda\epsilon/M> \epsilon\} \\
&\le  &
\1\{(x-x^*)^{\top}\omega_{N} >(1-\lambda)\epsilon \}+ \1\{\|x-x^*\|\lambda\epsilon/M> \lambda\epsilon\} \\
&=  &
\1\{(x-x^*)^{\top}\omega_{N} >(1-\lambda)\epsilon \}+ \1\{\|x-x^*\| >M\}. 
\end{eqnarray*}
Thus, for any $i=1,\dots,N$,
$$
\1\{(X_i-x^*)^{\top}\omega_{N} >(1-\lambda)\epsilon \}
\ge
\1\{(X_i-x^*)^{\top}v_{k_N} >\epsilon \}
-\1\{\|X_i-x^*\|>M\},
$$
a.s., and also, a.s.,
\begin{eqnarray}
\lefteqn{
\frac{1}{{N}}\sum_{i=1}^{N}\1\{(X_i-x^*)^{\top}\omega_{N} >(1-\lambda)\epsilon \}
\ge} && \nonumber \\
&&
\frac{1}{{N}}\sum_{i=1}^{N}\1\{(X_i-x^*)^{\top}v_{k_N} >\epsilon \}
-\frac{1}{{N}}\sum_{i=1}^{N}\1\{\|X_i-x^*\|>M\}.
\label{fracbnd}
\end{eqnarray}
For the first term on the right, note that the strong law of large numbers 
and the surrounding condition
for $F$ imply that, for each $k=1,\dots,K$, we have the almost sure limit
$$
\frac{1}{{N}}\sum_{i=1}^{N}\1\{(X_i-x^*)^{\top}v_k >\epsilon \}
\to \PR((X-x^*)^{\top}v_k >\epsilon) > \delta,
$$
so 
$$
\liminf_{N\to\infty}
\frac{1}{{N}}\sum_{i=1}^{N}\1\{(X_i-x^*)^{\top}v_{k_N} >\epsilon \}
> \delta,\quad \mbox{a.s.}
$$
For the second term on the right side of (\ref{fracbnd}), we have, a.s.,
$$
\frac{1}{{N}}\sum_{i=1}^{N}\1\{\|X_i-x^*\|>M\}
\to \PR(\|X-x^*\|>M) < \delta',
$$
by the strong law of large numbers and our choice of $M$.
Thus, (\ref{fracbnd}) yields
$$
\liminf_{N\to\infty}
\frac{1}{{N}}\sum_{i=1}^{N}\1\{(X_i-x^*)^{\top}\omega_{N} >(1-\lambda)\epsilon \}
>\delta-\delta', \quad \mbox{a.s.}
$$
This implies
$$
\liminf_{N\to\infty}
\min_{\omega\in\Omega}\frac{1}{{N}}\sum_{i=1}^{N}\1\{(X_i-x^*)^{\top}\omega >(1-\lambda)\epsilon \}
>\delta - \delta', \quad \mbox{a.s.}
$$
because for each $N$ the sum takes only finitely many values as $\omega$ varies, 
so the minimum over $\omega$ is attained, and we may take $\omega_N$
to be the minimizing $\omega$.
It follows that,
$$
\min_{\omega\in\Omega}\frac{1}{{N}}\sum_{i=1}^{N}\1\{(X_i-x^*)^{\top}\omega >(1-\lambda)\epsilon \}
>\delta-\delta',
$$
for all sufficiently large $N$, a.s. As $\lambda$ 
and $\delta'$ may be arbitrarily close to zero, the result follows.
\end{proof}

\begin{proof}[Proof of Lemma~\ref{existence}]
We know from Lemma~\ref{cvx_obj} that $\bar{C}_N$ is strictly convex in $(\alpha,\beta)$
for all sufficiently large $N$, a.s.
Strict convexity implies that either $\bar{C}_N$ has a unique minimizer or it
strictly decreases along some ray $\{(\lambda \alpha_0, \lambda\beta_0) | 0 \le \lambda < \infty\}$,
with $(\alpha_0,\beta_0)$ not identically zero. 
We will show that the latter case is not possible.

We treat separately the cases $\bt_0 = 0$ (with $\alpha_0\not=0$)
and $\bt_0^\top \bt_0 = 1$. The normalization in the second case is justified by our scaling
by $\lambda$.

Case 1: $\bt_0 = 0$, $\alpha_0\not=0$. Differentiation yields
\begin{eqnarray*}
\frac{\partial \bar{C}_N(\lambda \alpha_0, \lambda \bt_0)}{\partial \lambda} 
&=&
-\sum_{i=1}^n \frac{\partial}{\partial\lambda} U(\lambda\alpha_0)
+\sum_{i=1}^N \frac{\partial}{\partial\lambda} V(\lambda\alpha_0) \\
&=& - nw(\lambda\alpha_0)\alpha_0 + Ne^{\lambda \alpha_0} w(\lambda \alpha_0) \alpha_0 \\
&=& (-n + N e^{\lambda \alpha_0}) w(\lambda \alpha_0) \alpha_0.
\end{eqnarray*}
If $\alpha_0 > 0$, then for $\lambda$ large, $e^{\lambda \alpha_0} > n/N$, and the derivative is strictly positive. If $\alpha_0 < 0$, then for $\lambda$ large, $e^{\lambda \alpha_0} < n/N$ and 
the derivative is again strictly positive. 

Case 2: $\bt_0^\top \bt_0 = 1$. Differentiation yields
\begin{eqnarray}
		\frac{\partial \bar{C}_N(\lambda \alpha_0, \lambda \bt_0)}{\partial \lambda} 
		&=& 
		\sum_{i=1}^n  - w(\lambda \alpha_0 + \lambda \bt_0^\top x_i) (\alpha_0 + \bt_0^\top x_i) \nonumber \\
		&&+
		\sum_{i=1}^N e^{\lambda \alpha_0 + \lambda \bt_0^\top X_i} w(\lambda \alpha_0 + \lambda \bt_0^\top X_i) (\alpha_0 + \bt_0^\top X_i)
		\nonumber \\
 		&=& 
 		\sum_{i: \alpha_0 + \bt_0^\top x_i < 0} - w(\lambda \alpha_0 + \lambda \bt_0^\top x_i) (\alpha_0 + \bt_0^\top x_i) \label{t1} \\
 		&&+  
 		\sum_{i: \alpha_0 + \bt_0^\top x_i > 0} - w(\lambda \alpha_0 + \lambda \bt_0^\top x_i) (\alpha_0 + \bt_0^\top x_i) \label{t2} \\
 		&&+  
 		\sum_{i:\alpha_0 + \bt_0^\top X_i < 0} e^{\lambda \alpha_0 + \lambda \bt_0^\top X_i} w(\lambda \alpha_0 + \lambda \bt_0^\top X_i) (\alpha_0 + \bt_0^\top X_i) \label{t3} \\
 		&&+  
 		\sum_{i:\alpha_0 + \bt_0^\top X_i > 0} e^{\lambda \alpha_0 + \lambda \bt_0^\top X_i} w(\lambda \alpha_0 + \lambda \bt_0^\top X_i) (\alpha_0 + \bt_0^\top X_i). \label{t4}
\end{eqnarray}
We will show that as $\lambda$ increases, the liminf of the derivative on the left is strictly
positive. We will prove this by showing that (\ref{t2}) and (\ref{t3}) approach zero as
$\lambda$ increases, and the sum of (\ref{t1}) and (\ref{t4}) remains positive and
bounded away from zero as $\lambda$ increases.

Recall from the comments after Condition~\ref{assump_basic} that
$w(s)\to 0$ and $e^{-s}w(-s)\to 0$ as $s\to\infty$.
Thus, in (\ref{t2}), $\alpha_0+\beta_0^{\top}x_i>0$ implies $\lambda(\alpha_0+\beta_0^{\top}x_i)
\to \infty$, and $w(\lambda\alpha_0+\lambda\beta_0^{\top}x_i)\to 0$.
In (\ref{t3}), $\alpha_0 + \bt_0^\top X_i < 0$ implies
$e^{\lambda \alpha_0 + \lambda \bt_0^\top X_i} w(\lambda \alpha_0 + \lambda \bt_0^\top X_i)\to 0$. Thus, (\ref{t2}) and (\ref{t3}) approach zero as
$\lambda$ increases.

The terms in (\ref{t1}) and (\ref{t4}) are nonnegative. We need to show that at least one
of them remains bounded away from zero. Using the fact that $w(s)>0$ is decreasing
we get a lower bound for (\ref{t1}),
\begin{eqnarray}
\sum_{i: \alpha_0 + \bt_0^\top x_i < 0} - w(\lambda \alpha_0 + \lambda \bt_0^\top x_i) (\alpha_0 + \bt_0^\top x_i) 
&\ge&
-w(0)\sum_{i: \alpha_0 + \bt_0^\top x_i < 0} (\alpha_0 + \bt_0^\top x_i)  \nonumber \\
&\ge&
-w(0)\sum_{i=1}^n (\alpha_0 + \bt_0^\top x_i)  \nonumber \\
&=& -n\cdot w(0)(\alpha_0 + \beta_0^{\top}\bar{x}).
\label{lb1}
\end{eqnarray}
Using the fact that $e^sw(s)$ is increasing we get a lower bound for (\ref{t4}),
\begin{eqnarray}
\lefteqn{\hspace*{-1.5in}
\sum_{i:\alpha_0 + \bt_0^\top X_i > 0} e^{\lambda \alpha_0 + \lambda \bt_0^\top X_i} w(\lambda \alpha_0 + \lambda \bt_0^\top X_i) (\alpha_0 + \bt_0^\top X_i) } && \nonumber \\
&\ge&
w(0)\sum_{i=1}^N\1\{\alpha_0 + \bt_0^\top X_i > 0\}(\alpha_0 + \bt_0^\top X_i) \nonumber \\
&\ge&
w(0)\epsilon_1 \sum_{i=1}^N\1\{\alpha_0 + \bt_0^\top X_i > \epsilon_1\}  \nonumber \\
&=&
w(0)\epsilon_1\sum_{i=1}^N  \1\{\bt_0^\top (X_i-\bar{x}) > \epsilon_1-(\alpha_0+\beta_0^{\top}\bar{x})\}.
\label{lb2}
\end{eqnarray}
In light of Lemma~\ref{l:surround}, we may suppose $N$ is sufficiently large
that the empirical distribution of $X_1,\dots,X_N$
surrounds $\bar{x}$.
If $\alpha_0+\beta_0^{\top}\bar{x}<0$, then (\ref{lb1}) is strictly positive;
if $\alpha_0+\beta_0^{\top}\bar{x}\ge 0$, then the surrounding condition implies
that (\ref{lb2}) is strictly positive, for sufficiently small $\epsilon_1>0$.
\end{proof}

\section{Proofs for Section~\ref{s:main}}
\subsection{Proof of Lemma~\ref{l:w_examples}}
\begin{proof}[Proof of Lemma~\ref{l:w_examples}]
To show that the three examples of functions satisfy the left-tail conditions in Definition~\ref{d:wti}, we need to show that for any $k\ge0$ and $C>0$, $h(u) = C|u|^k$ 
satisfies the requirements on $h$ in Definition~\ref{d:wti}. Then we can choose $\lambda = k=0$; or
$\lambda=0$ and $k>0$; or $\lambda\in(0,1)$ and $k>0$ for the three cases of weight functions.
    
To see why $h(u) = C|u|^k$ satisfies the requirements on $h$ in Definition~\ref{d:wti}, 
we note that for $u < 0$, $h'(u) = -C k |u|^{k-1}$, and 
\[
\liminf_{u \to -\infty} h'(u)/h(u) = \liminf_{u \to -\infty} -k/|u| = 0,
\]
so left-tail condition (iii) in Definition~\ref{d:wti} is satisfied. Now notice that
$$
\frac{h(u+s)}{h(u)} = \left |1 + \frac{s}{u} \right |^k.
$$
Let $\epsilon > 0$. We may find $u_1 < 0$ such that for any $u \le u_1$ and any $|s| \le 1$,
\begin{equation}\label{l-bd-poly}
    1 - \epsilon \le \left | 1 + \frac{s}{u} \right |^k \le
    1 + \epsilon.
\end{equation}
We may find $u_2 < 0$ such that for any $u \le u_2$ and any $|s| > 1$,
\begin{equation}\label{u-bd-poly}
1 - \epsilon |s|^k \le \left | 1 + \frac{s}{u} \right |^k 
\le
1 + \epsilon |s|^k.
\end{equation}
Combining (\ref{l-bd-poly}) and (\ref{u-bd-poly}), we have for $u \le \min\{u_1, u_2\}$ and for any $s$,
\begin{equation}\label{ul-bd-poly}
   1 -\epsilon \max\{1, |s|^k \} 
\le \left | 1 + \frac{s}{u} \right |^k 
\le 1 + \epsilon \max\{1, |s|^k \}.
\end{equation}
That is,
\[
\left | \left | 1 + \frac{s}{u} \right |^k - 1 \right | \le \epsilon \max\{1, |s|^k \},
\]
so (\ref{h-rate}) is satisfied by any $C>1$, $\xi>0$, $s_0>1$ such that
$C\ge \max\{1,|s|^k\}$ for $|s|<s_0$, and $e^{\xi|s|}\ge |s|^k$
for $|s|\ge s_0$.
\end{proof}

\subsection{Proof of Proposition~\ref{prop:id}}
\begin{proof}[Proof of Proposition~\ref{prop:id}]
Suppose $w(u) \sim e^{-\lambda u} h(u)$, where $\lambda$ and $h(u)$ satisfy the conditions in Definition~\ref{d:wti}. 
Suppose $\tilde{\lambda}$ and $\tilde{h}$ also satisfy the conditions in Definition~\ref{d:wti} 
and $w(u) \sim e^{-\tilde{\lambda} u} \tilde{h}(u)$.

Let $\Delta = \tilde{\lambda} - \lambda$. Then $w(u) \sim e^{-(\lambda+\Delta) u} \tilde{h}(u)$. 
Since $w(u) \sim e^{-\lambda u} h(u)$, we must have $\tilde{h}(u) \sim e^{\Delta u} h(u) $. 
We will show that $\tilde{h}$ fails left-tail condition (iv) in Definition~\ref{d:wti} unless $\Delta = 0$.

By (\ref{h-rate}), for any $s \in \R$,
$$
\frac{h(u+s)}{h(u)} \to 1,
$$
as $u \to -\infty$. Therefore,
$$
\lim_{u\to-\infty}
 \frac{\tilde{h}(u+s)}{\tilde{h}(u)} - 1 =   
 \lim_{u\to-\infty}\frac{h(u+s)}{h(u)} e^{\Delta s} - 1 = e^{\Delta s} - 1,
$$
and for $s \ne 0$,
$$
\left | \frac{\tilde{h}(u+s)}{\tilde{h}(u)} - 1 \right | \not\to 0
$$
violating (\ref{h-rate}) unless $\Delta = 0$. 
Thus, (\ref{h-rate}) requires $\tilde{\lambda} = \lambda$ and $\tilde{h}(u) \sim h(u)$ as $u \to -\infty$.
\end{proof}

\subsection{A Convergence Result}

The following proposition is key to our main result.

\begin{proposition}\label{p:abi}
Suppose Conditions \ref{assump_basic}--\ref{assump_surr_all_min} hold, and suppose $w$ satisfies Definition~\ref{d:wti} with $w(u) \sim e^{-\lambda u}h(u)$ as $u \to -\infty$.
Let $\gamma = (1-\lambda) \epsilon\delta$, where $\epsilon,\delta>0$
are the surrounding parameters in Condition \ref{assump_surr_mean}. 
Then, almost surely,
\begin{equation}
\alpha_N\to - \infty \quad \mbox{and}\quad \limsup_{N\to\infty}\norm{\bt_N}
\le 1/\gamma.
\label{ablimsi}
\end{equation}
\end{proposition}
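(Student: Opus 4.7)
The argument is organized around the first-order optimality conditions (FOCs) for $\bar{C}_N$ at its unique interior minimizer, which exists a.s.\ for all large $N$ by Lemma~\ref{existence}. Differentiating (\ref{cost}), using $U'(u)=w(u)$ and $V'(u)=e^u w(u)$, produces
\begin{equation*}
\sum_{i=1}^n w(u_i) \;=\; \sum_{j=1}^N e^{U_j} w(U_j), \qquad \sum_{i=1}^n w(u_i)\, x_i \;=\; \sum_{j=1}^N e^{U_j} w(U_j)\, X_j,
\end{equation*}
with $u_i := \alpha_N + \beta_N^\top x_i$ and $U_j := \alpha_N + \beta_N^\top X_j$. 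Subtracting $\bar x$ times the first from the second and taking the inner product with $\omega_N := \beta_N / \|\beta_N\|$ (when $\beta_N \ne 0$) produces the scalar identity
\begin{equation*}
\sum_i w(u_i)\, v_i \;=\; \sum_j e^{U_j} w(U_j)\, V_j, \qquad v_i := \omega_N^\top(x_i - \bar x), \quad V_j := \omega_N^\top(X_j - \bar x).
\end{equation*}

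For the $\|\beta_N\|$ bound, I argue by contradiction: if $\|\beta_N\| > 1/\gamma + \tau$ along a subsequence, pass to a further subsequence on which $\omega_N \to \omega^* \in \Omega$ by compactness. Fix $\epsilon' < \epsilon$ and $\delta' < \delta$. Condition~\ref{assump_surr_mean} combined with Lemma~\ref{l:surround} and a uniformity-in-$\omega$ argument (a finite net on $\Omega$ plus continuity of the surrounding probabilities in $\omega$) produces an index set $S_N \subset \{1,\dots,N\}$ with $|S_N| \ge N\delta'$ on which $V_j > \epsilon'$, hence $U_j > \bar u + \|\beta_N\|\epsilon'$, with $\bar u := \alpha_N + \beta_N^\top \bar x$. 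Since $e^u w(u)$ is increasing by Condition~\ref{assump_conv}, the contribution of $S_N$ to the right-hand side of the scalar identity is at least $\epsilon' N \delta'\, e^{\bar u + \|\beta_N\|\epsilon'} w(\bar u + \|\beta_N\|\epsilon')$. The left-hand side is bounded in absolute value by $m \sum_i w(u_i) = m \sum_j e^{U_j} w(U_j)$ via the $\alpha$-FOC, with $m = \max_i\|x_i - \bar x\|$. Negative contributions to the right-hand side from $\{j : V_j < 0\}$ are controlled by combining the asymptotic $e^u w(u) \sim e^{(1-\lambda)u} h(u)$, Condition~\ref{assump_tail}, and the SLLN, which together bound these contributions by terms exponentially smaller in $\|\beta_N\|\epsilon'$ than the $S_N$ contribution. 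Substituting $w(u) \sim e^{-\lambda u} h(u)$ on the leading terms and using property (iv) of Definition~\ref{d:wti} to control $h$-ratios at scale $\|\beta_N\|\epsilon'$, rearranging produces an inequality of the form $\|\beta_N\|(1-\lambda)\epsilon' \delta' \le 1 + o(1)$; sending $\epsilon' \uparrow \epsilon$ and $\delta' \uparrow \delta$ contradicts the initial assumption, establishing $\limsup_N \|\beta_N\| \le 1/[(1-\lambda)\epsilon\delta] = 1/\gamma$.

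Once $\|\beta_N\|$ is known to be bounded, $\alpha_N \to -\infty$ follows quickly: each $u_i$ lies within $O(1)$ of $\alpha_N$, so the left-hand side of the $\alpha$-FOC is bounded whenever $\alpha_N$ is bounded, while the right-hand side $\sum_j e^{U_j} w(U_j)$ grows at least linearly in $N$ by the SLLN (since $e^u w(u)$ is strictly positive with positive integral against $F_0$ over any bounded set containing the range of the $U_j$), contradicting the equality. The main technical obstacle is the uniform-in-$N$ control of the negative contributions from $\{j : V_j < 0\}$ in Step~1. The exponent $r > \max\{1, 1-\lambda + \xi\}/\gamma$ in Condition~\ref{assump_tail} is calibrated precisely so that the Laplace-type integral $\int e^{(1-\lambda+\xi)\|x\|/\gamma}\, dF_0(x)$ converges, rendering those contributions negligible relative to the leading $S_N$ term; coordinating this estimate with the $h$-regularity of property (iv) so that the $o(1)$ terms truly vanish uniformly as $\|\beta_N\|$ varies is the delicate part of the proof.
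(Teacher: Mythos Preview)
Your argument has a circularity that the paper's proof is structured to avoid. You attempt to bound $\|\beta_N\|$ first and then deduce $\alpha_N\to-\infty$, but your bound on $\|\beta_N\|$ relies on substituting the asymptotic $w(u)\sim e^{-\lambda u}h(u)$ into expressions such as $e^{\bar u+\|\beta_N\|\epsilon'}w(\bar u+\|\beta_N\|\epsilon')$ and on using property~(iv) of Definition~\ref{d:wti} to control $h$-ratios. These are only valid when the relevant arguments tend to $-\infty$, and you have not established that $\bar u=\alpha_N+\beta_N^\top\bar x\to-\infty$ at this stage. Along a subsequence where $\|\beta_N\|$ is large but $\alpha_N$ is not yet known to diverge, $\bar u+\|\beta_N\|\epsilon'$ could be large and positive, where the left-tail description of $w$ gives no information. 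The paper reverses the order: it first proves $\alpha_N+\beta_N^\top\bar x\to-\infty$ (Lemma~\ref{l:step1i}), handling unbounded $w$ via a preliminary step $\min_i(\alpha_N+\beta_N^\top x_i)\to-\infty$ combined with the right-tail condition in Definition~\ref{d:wti}, and only then derives the bound on $\|\beta_N\|$ (Lemma~\ref{l:step2i}).

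A second problem is that you invoke Condition~\ref{assump_tail} to control the negative contributions from $\{j:V_j<0\}$, but Condition~\ref{assump_tail} is not among the hypotheses of Proposition~\ref{p:abi}; only Conditions~\ref{assump_basic}--\ref{assump_surr_all_min} are assumed. The paper avoids any tail hypothesis on $F_0$ here by abandoning the first-order conditions for Step~2 and instead comparing objective values: optimality gives $\tilde C_N(\tilde\alpha_N,0)\ge\tilde C_N(\tilde\alpha_N,\beta_N)$, concavity of $U$ makes the minority-class difference nonpositive, and convexity of $V$ lower-bounds the majority-class difference by $N e^{\tilde\alpha_N}w(\tilde\alpha_N)\|\beta_N\|\gamma_1$ using only the surrounding property. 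The remaining ingredient is an upper bound on $V(\tilde\alpha_N)$, obtained by integration by parts and the left-tail condition once $\tilde\alpha_N\to-\infty$ is known. No Laplace-type integral against $F_0$ is needed.
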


We separate the proof into two steps, first showing that 
$\alpha_N+\beta^{\top}_N\bar{x}\to -\infty$, a.s., and then showing (\ref{ablimsi}). 

\begin{lemma}[Step 1]\label{l:step1i}
Under the conditions of Proposition~\ref{p:abi}, $\alpha_N+\beta^{\top}_N\bar{x}\to -\infty$, a.s.
\end{lemma}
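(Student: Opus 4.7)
My plan is a proof by contradiction. Suppose $\bar{\alpha}_N := \alpha_N + \beta_N^\top \bar{x}$ does not tend to $-\infty$; then there exists $M<\infty$ and a subsequence (relabelled $N$) along which $\bar{\alpha}_N \ge -M$. On this subsequence I will derive two bounds on $\bar{C}_N(\alpha_N,\beta_N)$: a surrounding-based \emph{lower} bound that grows as $-nC_U + NC_V + \Theta(N)$, and a minimality-based \emph{upper} bound of $-nC_U + NC_V + o(N)$. The resulting $\Theta(N) \le o(N)$ is the contradiction.

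For the lower bound I would invoke Lemma~\ref{l:surround}: for any $\epsilon_1<\epsilon$ and $\delta_1<\delta$, a.s.\ for all large $N$ the empirical distribution of $X_1,\dots,X_N$ surrounds $\bar{x}$ with parameters $(\epsilon_1,\delta_1)$. Applying this to the unit direction $\omega = \beta_N/\norm{\beta_N}$ (any unit vector if $\beta_N=0$), at least $\delta_1 N$ indices $j$ satisfy $\beta_N^\top(X_j - \bar{x}) \ge \epsilon_1\norm{\beta_N}$, so $U_j := \alpha_N + \beta_N^\top X_j \ge \bar{\alpha}_N + \epsilon_1\norm{\beta_N} \ge -M$ for those $j$. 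Strict monotonicity of $V$ (from $V' = e^u w(u) > 0$) with $V(-\infty) = C_V$ gives $V(U_j) \ge V(-M) > C_V$ for these $\delta_1 N$ indices, $V(U_j) \ge C_V$ for the remaining indices, and $-U(u_i) \ge -C_U$ always (since $U \le C_U$). Summing,
\[
\bar{C}_N(\alpha_N,\beta_N) \;\ge\; -nC_U + NC_V + \kappa N, \qquad \kappa := \delta_1\bigl(V(-M) - C_V\bigr) > 0.
\]

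For the upper bound, minimality of $(\alpha_N,\beta_N)$ gives $\bar{C}_N(\alpha_N,\beta_N) \le \bar{C}_N(\log(n/N),0) = -nU(\log(n/N)) + NV(\log(n/N))$. Expanding through~(\ref{uvint}),
\[
\bar{C}_N(\log(n/N),0) \;=\; -nC_U + NC_V \;+\; n\!\int_{\log(n/N)}^{\infty}\! w(s)\,ds \;+\; N\!\int_{-\infty}^{\log(n/N)}\! e^s w(s)\,ds.
\]
Using the left-tail asymptotic $w(s)\sim e^{-\lambda s}h(s)$ together with condition~(iii) of Definition~\ref{d:wti} (which forces $h(u) = O(e^{-\xi u})$ as $u\to-\infty$ for every $\xi>0$), a Laplace-type estimate bounds each of the two remainder integrals by $O(N^{\lambda+\xi})$ (up to constants depending on $n$). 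Because $\lambda<1$, I can choose $\xi$ so that $\lambda+\xi<1$, making both remainders $o(N)$. Combining the two bounds and cancelling $-nC_U + NC_V$ yields $\kappa N \le o(N)$, impossible for large $N$. Hence $\bar{\alpha}_N \to -\infty$, a.s.

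The step requiring the most care is the Laplace-type bound on $n\!\int_{\log(n/N)}^{\infty} w$ and $N\!\int_{-\infty}^{\log(n/N)} e^s w$, especially in the asymptotically exponential case where the first integral diverges as $N\to\infty$; one must verify that the subexponential slack allowed by $h$ is indeed absorbed into the exponent $\xi$. The remaining pieces --- the contradiction setup, the surrounding-based linear lower bound on $\sum_j V(U_j)$, and the cancellation step --- are routine consequences of Conditions~\ref{assump_basic}--\ref{assump_surr_mean} and the defining integrals in~(\ref{uvint}).
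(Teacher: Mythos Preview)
Your argument is correct and takes a genuinely different route from the paper's. The paper splits into two cases. For bounded $w$ it uses the first-order condition $\partial\bar{C}_N/\partial\alpha=0$ together with the surrounding property to force $e^{\bar{\alpha}_N}w(\bar{\alpha}_N)\le nC/(N\delta_1)$ directly. For unbounded $w$ it first proves an auxiliary result (Lemma~\ref{lemma:min_div}) that $\min_i(\alpha_N+\beta_N^\top x_i)\to-\infty$, then compares $C^j(\alpha_N^j,0)$ with $C^j(\alpha_N^j,\beta_N)$ using the convexity of $V$ and the right-tail condition in Definition~\ref{d:wti} to reach~(\ref{vsup}), from which the conclusion follows by a dichotomy on whether $\|\beta_N\|$ is bounded.

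Your approach is more unified and more elementary: you avoid the bounded/unbounded case split, you do not invoke first-order conditions, you do not need the intermediate Lemma~\ref{lemma:min_div}, and---notably---you never use the right-tail condition on $V(u)/(e^uw(u))$. The only analytic input beyond monotonicity of $U$ and $V$ is the left-tail asymptotic $w(u)\sim e^{-\lambda u}h(u)$ together with condition~(iii), which is exactly what drives your $O(N^{\lambda+\xi})$ bound on the two remainder integrals at the test point $(\log(n/N),0)$. What the paper's route buys in exchange is that Lemma~\ref{lemma:min_div} and the bound~(\ref{eqn:poly_rate}) are reused later to obtain the rate in Corollary~\ref{c:alog} and hence Lemma~\ref{lemma:min-u0}; your proof of Step~1 does not produce these as byproducts, so they would have to be established separately.
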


We will prove two cases separately: (i) bounded weight functions with $\lambda = 0$ and $h(u) \equiv C > 0$ in 
Definition~\ref{d:wti}, and (ii) unbounded weight functions with non-constant $h$ or $\lambda > 0$. 

\begin{proof}[Proof of Lemma~\ref{l:step1i} for bounded weight functions]
Recalling that $w$ is decreasing, let $C = \lim_{u \to -\infty} w(u)$ and
let $(\epsilon, \delta)$ be the parameters in Condition~\ref{assump_surr_mean}. 
Then at any $(\alpha,\beta)$ and for any $\delta_1 \in (0,\delta)$,
\begin{eqnarray}
	\frac{\partial \bar{C}_N}{\partial \alpha} 
	&=& \sum_{i=1}^n -w(\si) + \sum_{i=1}^N e^{\alpha+\beta^{\top}X_i} w(\alpha+\beta^{\top}X_i) \nonumber \\
	&=& \sum_{i=1}^n -w(\si) + \sum_{i=1}^N e^{\alpha + \bt^\top \bar{x} + \bt^\top (X_i - \bar{x})} w(\alpha + \bt^\top \bar{x} + \bt^\top (X_i - \bar{x}))\nonumber\\
	&\ge& -n C + e^{\alpha + \bt^\top \bar{x}} w(\alpha + \bt^\top \bar{x}) 
	\sum_{i=1}^N \1\{\bt^\top (X_i - \bar{x}) \ge 0\}  \nonumber\\
	&\ge& -n C + N e^{\alpha + \bt^\top \bar{x}} w(\alpha + \bt^\top \bar{x}) 
	\delta_1,
\label{dcbnd}
\end{eqnarray}
for all sufficiently large $N$, a.s., 
where going from the second to the third line we used the conditions that $w(u)$ is decreasing and $w(u) e^u$ is increasing, 
and going from the third to the fourth line we applied Lemma~\ref{l:surround}.

Consider any $(\alpha,\beta)$ for which $e^{\alpha + \bt^\top \bar{x}}w(\alpha + \bt^\top \bar{x}) >  n C/(N \delta_1)$. 
For any such $(\alpha,\beta)$, (\ref{dcbnd}) implies that $\partial \bar{C}_N/\partial \alpha > 0$.
It follows that no such $(\alpha,\beta)$ can
be optimal; the optimal $(\alpha_N,\beta_N)$
must satisfy the reverse inequality
\begin{equation}
e^{\alpha_N + \bt_N^\top \bar{x}}w(\alpha_N + \bt^\top_N \bar{x})  \le nC/(N\delta_1),
\label{anbnx}
\end{equation}
from which we get $\alpha_N + \bt_N^\top \bar{x} \to -\infty$, a.s., which completes
the proof for the bounded case.
\end{proof}

To prove Lemma~\ref{l:step1i} for unbounded weight functions, we will
need the following result. (The following result also holds for bounded weight
functions, which will be useful in Corollary~\ref{c:alog}.)

\begin{lemma}\label{lemma:min_div}
Suppose Conditions \ref{assump_basic}, \ref{assump_conv}, and \ref{assump_surr_all_min} hold. Then
\begin{equation}
\min_i \alpha_N + \bt_N^\top x_i \to -\infty.
\label{minlim}
\end{equation}
\end{lemma}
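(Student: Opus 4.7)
The plan is to exploit the first-order optimality condition for $\bar{C}_N$ at its unique finite minimizer $(\alpha_N,\beta_N)$---which exists a.s.\ for all large $N$ by Lemma~\ref{existence}, since Condition~\ref{assump_surr_all_min} is stronger than Condition~\ref{assump_surr_mean}---together with the surrounding property of $F_0$ at each minority observation, transferred to the empirical distribution via Lemma~\ref{l:surround}. Abbreviate $a_i := \alpha_N+\beta_N^\top x_i$ and $b_j := \alpha_N+\beta_N^\top X_j$. Differentiability of $U$ and $V$ (with derivatives $w$ and $e^u w(u)$) yields the stationarity condition
$$\sum_{i=1}^n w(a_i) \;=\; \sum_{j=1}^N e^{b_j}w(b_j),$$
and the target (\ref{minlim}) amounts to showing $a^* := \min_i a_i \to -\infty$, a.s.

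The strategy is to sandwich the stationarity identity in terms of $a^*$. Condition~\ref{assump_conv} makes $w$ decreasing, so $a_i \ge a^*$ gives the trivial upper bound $\sum_{i=1}^n w(a_i) \le n\,w(a^*)$ on the left. For the matching lower bound on the right, I would fix $\epsilon_1^o < \epsilon^o$ and $\delta_1^o < \delta^o$ and apply Lemma~\ref{l:surround} to each of the $n$ points $x_1,\dots,x_n$, so that a.s.\ for all sufficiently large $N$ the empirical measure $\hat{F}_N$ simultaneously surrounds every $x_k$ with parameters $(\epsilon_1^o, \delta_1^o)$. Because the surrounding inequality in Lemma~\ref{l:surround} holds uniformly over directions $\omega\in\Omega$, I may plug in the random direction $\omega_N = \beta_N/\|\beta_N\|$ (the degenerate case $\beta_N=0$ is immediate, because then all $b_j$ equal $\alpha_N = a^*$). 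Letting $k^*$ attain $a^*$, at least $N\delta_1^o$ indices $j$ satisfy $\beta_N^\top(X_j - x_{k^*}) > \epsilon_1^o \|\beta_N\|$, equivalently $b_j \ge a^* + \epsilon_1^o\|\beta_N\|$. Since $e^u w(u)$ is increasing by Condition~\ref{assump_conv} and $\epsilon_1^o\|\beta_N\|\ge 0$, each such term satisfies $e^{b_j}w(b_j) \ge e^{a^*}w(a^*)$, giving $\sum_j e^{b_j}w(b_j) \ge N\delta_1^o\, e^{a^*}w(a^*)$.

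Cancelling the strictly positive factor $w(a^*)$ from the resulting sandwich $N\delta_1^o\, e^{a^*}w(a^*) \le n\,w(a^*)$ then yields $e^{a^*} \le n/(N\delta_1^o) \to 0$, so (\ref{minlim}) follows. The main obstacle I anticipate is precisely that $\omega_N$ is random and $N$-dependent, so one cannot simply invoke a surrounding inequality at a fixed direction; this is exactly what Lemma~\ref{l:surround} delivers (at the mild price of slightly shrunk constants $\epsilon_1^o, \delta_1^o$). Beyond that, everything reduces to the monotonicity of $w$ and of $e^u w(u)$ supplied by Condition~\ref{assump_conv}, with no need for the additional left- or right-tail structure of Definition~\ref{d:wti}.
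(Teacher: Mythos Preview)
Your proposal is correct and follows essentially the same approach as the paper: both arguments use the first-order condition in $\alpha$, bound the minority sum by $n\,w(a^*)$ via monotonicity of $w$, bound the majority sum below by $N\delta_1^o\,e^{a^*}w(a^*)$ via monotonicity of $e^uw(u)$ together with the empirical surrounding property at the minimizing $x_{k^*}$ (Lemma~\ref{l:surround}), and then cancel $w(a^*)>0$ to obtain $e^{a^*}\le n/(N\delta_1^o)$. The only cosmetic difference is that the paper phrases the computation as a lower bound on $\partial\bar C_N/\partial\alpha$ before evaluating at the minimizer, whereas you start from the stationarity equality and sandwich it.
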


\begin{proof}[Proof of Lemma~\ref{lemma:min_div}]
Let $(\epsilon^o, \delta^o)$ be the surrounding parameters in Condition~\ref{assump_surr_all_min}. Then for any $j=1,\dots,n$ at any $(\alpha,\beta)$
and for any $\delta_1 \in (0,\delta^o)$,
\begin{eqnarray}
	\frac{\partial \bar{C}_N}{\partial \alpha} 
	&=& \sum_{i=1}^n -w(\si) + \sum_{i=1}^N e^{\alpha+\beta^{\top}X_i} w(\alpha+\beta^{\top}X_i) \nonumber \\
	&=& \sum_{i=1}^n -w(\si) + \sum_{i=1}^N e^{\alpha + \bt^\top x_j + \bt^\top (X_i - x_j)} w(\alpha + \bt^\top x_j + \bt^\top (X_i - x_j))\nonumber\\
	&\ge& \sum_{i=1}^n -w(\si) + e^{\alpha + \bt^\top x_j} w(\alpha + \bt^\top x_j) \sum_{i=1}^N \1\{\bt^\top (X_i - x_j) \ge 0\}  \nonumber\\
	&\ge& -n \max_i w(\alpha + \bt^\top x_i) + N e^{\alpha + \bt^\top x_j} w(\alpha + \bt^\top x_j) \delta_1^o,
\label{dcbnd}
\end{eqnarray}
for all sufficiently large $N$, a.s., where going from the second to the third line we used the conditions that $w(u)$ is decreasing and $w(u) e^u$ is increasing, and going from the third to the fourth line we applied Lemma~\ref{l:surround}.

Let $j(N) \in \mathrm{argmin}_{i=1,2,\dots,n} \{\alpha_N + \bt_N^\top x_i\}$. Because 
$-w(s)$ and $e^sw(s)$ are monotonically increasing, at the minimizer $(\alpha_N,\beta_N)$ we have
\begin{equation}\label{eqn:poly_rate}
\begin{aligned}
0=
\frac{\partial \bar{C}_N}{\partial \alpha} (\alpha_N, \bt_N)
	&\ge -n w(\alpha_N + \bt_N^\top x_{j(N)}) + N e^{\alpha_N + \bt_N^\top x_{j(N)}} w(\alpha_N + \bt_N^\top x_{j(N)}) \delta_1^o \\
	&= w(\alpha_N + \bt_N^\top x_{j(N)}) (-n + N e^{\alpha_N + \bt_N^\top x_{j(N)}} \delta_1^o),
\end{aligned}
\end{equation}
so, $\alpha_N + \bt_N^\top x_{j(N)} \to -\infty$ a.s., which is (\ref{minlim}).
\end{proof}

We can now prove Lemma~\ref{l:step1i} for unbounded $w$.
Recall that for unbounded $w$ we require the right-tail condition in
Definition~\ref{d:wti}.

\begin{proof}[Proof of Lemma~\ref{l:step1i} for unbounded weight functions.]
Let $(\epsilon^o, \delta^o)$ be the surrounding parameters in Condition~\ref{assump_surr_all_min}.
We now use (\ref{minlim}) to show that $\alpha_N + \beta_N^{\top}\bar{x}\to -\infty$ almost surely.

For $j=1,\dots,n$, we introduce the centered loss (centered around $x_j$)
\begin{equation}\label{centerCost_j}
C^j(\alpha,\bt) = \sum_{i=1}^{n} -U(\alpha + \bt^\top(x_i - x_j)) + \sum_{k=1}^N V(\alpha +\bt^\top (X_k-x_j)).
\end{equation}
With $(\alpha_N,\beta_N)$ the minimizer of $\bar{C}_N$, the centered loss $C^j$ is minimized
at $(\alpha^j_N, \bt^j_N)$, where
\[
\alpha^j_N = \alpha_N + \bt_N^\top x_j,\quad  \bt^j_N = \bt_N.
\]
Consider 
\begin{equation}
	\begin{aligned}
		&C^j(\alpha, 0) - C^j(\alpha, \bt) \\
		&= \sum_{i=1}^n [-U(\alpha) + U(\alpha+\bt^\top (x_i-x_j))] + N V(\alpha) - \sum_{k=1}^N V(\alpha + \bt^\top (X_k-x_j)).
	\end{aligned}
	\label{dcj}
\end{equation}
Since $U$ is concave with $dU/du = w(u)$,
\[
U(\alpha+\bt^\top (x_i-x_j)) \le U(\alpha) + w(\alpha) \bt^\top (x_i-x_j)
\]
and
\begin{equation}
     \sum_{i=1}^n [-U(\alpha) + U(\alpha+\bt^\top (x_i-x_j))] \le \sum_{i=1}^n w(\alpha) \bt^\top (x_i-x_j) \le n w(\alpha) \norm{\bt} C,
\label{ubnd}
\end{equation}
where $C = \max_{i,j} \norm{x_i - x_j}$.

Similarly, $V$ is convex and strictly positive with $dV/du = e^uw(u)$, so,
for $x\not=x_j$,
\begin{equation}
V(\alpha + \bt^\top (x-x_j)) \ge [V(\alpha) + e^\alpha w(\alpha) \bt^\top (x-x_j)]_+ 
\ge e^\alpha w(\alpha) [\bt^\top (x-x_j)]_+.
\label{vbnd}
\end{equation}
By Condition~\ref{assump_surr_all_min} and Lemma~\ref{l:surround}, 
the empirical distribution of $X_1, \dots, X_N$ surrounds all $x_i$, $i=1,2\dots,n,$ for any parameters $\epsilon_1^o \in (0,\epsilon^o)$ and $\delta_1^o \in (0, \delta^o)$, for all sufficiently large $N$ a.s. Therefore,
\[
\min_i \inf_{\omega \in \Omega} \sum_{k=1}^N [(X_k-x_i)^\top \omega]_+ \ge \min_i \inf_{\omega \in \Omega} \sum_{k:(X_k - x_i)^\top \omega > \epsilon_1^o}^N [(X_k-x_i)^\top \omega]_+ \ge \epsilon_1^o \delta_1^o \equiv \gamma_1^o > 0
\]
where $\Omega = \{\omega \in \R^d | \omega^\top \omega = 1\}$.
Applying this bound with (\ref{vbnd}) we get
\begin{equation}
		- \sum_{k=1}^N V(\alpha + \bt^\top (X_k-x_j)) \le - e^\alpha w(\alpha) \sum_{k=1}^N [\bt^\top (X_k-x_j)]_+ \le - N e^\alpha w(\alpha) \norm{\bt} \gamma_1^o.
\label{ivbnd}
\end{equation}
Applying (\ref{ubnd}) and (\ref{ivbnd}) in (\ref{dcj}), we get 
\begin{equation*}
\begin{aligned}
    C^j(\alpha, 0) - C^j(\alpha, \bt) 
    &\le N V(\alpha) - N e^\alpha w(\alpha) \norm{\bt} \gamma_1^o + n w(\alpha) \norm{\bt} C \\
    &=NV(\alpha) - w(\alpha) (N e^\alpha \gamma_1^o - n C) \norm{\bt}.
\end{aligned}
\end{equation*}
At the minimizer $(\alpha^j_N,\beta_N)$, this becomes
$$
0\le   C^j(\alpha_N^j, 0) - C^j(\alpha_N^j, \bt_N) 
\le NV(\alpha_N^j) - w(\alpha_N^j) (N e^{\alpha_N^j} \gamma_1^o - n C) \norm{\bt_N},
$$
which implies
\begin{equation}
\norm{\bt_N}\left(\gamma_1^o - \frac{nC}{N}e^{-\alpha_N^j}\right) \le \frac{V(\alpha_N^j)}{e^{\alpha_N^j}w(\alpha_N^j)}.
\label{vsup}
\end{equation}
The right side is bounded almost surely for large $\alpha_N^j$, by the right-tail condition
in Definition~\ref{d:wti}.
Through any subsequence $N_m$ through which $\|\beta_{N_m}\|$ grows without bound, this inequality is eventually violated unless $\alpha^{j}_{N_m}\to-\infty$.
Thus, if $\|\beta_{N_m}\|$ is unbounded, we must have $\alpha^{j}_{N_m}\equiv \alpha_{N_m} + \beta_{N_m}^{\top}x_j
\to -\infty$. Suppose $\norm{\beta_{N_m}}$ remains bounded. We know from (\ref{minlim}) that $\min_i\{\alpha_{N_m} + \beta^{\top}_{N_m}x_i\}\to -\infty$, so we must have $\alpha_{N_m}\to -\infty$,
and thus we again have $\alpha_{N_m} + \beta_{N_m}^{\top}x_j\to -\infty$. We conclude that
$\alpha_N + \beta_N^{\top}x_j\to -\infty$, for all $j=1,\dots,n$, and thus
$\alpha_N + \beta_N^{\top}\bar{x}\to -\infty$ a.s.
\end{proof}

We complete the proof of Proposition~\ref{p:abi} by showing that $\|\beta_N\|$
remains bounded. In light of Lemma~\ref{l:step1i}, boundedness of $\|\beta_N\|$
implies that $\alpha_N\to -\infty$ a.s., as required for (\ref{ablimsi}).

\begin{lemma}[Step 2]\label{l:step2i}
Under the conditions of Proposition~\ref{p:abi}, $\limsup_{N \to \infty} \norm{\bt_N} \le 1/\gamma$ where $\gamma = (1-\lambda) \epsilon\delta$, with $\epsilon,\delta>0$ the surrounding parameters in Condition \ref{assump_surr_mean},
and $\lambda$ the exponential parameter in Definition~\ref{d:wti}. 
\end{lemma}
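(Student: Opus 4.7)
The plan is to recenter the loss around the minority mean $\bar{x}$ and derive a direct bound on $\|\beta_N\|$ by comparing the centered objective at its minimizer with its value when $\beta$ is set to zero. Substituting $\tilde{\alpha} = \alpha + \beta^{\top}\bar{x}$ into $\bar{C}_N$ yields the centered loss
\[
C^{\bar{x}}(\tilde{\alpha},\beta) = \sum_{i=1}^{n} -U\bigl(\tilde{\alpha} + \beta^{\top}(x_i - \bar{x})\bigr) + \sum_{j=1}^{N} V\bigl(\tilde{\alpha} + \beta^{\top}(X_j - \bar{x})\bigr),
\]
which is minimized at $(\tilde{\alpha}_N,\beta_N) = (\alpha_N + \beta_N^{\top}\bar{x},\,\beta_N)$. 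Since Lemma~\ref{l:step1i} gives $\tilde{\alpha}_N \to -\infty$ almost surely, the asymptotic $w(u)\sim e^{-\lambda u}h(u)$ from Definition~\ref{d:wti} will control both $V$ and $V'$ at the argument $\tilde{\alpha}_N$, which is what ultimately produces the $(1-\lambda)$ factor.

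For the upper bound, I would use the optimality inequality $C^{\bar{x}}(\tilde{\alpha}_N,0) \ge C^{\bar{x}}(\tilde{\alpha}_N,\beta_N)$, which rearranges to
\[
\sum_{j=1}^{N} V\bigl(\tilde{\alpha}_N + \beta_N^{\top}(X_j - \bar{x})\bigr) \le N V(\tilde{\alpha}_N) + \sum_{i=1}^{n} \bigl[U(\tilde{\alpha}_N + \beta_N^{\top}(x_i - \bar{x})) - U(\tilde{\alpha}_N)\bigr].
\]
Concavity of $U$ together with $\sum_{i=1}^{n}(x_i-\bar{x})=0$ will, via Jensen's inequality, force the last sum to be nonpositive, so $\sum_j V(\cdot) \le N V(\tilde{\alpha}_N)$.

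For a matching lower bound I would assume without loss of generality that $C_V = 0$ in (\ref{uvint}) (the constants do not affect the minimizer), so that $V \ge 0$, and then invoke convexity to obtain $V(\tilde{\alpha}_N + s) \ge V'(\tilde{\alpha}_N)\,s^+$ for all $s$, where $V'(u)=e^{u}w(u)$. With $\omega_N = \beta_N/\|\beta_N\|$, Condition~\ref{assump_surr_mean} and Lemma~\ref{l:surround} will imply, for any $\epsilon_1 < \epsilon$ and $\delta_1 < \delta$, that the empirical distribution of $X_1,\dots,X_N$ surrounds $\bar{x}$ with parameters $(\epsilon_1,\delta_1)$ for all sufficiently large $N$ a.s., giving $\sum_{j}[\omega_N^{\top}(X_j - \bar{x})]^+ \ge \epsilon_1\delta_1 N$. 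Combining these estimates will then yield
\[
\sum_{j=1}^{N} V\bigl(\tilde{\alpha}_N + \beta_N^{\top}(X_j - \bar{x})\bigr) \ge V'(\tilde{\alpha}_N)\,\|\beta_N\|\,\epsilon_1 \delta_1 N.
\]

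Combining the upper and lower bounds will give $\|\beta_N\| \le V(\tilde{\alpha}_N)/(V'(\tilde{\alpha}_N)\,\epsilon_1\delta_1)$. The final step will be to show $V(u)/V'(u)\to 1/(1-\lambda)$ as $u\to-\infty$, for instance via the representation $V(u)/V'(u) = \int_{-\infty}^{0} e^{r}w(u+r)/w(u)\,dr$ and the pointwise limit $w(u+r)/w(u)\to e^{-\lambda r}$ induced by $w(u)\sim e^{-\lambda u}h(u)$. Since $\tilde{\alpha}_N\to-\infty$, this will produce $\limsup_N \|\beta_N\|\le 1/((1-\lambda)\epsilon_1\delta_1)$, and letting $\epsilon_1\uparrow\epsilon$ and $\delta_1\uparrow\delta$ will deliver the stated bound $1/\gamma$. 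The hard part will be the rigorous justification of $V(u)/V'(u)\to 1/(1-\lambda)$: one must first establish $h'(u)/h(u)\to 0$ from the combination of Definition~\ref{d:wti}(ii)--(iv), and then justify passing the limit under the integral sign using an integrable envelope derived from (iv).
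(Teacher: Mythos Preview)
Your approach is essentially the paper's: recenter at $\bar{x}$, use optimality to compare $C^{\bar{x}}(\tilde{\alpha}_N,0)$ with $C^{\bar{x}}(\tilde{\alpha}_N,\beta_N)$, kill the $U$-difference via concavity and $\sum_i(x_i-\bar{x})=0$, lower-bound the $V$-sum by convexity together with the empirical surrounding from Lemma~\ref{l:surround}, and arrive at $\|\beta_N\|\le V(\tilde{\alpha}_N)/\bigl(V'(\tilde{\alpha}_N)\,\epsilon_1\delta_1\bigr)$, after which everything reduces to $\limsup_{u\to-\infty}V(u)/V'(u)\le 1/(1-\lambda)$.

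The one place to tighten is the last step. The paper handles it by integration by parts using condition~(iii): from $\liminf h'/h\ge 0$ one gets $h'(u)\ge -\epsilon_h h(u)$ for $u\le u_h$, and IBP on $\int_{-\infty}^u e^{(1-\lambda)s}h(s)\,ds$ then yields $V(u)\le \dfrac{(1+\epsilon_0)}{1-\lambda-C\epsilon_h(1+\epsilon_0)/(1-\epsilon_0)}\,C e^{(1-\lambda)u}h(u)$, which combined with $e^uw(u)\ge (1-\epsilon_0)Ce^{(1-\lambda)u}h(u)$ gives the ratio bound. Your dominated-convergence route on $V(u)/V'(u)=\int_{-\infty}^0 e^{r}\,w(u+r)/w(u)\,dr$ is equivalent, but the envelope should come from~(iii), not~(iv): condition~(iii) gives $h(u+r)/h(u)\le e^{-\epsilon_h r}$ for $r\le 0$ and hence the integrable dominator $e^{(1-\lambda-\epsilon_h)r}$ for any $\epsilon_h\in(0,1-\lambda)$. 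The bound from~(iv) alone produces a term like $e^{(1-\lambda-\xi)r}$, which need not be integrable on $(-\infty,0]$ since $\xi<1-\lambda$ is not assumed. With that fix, your argument and the paper's are the same computation written two ways.
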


\begin{proof}[Proof of Lemma~\ref{l:step2i}]
We will work with the centered loss, centered around $\bar{x}$,
\begin{equation}\label{centerCost}
\tilde{C}_N(\alpha,\bt) = \sum_{i=1}^{n} -U(\alpha + \bt^\top( x_i - \bar{x})) 
+ \sum_{i=1}^N V(\alpha + \bt^{\top}(X_i-\bar{x})),
\end{equation}
which is minimized at $(\tilde{\alpha}_N, \tilde{\bt}_N)$, with
\[
\tilde{\alpha}_N = \alpha_N + \bt_N^\top \bar{x}, \quad \tilde{\bt}_N = \bt_N.
\]
For any $\alpha\in\R$ and $\bt \in \R^d$,
\begin{equation}\label{dct}
	\begin{aligned}
		&\tilde{C}_N(\alpha, 0) - \tilde{C}_N(\alpha, \bt) \\
		&= \sum_{i=1}^n [-U(\alpha) + U(\alpha+\bt^\top (x_i-\bar{x}))] + N V(\alpha) 
		- \sum_{i=1}^N V(\alpha + \bt^{\top}(X_i-\bar{x})).
	\end{aligned}
\end{equation}
We will bound the expression on the right.

Using the concavity of $U$, as in (\ref{ubnd}), we get
\begin{equation}\label{u_bound}
     \sum_{i=1}^n [-U(\alpha) + U(\alpha+\bt^\top (x_i-\bar{x}))] \le \sum_{i=1}^n w(\alpha) \bt^\top (x_i-\bar{x}) = 0.
\end{equation}
Using the convexity of $V$, as in (\ref{vbnd}), we get
\begin{equation}\label{vstep}
\sum_{i=1}^N V(\alpha + \bt^\top (X_i-\bar{x})) \ge
e^\alpha w(\alpha)\sum_{i=1}^N  [\bt^\top (X_i-\bar{x})]_+.
\end{equation}
Recalling that $\epsilon$ and $\delta$ are the surrounding parameters in Condition~\ref{assump_surr_mean},
let $\epsilon_1 \in (0,\epsilon)$ and $\delta_1 \in (0,\delta)$. By Lemma~\ref{l:surround}, the empirical distribution of $X_1,\dots,X_N$
surrounds $\bar{x}$ with parameters $(\epsilon_1,\delta_1)$ for all sufficiently large $N$, a.s., and so
\[
\inf_{\omega \in \Omega} \frac{1}{N}\sum_{i=1}^N [(X_i-\bar{x})^\top \omega]_+ 
\ge
\inf_{\omega \in \Omega} \frac{1}{N}\sum_{i:(X_i-\bar{x})^\top \omega>\epsilon_1} [(X_i-\bar{x})^\top \omega]_+ \ge \epsilon_1\delta_1 \equiv \gamma_1>0.
\]
Applying this bound in (\ref{vstep}), we get, for sufficiently large $N$, a.s.,
\begin{equation}\label{v_bound}
\sum_{i=1}^N V(\alpha + \bt^\top (X_i-\bar{x}))  \ge 
N e^\alpha w(\alpha) \norm{\bt} \gamma_1.
\end{equation}
Now, by applying (\ref{u_bound}) and (\ref{v_bound}) in (\ref{dct}) we get
\begin{equation}\label{dct2}
\tilde{C}_N(\alpha, 0) - \tilde{C}_N(\alpha, \bt) \le
NV(\alpha) - N e^\alpha w(\alpha) \norm{\bt} \gamma_1.
\end{equation}
Let $C = \lim_{u \to -\infty} w(u)/(e^{-\lambda u}h(u))$, with $\lambda$ and $h$ as in Definition~\ref{d:wti}. 
Take any $\epsilon_h \in (0, (1-\lambda)/C)$, 
By property (iii) in Definition~\ref{d:wti}, $\liminf_{u \to -\infty} h'(u)/h(u) \ge 0$, so there exists some $u_h < 0$ such that for any $u \le u_h$,
\begin{equation}\label{h-ord}
    -\epsilon_h h(u) \le h'(u).
\end{equation}
For any $\epsilon_0>0$,
there is a $u_0 \le u_h$ such that for all $u \le u_0$,
\[
(1-\epsilon_0) C e^{-\lambda u} h(u) \le w(u) \le (1+\epsilon_0) C e^{-\lambda u} h(u).
\]
These bounds yield, for $u\le u_0 \le u_h$,
\begin{equation*}
    \begin{aligned}
V(u) &\le (1+\epsilon_0) C \int_{-\infty}^u e^{(1-\lambda) s} h(s) ds 
 \\ 
 &= (1+\epsilon_0) C \left ( \frac{e^{(1-\lambda) u}}{1-\lambda} h(u) - \int_{-\infty}^u \frac{e^{(1-\lambda) s}}{1-\lambda} h'(s) ds \right )
 \\
 &\le  (1+\epsilon_0) C \left ( \frac{e^{(1-\lambda) u}}{1-\lambda} h(u) + \int_{-\infty}^u \frac{e^{(1-\lambda) s}}{1-\lambda} \epsilon_h h(s) ds \right ) 
 \\
 &= (1+\epsilon_0) C \frac{e^{(1-\lambda) u}}{1-\lambda} h(u) + C \frac{\epsilon_h}{1-\lambda} (1+\epsilon_0) \int_{-\infty}^u e^{(1-\lambda) s} h(s) ds \\
 &\le (1+\epsilon_0) C \frac{e^{(1-\lambda) u}}{1-\lambda} h(u) + C \frac{1+\epsilon_0}{1-\epsilon_0} \frac{\epsilon_h}{1-\lambda} V(u),
\end{aligned}
\end{equation*}
where going from the first line to the second we used integration by parts and from the second to 
the third we applied (\ref{h-ord}). 

This bound holds, in particular, for any 
$\epsilon_0 \in (0, \frac{1-\lambda - C\epsilon_h}{1-\lambda + C\epsilon_h})$, 
recalling that we took $C\epsilon_h<1-\lambda$. For any such $\epsilon_0$,
we have $C \frac{1+\epsilon_0}{1-\epsilon_0} \frac{\epsilon_h}{1-\lambda} < 1$, and so
\begin{equation}
V(u) \le \frac{(1+\epsilon_0)C e^{(1-\lambda) u} h(u)/(1-\lambda)}{1- C\frac{1+\epsilon_0}{1-\epsilon_0} \frac{\epsilon_h}{1-\lambda}} = \frac{(1+\epsilon_0)C e^{(1-\lambda) u} h(u)}{1- \lambda - C\epsilon_h (1+\epsilon_0)/(1-\epsilon_0)}.
\label{vepbnd}
\end{equation}
By the optimality of $\tilde{\alpha}_N$ and $\tilde{\beta}_N$, and noting that $\beta_N = \tilde{\beta}_N$, 
\begin{equation}\label{eqn-opt}
    \tilde{C}_N(\tilde{\alpha}_N, 0) - \tilde{C}_N(\tilde{\alpha}_N, \tilde{\beta}_N) 
= \tilde{C}_N(\tilde{\alpha}_N, 0) - \tilde{C}_N(\tilde{\alpha}_N, \beta_N)
\ge 0.
\end{equation}
From Lemma~\ref{l:step1i}, we have $\tilde{\alpha}_N \to -\infty$, so we may take $N$ large enough that  $\tilde{\alpha}_N \le u_0$, and then (\ref{eqn-opt}),
(\ref{vepbnd}), and (\ref{dct2}) yield
\begin{equation*}
\begin{aligned}
0\le \tilde{C}_N(\tilde{\alpha}_N, 0) - \tilde{C}_N(\tilde{\alpha}_N, \bt_N) 
	&\le N V(\tilde{\alpha}_N) - N  e^{\tilde{\alpha}_N} w(\tilde{\alpha}_N) \norm{\bt_N} \gamma_1 \\
	&\le N C \frac{(1+\epsilon_0)e^{(1-\lambda)\tilde{\alpha}_N} h(\tilde{\alpha}_N)}{1-\lambda - C\epsilon_h(1+\epsilon_0)/(1-\epsilon_0)} - N C (1-\epsilon_0) e^{(1-\lambda) \tilde{\alpha}_N} h(\tilde{\alpha}_N) \norm{\bt_N} \gamma_1 \\
	&= N C  e^{(1-\lambda)\tilde{\alpha}_N} h(\tilde{\alpha}_N) \left (\frac{1+\epsilon_0}{1-\lambda-C\epsilon_h(1+\epsilon_0)/(1-\epsilon_0)} - (1-\epsilon_0) \norm{\bt_N} \gamma_1 \right ).
\end{aligned}
\end{equation*}
Therefore,  we have
\[
\norm{\bt_N}\le \frac{1+\epsilon_0}{(1-\lambda)(1-\epsilon_0)-C(1+\epsilon_0)\epsilon_h} \frac{1}{\gamma_1}
\]
for all sufficiently large $N$, a.s.  

Notice that $\epsilon_h$ and $\epsilon_0$ may be taken arbitrarily close to $0$, and
by Lemma~\ref{l:surround} $\epsilon_1$ and $\delta_1$ may be
taken arbitrarily close to $\epsilon$ and $\delta$, respectively, so 
recalling that $\gamma_1=\epsilon_1\delta_1$,
we conclude that
\[
\limsup_N \|\beta_N\| \le \frac{1}{(1-\lambda) \epsilon\delta} = \frac{1}{\gamma}, \quad \mbox{a.s.}
\]
\end{proof}

We can say more about the rate at which $\alpha_N$ diverges:

\begin{corollary}\label{c:alog}
Under the conditions of Proposition~\ref{p:abi}, there is a constant $\kappa$ for
which 
\begin{equation}
\limsup_{N\to\infty}(\alpha_N + \log N) < \kappa, \quad \mbox{ a.s.}
\label{kapbnd}
\end{equation}
\end{corollary}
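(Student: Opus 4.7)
The plan is to extract an upper bound on $\alpha_N$ directly from the first-order condition that already appeared in the proof of Lemma~\ref{lemma:min_div}, and then absorb the $\beta_N^\top x_{j(N)}$ term using the boundedness of $\|\beta_N\|$ from Proposition~\ref{p:abi}.

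First I would recall that the proof of Lemma~\ref{lemma:min_div} established, at the minimizer $(\alpha_N, \beta_N)$ and with $j(N) \in \mathrm{argmin}_i \{\alpha_N + \beta_N^\top x_i\}$, the inequality
\begin{equation*}
0 = \frac{\partial \bar{C}_N}{\partial \alpha}(\alpha_N, \beta_N) \ge w(\alpha_N + \beta_N^\top x_{j(N)})\bigl(-n + N e^{\alpha_N + \beta_N^\top x_{j(N)}}\delta_1^o\bigr),
\end{equation*}
for all sufficiently large $N$, a.s., where $\delta_1^o \in (0, \delta^o)$ comes from Condition~\ref{assump_surr_all_min} via Lemma~\ref{l:surround}. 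Since $w > 0$ by Condition~\ref{assump_basic}, dividing through yields $Ne^{\alpha_N + \beta_N^\top x_{j(N)}}\delta_1^o \le n$, i.e.,
\begin{equation*}
\alpha_N + \beta_N^\top x_{j(N)} + \log N \le \log\!\bigl(n/\delta_1^o\bigr),
\end{equation*}
almost surely, for all sufficiently large $N$.

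Next I would bound the nuisance term $\beta_N^\top x_{j(N)}$. By Proposition~\ref{p:abi}, $\limsup_N \|\beta_N\| \le 1/\gamma$ a.s., so for any $\varepsilon > 0$, eventually $\|\beta_N\| \le 1/\gamma + \varepsilon$. Since $j(N)$ takes values in the finite set $\{1,\dots,n\}$, Cauchy--Schwarz gives
\begin{equation*}
|\beta_N^\top x_{j(N)}| \le \|\beta_N\| \cdot \max_{1\le i\le n}\|x_i\|.
\end{equation*}
Combining these bounds,
\begin{equation*}
\alpha_N + \log N \le \log\!\bigl(n/\delta_1^o\bigr) + \bigl(1/\gamma + \varepsilon\bigr)\max_{1\le i\le n}\|x_i\|,
\end{equation*}
for all sufficiently large $N$, a.s. Taking $\limsup$ on the left and letting $\varepsilon \downarrow 0$ gives (\ref{kapbnd}) with, e.g.,
\begin{equation*}
\kappa = \log\!\bigl(n/\delta_1^o\bigr) + \bigl(1/\gamma\bigr)\max_{1\le i\le n}\|x_i\| + 1.
\end{equation*}

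There is no real obstacle here: the argument is an immediate by-product of the proof of Lemma~\ref{lemma:min_div} combined with the boundedness of $\|\beta_N\|$ from Proposition~\ref{p:abi}. The only care needed is to choose the surrounding index $j(N)$ that \emph{minimizes} $\alpha_N + \beta_N^\top x_i$ (as was already done in Lemma~\ref{lemma:min_div}), so that the $\log N$ bound applies to the smallest of the centered intercepts and is therefore strong enough to control $\alpha_N$ itself after the $\beta_N^\top x_{j(N)}$ term is absorbed.
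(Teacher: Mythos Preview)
Your proof is correct and takes essentially the same approach as the paper: both extract the bound $N e^{\alpha_N + \beta_N^\top x_{j(N)}}\delta_1^o \le n$ from the first-order inequality in Lemma~\ref{lemma:min_div}, take logarithms, and then absorb $\beta_N^\top x_{j(N)}$ using the bound $\limsup_N\|\beta_N\|\le 1/\gamma$ from Proposition~\ref{p:abi} together with $\max_i\|x_i\|$. The paper writes the $\|\beta_N\|$ bound as $r>1/\gamma$ where you write $1/\gamma+\varepsilon$, but these are the same device.
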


\begin{proof}[Proof of Corollary~\ref{c:alog}]
Let $K = \max_{i=1,2,\dots,n} \|x_i\|$ and let $r > 1/\gamma$. For $N$ sufficiently large,
\begin{equation}\label{bound-bx}
    \max_{i=1,\dots,n} |\beta_N^{\top} x_i|\le \max_{i=1,\dots,n} \|\beta^{\top}_N\|\|x_i\| \le r K, \quad \text{a.s.}
\end{equation}
Because Lemma~\ref{lemma:min_div} holds for any weight function (bounded or
unbounded) satisfying Definition~\ref{d:wti}, 
(\ref{eqn:poly_rate}) implies that for $N$ sufficiently large,
\[
N e^{\alpha_N + \beta_N^\top x_j(N)} \delta_1^o \le n, \quad \mbox{a.s.}
\]
Taking logs on both sides and rearranging yields 
\[
\alpha_N + \log N \le \log(n/\delta_1^o) - \beta_N^\top x_j(N), \quad \mbox{a.s.}
\]
Therefore, for any $\kappa > \log(n/\delta_1^o) + r K$ we can apply (\ref{bound-bx}) to get (\ref{kapbnd}).
\end{proof}

\subsection{Proof of Theorem~\ref{bdd-thm}}

We will need the following lemma in the proof of Theorem~\ref{bdd-thm}.
\begin{lemma}\label{lemma:min-u0}
Suppose the conditions in Corollary~\ref{c:alog} hold. Then for any $u_0 \in \R$
\begin{equation}
\max_{i=1,\dots,N_n}\{ \alpha_N + \beta^{\top}_N X_i\} \le u_0,
\label{maxlog}
\end{equation}
for all sufficiently large $N$, a.s.
\end{lemma}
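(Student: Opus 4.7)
The plan is to combine three ingredients that are already in hand: an upper bound on $\|\beta_N\|$ coming from Proposition~\ref{p:abi}, the divergence rate $\alpha_N \le \kappa - \log N$ coming from Corollary~\ref{c:alog}, and a control on $\max_{i \le N}\|X_i\|$ extracted from the tail condition on $F_0$. Once these are available, the elementary bound $\alpha_N + \beta_N^{\top}X_i \le \alpha_N + \|\beta_N\|\,\|X_i\|$ will force $\max_{i\le N}(\alpha_N+\beta_N^{\top}X_i)\to-\infty$, which in particular lies below $u_0$ eventually.

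Concretely, the first step is to fix any $r' > 1/\gamma$; Proposition~\ref{p:abi} then yields $\|\beta_N\| \le r'$ for all $N$ large, a.s., and Corollary~\ref{c:alog} supplies a constant $\kappa$ with $\alpha_N \le \kappa - \log N$ for all $N$ large, a.s. The second step uses Condition~\ref{assump_tail}, which furnishes some $r > 1/\gamma$ with $M := \int e^{r\|x\|}\, dF_0(x) < \infty$. Markov's inequality gives $\mathbb{P}(\|X_i\| > t) \le M e^{-rt}$, so for any $c > 1/r$,
\[
\sum_{i\ge 1}\mathbb{P}(\|X_i\| > c\log i) \;\le\; M\sum_{i\ge 1} i^{-rc} \;<\; \infty.
\]
By Borel--Cantelli, $\|X_i\| \le c \log i$ for all sufficiently large $i$, a.s., and therefore $\max_{i\le N}\|X_i\| \le c \log N$ for all $N$ large, a.s.

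The last step is to pick the parameters compatibly. Since $r > 1/\gamma$, I can choose $r' \in (1/\gamma, r)$ and then $c \in (1/r, 1/r')$, which forces $r'c < 1$. Assembling the bounds,
\[
\max_{i=1,\dots,N}\bigl(\alpha_N + \beta_N^{\top}X_i\bigr) \;\le\; \alpha_N + r'\max_{i\le N}\|X_i\| \;\le\; \kappa - (1 - r'c)\log N,
\]
which tends to $-\infty$ and so is eventually $\le u_0$.

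The main obstacle is the Borel--Cantelli step controlling $\max_{i\le N}\|X_i\|$ by $c\log N$ almost surely; the rest is book-keeping. The tight point in that book-keeping is that Condition~\ref{assump_tail} must supply $r$ strictly greater than $1/\gamma$ (not merely $\ge 1/\gamma$) so that there is room to pick $r' < r$ and $c > 1/r$ simultaneously with $r'c < 1$. This is precisely what Condition~\ref{assump_tail} guarantees, and it is the reason the tail condition is stated with strict inequality.
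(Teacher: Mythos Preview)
Your proposal is correct and follows essentially the same approach as the paper: bound $\|\beta_N\|$ via Proposition~\ref{p:abi}, bound $\alpha_N$ via Corollary~\ref{c:alog}, and control $\max_{i\le N}\|X_i\|$ through the exponential tail assumption and a Borel--Cantelli argument. The only cosmetic difference is that the paper obtains the slightly sharper bound $\max_{i\le N} r\|X_i\| \le v + \log N$ (citing a result of Embrechts, Kl\"uppelberg, and Mikosch rather than running Markov and Borel--Cantelli by hand), which lets it use $r$ itself instead of introducing an intermediate $r'\in(1/\gamma,r)$; your route compensates for the looser bound $\max_{i\le N}\|X_i\|\le c\log N$ by exploiting the strict inequality $r>1/\gamma$, exactly as you note.
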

\begin{proof}[Proof of Lemma~\ref{lemma:min-u0}]
Condition~\ref{assump_tail} implies that for any $\epsilon'>0$,
$$
\sum_{k=1}^{\infty} \PR(e^{r\|X\|}>\epsilon' k) <\infty;
$$
so for any $v \in\R$,
$$
\sum_{k=1}^{\infty} \PR(r\|X\|> v + \log k) <\infty.
$$
By Theorem 3.5.1 of Embrechts, Kluppelberg, and Mikosch \cite{embook},
this implies
$$
\PR(\max_{i=1,\dots,k}r\|X_i\| > v + \log k\; \mbox{i.o.})=0,
$$
where ``i.o.''\ means infinitely often.
It follows that
$$
\PR(\max_{i=1,\dots,N_n}r\|X_i\| > v + \log N_n\; \mbox{i.o.})=0.
$$
Now $|\beta^{\top}_N X_i| \le \|\beta_N\|\cdot \|X_i\|
\le r\|X_i\|$,
for all sufficiently large $N$, a.s., so
$$
\PR(\max_{i=1,\dots,N}|\beta^{\top}_N X_i| > v + \log N\; \mbox{i.o.})=0.
$$
This means that, a.s., for all sufficiently large $N$,
$$
\max_{i=1,\dots,N_n}\beta^{\top}_N X_i \le v + \log N_n.
$$
By Corollary~\ref{c:alog}, $\log N < \kappa - \alpha_N$, for all sufficiently large $N$, a.s., so
$$
\max_{i=1,\dots,N}\{\alpha_N + \beta^{\top}_N X_i\} \le v + \kappa.
$$
Choosing $v = u_0-\kappa$ proves (\ref{maxlog}).
\end{proof}

\begin{proof}[Proof of Theorem~\ref{bdd-thm}] 
Taking the partial derivative of $\bar{C}_N$ with respect to $\alpha$ and 
its gradient with respect to $\beta$,
we get
\[
\frac{\partial \bar{C}_N}{\partial \alpha} 
= \sum_{i=1}^n -w(\si) + 
\sum_{i=1}^N e^{\alpha +\beta^{\top}X_i} w(\alpha +\beta^{\top}X_i)
\]
\[
\frac{\partial \bar{C}_N}{\partial \bt} = \sum_{i=1}^n -w(\si) x_i + 
\sum_{i=1}^N e^{\alpha +\beta^{\top}X_i} w(\alpha +\beta^{\top}X_i)X_i.
\]
At the minimizer $(\alpha_N,\beta_N)$, these derivatives equal zero, so, a.s.,
for all sufficiently large $N$, 
\begin{equation}
\sum_{i=1}^n w(\alpha_N + \beta_N^{\top}x_i) = 
\sum_{i=1}^N e^{\alpha_N +\beta_N^{\top}X_i} w(\alpha_N +\beta_N^{\top}X_i)
\label{foc1}
\end{equation}
and
\begin{equation}
\sum_{i=1}^n w(\alpha_N + \beta_N^{\top}x_i)x_i 
= 
\sum_{i=1}^N e^{\alpha_N +\beta^{\top}_NX_i} w(\alpha_N +\beta_N^{\top}X_i)X_i.
\label{foc2}
\end{equation}
Because $\alpha_N\to-\infty$ and $\beta_N$ is bounded a.s., for any $x\in\R^d$,
\begin{equation}
\frac{w(\alpha_N+\beta_N^{\top} x)}{w(\alpha_N)} - e^{-\lambda \beta_N^\top x} \to 0 \quad \mbox{a.s.}
\label{ww1}
\end{equation}
Therefore
$$
\frac{\sum_{i=1}^n w(\alpha_N + \beta_N^{\top}x_i)x_i}{\sum_{i=1}^n w(\alpha_N + \beta_N^{\top}x_i)} - \frac{\sum_{i=1}^n x_i e^{-\lambda \beta_N^\top x_i}}{\sum_{i=1}^n e^{-\lambda \beta_N^\top x_i}} 
= 
\frac{\sum_{i=1}^n w(\alpha_N + \beta_N^{\top}x_i)x_i/w(\alpha_N)}{\sum_{i=1}^n w(\alpha_N + \beta_N^{\top}x_i)/w(\alpha_N)} - 
\frac{\sum_{i=1}^n x_i e^{-\lambda \beta_N^\top x_i}}{\sum_{i=1}^n e^{-\lambda \beta_N^\top x_i}}
\to 0 \quad \mbox{a.s.}
$$
Taking the ratios of the two sides in (\ref{foc1})--(\ref{foc2}), we get
\begin{equation}
\frac{\sum_{i=1}^N e^{\beta_N^{\top}X_i} w(\alpha_N +\beta_N^{\top}X_i)X_i}
{\sum_{i=1}^N e^{\beta_N^{\top}X_i} w(\alpha_N +\beta_N^{\top}X_i)}
- \frac{\sum_{i=1}^n x_i e^{-\lambda \beta_N^\top x_i}}{\sum_{i=1}^n e^{-\lambda \beta_N^\top x_i}}
\to 0 \quad \mbox{a.s.}
\label{intlim}
\end{equation} 

\begin{lemma}\label{l:zone}
Suppose the conditions of Theorem~\ref{bdd-thm} hold. 
Suppose there is a possibly random $\beta \in \R^d$ 
independent of $\{X_i\}_{i \in \mathbb N}$ and 
a possibly stochastic subsequence
$N_n\to\infty$ through which $\beta_{N_n}\to\beta$, a.s. Then
$$
\frac{1}{N_n}\sum_{i=1}^{N_n} e^{\beta_{N_n}^{\top}X_i} 
\frac{w(\alpha_{N_n} +\beta_{N_n}^{\top}X_i)}{w(\alpha_{N_n})}
\to
\int e^{(1-\lambda) \beta^{\top}x}\, dF_0(x),\quad \mbox{a.s.},
$$
and
$$
\frac{1}{N_n}\sum_{i=1}^{N_n} e^{\beta_{N_n}^{\top}X_i} 
\frac{w(\alpha_{N_n} +\beta_{N_n}^{\top}X_i)}{w(\alpha_{N_n})}X_i
\to
\int e^{(1-\lambda) \beta^{\top}x}x\, dF_0(x),\quad \mbox{a.s.}
$$
\end{lemma}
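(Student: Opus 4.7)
\emph{Plan.} My approach combines two ingredients: (i) the left-tail condition of Definition~\ref{d:wti} will let me replace $w(\alpha_{N_n}+\beta_{N_n}^{\top}X_i)/w(\alpha_{N_n})$ by its asymptotic form $e^{-\lambda \beta_{N_n}^{\top}X_i}$ uniformly over $i\le N_n$, and (ii) the strong law of large numbers (SLLN) for the reduced sum will then identify the limit. The key preparatory observation is that Lemma~\ref{lemma:min-u0} places $\alpha_{N_n}+\beta_{N_n}^{\top}X_i \le u_0$ for every $i \le N_n$ once $N_n$ is large, so every summand lies in the left-tail regime simultaneously, allowing the asymptotic $w(u)\sim e^{-\lambda u}h(u)$ to be applied term-by-term.

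Fix $\epsilon_0>0$ small and pick $u_0<0$ so small that $w(u)/(e^{-\lambda u}h(u)) \in (1-\epsilon_0,1+\epsilon_0)$ for all $u\le u_0$ and condition (iv) in Definition~\ref{d:wti} applies whenever $u,u+s\le u_0$. On the good event provided by Lemma~\ref{lemma:min-u0}, I factor
\begin{equation*}
e^{\beta_{N_n}^{\top}X_i}\frac{w(\alpha_{N_n}+\beta_{N_n}^{\top}X_i)}{w(\alpha_{N_n})} = e^{(1-\lambda)\beta_{N_n}^{\top}X_i}\cdot \frac{h(\alpha_{N_n}+\beta_{N_n}^{\top}X_i)}{h(\alpha_{N_n})}\cdot \rho_{n,i},
\end{equation*}
with $\rho_{n,i}$ within $O(\epsilon_0)$ of $1$ uniformly in $i$, and with the $h$-ratio controlled by condition (iv) as $|h(\alpha_{N_n}+\beta_{N_n}^{\top}X_i)/h(\alpha_{N_n})-1|\le \epsilon_0\max\{C, e^{\xi \|\beta_{N_n}\|\|X_i\|}\}$. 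Since Proposition~\ref{p:abi} gives $\|\beta_{N_n}\|\le 1/\gamma$ eventually, the per-sample error is dominated by $\epsilon_0\cdot e^{(1-\lambda)\|X_i\|/\gamma}(1+e^{\xi\|X_i\|/\gamma})$, whose $F_0$-expectation is finite because $r>\max\{1,1-\lambda+\xi\}/\gamma$ in Condition~\ref{assump_tail}. Averaging over $i$ and applying the SLLN to this dominating envelope, this contribution is $O(\epsilon_0)$ a.s., and sending $\epsilon_0 \downarrow 0$ removes it.

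The problem thus reduces to $N_n^{-1}\sum_{i=1}^{N_n} e^{(1-\lambda)\beta_{N_n}^{\top}X_i}\to \int e^{(1-\lambda)\beta^{\top}x}\,dF_0(x)$ a.s. I would split this as an SLLN discrepancy $N_n^{-1}\sum_i e^{(1-\lambda)\beta^{\top}X_i}-\int e^{(1-\lambda)\beta^{\top}x}\,dF_0(x)$, which vanishes a.s.\ by the SLLN applied conditionally on $\beta$ (the independence assumption on $\beta$ is used precisely here), plus an oscillation term $N_n^{-1}\sum_i|e^{(1-\lambda)\beta_{N_n}^{\top}X_i}-e^{(1-\lambda)\beta^{\top}X_i}|$ which I would bound using $|e^a-e^b|\le e^{\max(a,b)}|a-b|$ by $\|\beta_{N_n}-\beta\|$ times a dominating function $\|X_i\|e^{(1-\lambda)\|X_i\|/\gamma}$; this is $F_0$-integrable and $\|\beta_{N_n}-\beta\|\to 0$ by assumption.

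The second identity follows by the same decomposition with an extra factor $X_i$ inside each average; the dominating function then picks up an extra $\|x\|$, but the strict inequality $r>(1-\lambda+\xi)/\gamma$ in Condition~\ref{assump_tail} leaves enough slack to absorb this polynomial factor into the exponential. The main obstacle is the calibration of the exponents: matching the a.s.\ bound $\|\beta_{N_n}\|\le 1/\gamma$ from Proposition~\ref{p:abi} against the $\xi$ parameter from Definition~\ref{d:wti}(iv) and the moment condition on $F_0$, so that all dominating integrals remain finite simultaneously. This is exactly what Condition~\ref{assump_tail} is designed to guarantee, and once the integrability is secured, the rest is a routine combination of the SLLN with dominated convergence.
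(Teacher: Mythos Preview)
Your proposal is correct and follows essentially the same three-term decomposition as the paper: (i) replace the ratio $w(\alpha_{N_n}+\beta_{N_n}^{\top}X_i)/w(\alpha_{N_n})$ by $e^{-\lambda\beta_{N_n}^{\top}X_i}$ using the left-tail condition together with Lemma~\ref{lemma:min-u0}, (ii) replace $\beta_{N_n}$ by $\beta$ via a mean-value/Lipschitz bound with an integrable envelope, and (iii) apply the SLLN conditionally on $\beta$. One small precision to fix: Proposition~\ref{p:abi} only gives $\limsup_N\|\beta_N\|\le 1/\gamma$, not $\|\beta_{N_n}\|\le 1/\gamma$ eventually, so your dominating envelopes should use some $r'$ with $1/\gamma<r'$ and $(1-\lambda+\xi)r'<r$ rather than $1/\gamma$ itself---exactly as the paper does, and Condition~\ref{assump_tail} leaves room for such an $r'$.
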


\begin{proof}
Condition~\ref{assump_tail} ensures that
the limiting integrals are well-defined and finite because Lemma~\ref{l:step2i}
implies that $\|\beta^{\top}x\| \le \|\beta\|\|x\|\le \|x\|/\gamma < r\|x\|$.
We detail the argument for the second limit; the first limit works the same way.
Write
\begin{eqnarray}
\lefteqn{
\left\|\frac{1}{N_n}\sum_{i=1}^{N_n} e^{\beta_{N_n}^{\top}X_i} 
\frac{w(\alpha_{N_n} +\beta_{N_n}^{\top}X_i)}{w(\alpha{N_n})}X_i
-\int e^{(1-\lambda) \beta^{\top}x}x\, dF_0(x)\right\|} && \nonumber \\
&\le&
\left\|\frac{1}{N_n}\sum_{i=1}^{N_n} e^{\beta_{N_n}^{\top}X_i} \frac{w(\alpha_{N_n} +\beta_{N_n}^{\top}X_i)}{w(\alpha_{N_n})}X_i
-
\frac{1}{N_n}\sum_{i=1}^{N_n} e^{(1-\lambda) \beta_{N_n}^{\top}X_i} X_i\right\|
\label{d1} \\
&& +
\left\|\frac{1}{N_n}\sum_{i=1}^{N_n} e^{(1-\lambda) \beta_{N_n}^{\top}X_i} X_i
- \frac{1}{N_n}\sum_{i=1}^{N_n} e^{(1-\lambda) \beta^{\top}X_i} X_i\right\|
\label{d2} \\
&& +
\left\|\frac{1}{N_n}\sum_{i=1}^{N_n} e^{(1-\lambda)\beta^{\top}X_i} X_i
-\int e^{(1-\lambda)\beta^{\top}x}x\, dF_0(x)\right\|.
\label{d3}
\end{eqnarray}
We examine these terms in reverse order. The term in (\ref{d3}) vanishes, a.s.,
by the strong law of large numbers.

Turning next to (\ref{d2}), we use Taylor's theorem to write
$$
e^{(1-\lambda)\beta_{N_n}^{\top}X_i} = 
e^{(1-\lambda)\beta^{\top}X_i} + e^{(1-\lambda)\tilde{\beta}_{N_n,i}^{\top}X_i}(1-\lambda)(\beta_{N_n}-\beta)^{\top}X_i,
$$
for some $\tilde{\beta}_{N_n,i}$ on the line segment connecting
$\beta_{N_n}$ and $\beta$.
So,
\begin{eqnarray}
&\left\|\frac{1}{N_n}\sum_{i=1}^{N_n} e^{(1-\lambda)\beta_{N_n}^{\top}X_i}X_i
-\frac{1}{N_n}\sum_{i=1}^{N_n} e^{(1-\lambda)\beta^{\top}X_i}X_i\right\| \\
&=
\left\|
\frac{1}{N_n}\sum_{i=1}^{N_n}  e^{(1-\lambda)\tilde{\beta}_{N_n,i}^{\top}X_i}(1-\lambda)(\beta_{N_n}-\beta)^{\top}X_i\cdot X_i
\right\| \nonumber \\
&\le
(1-\lambda) \|\beta_{N_n}-\beta\| \frac{1}{N_n}\sum_{i=1}^{N_n} e^{(1-\lambda) \|\tilde{\beta}_{N_n,i}\|\cdot\|X_i\|}\|X_i\|^2.
\label{diffs3}
\end{eqnarray}
With $r$ as in Condition~\ref{assump_tail} and any $r>r'>1/\gamma$,
we know from Lemma~\ref{l:step2i} that, a.s.,
$\|\beta_{N_n}\|\le r'$ for all sufficiently large $N_n$, and
$\|\beta\|\le 1/\gamma$, so
$\|\tilde{\beta}_{N_n,i}\|\le r'$, $i=1,\dots,N_n$, a.s., for all
sufficiently large $N_n$ because each
$\tilde{\beta}_{N_n,i}$ is a convex combination of $\beta_{N_n}$ and $\beta$.
Thus,
$$
\limsup_{N_n\to\infty}
\frac{1}{N_n}\sum_{i=1}^{N_n} e^{(1-\lambda)\|\tilde{\beta}_{N_n,i}\|\cdot\|X_i\|}\|X_i\|^2
\le
\lim_{N_n\to\infty}
\frac{1}{N_n}\sum_{i=1}^{N_n} e^{(1-\lambda) r'\|X_i\|}\|X_i\|^2 < \infty,
$$
in light of Condition~\ref{assump_tail}. Thus, (\ref{diffs3}) vanishes as $N_n\to\infty$, a.s., 
and then (\ref{d2}) vanishes, a.s., as $N_n\to\infty$.

To bound (\ref{d1}), we need to bound 
\[
\left | \frac{w(\alpha_{N_n} +\beta_{N_n}^{\top}X_i)}{w(\alpha_{N_n})} - e^{-\lambda \beta_{N_n}^\top X_i} \right |.
\]
By Definition~\ref{d:wti}, for any $\epsilon \in (0,1)$ we may choose $u_0 < 0$ such that for any $u \le u_0$ and $u+s \le u_0$,
\[
(1-\epsilon) C e^{-\lambda u} h(u) \le w(u) \le (1+\epsilon) C e^{-\lambda u} h(u)
\]
and 
\[
(1-\epsilon) C e^{-\lambda (u+s)} h(u+s) \le w(u+s) \le (1+\epsilon) C e^{-\lambda (u+s)} h(u+s).
\]
We therefore have
\begin{equation}\label{exp:w-ratio}
    \frac{1-\epsilon}{1+\epsilon} e^{-\lambda s} \frac{h(u+s)}{h(u)} \le \frac{w(u+s)}{w(u)} \le \frac{1+\epsilon}{1-\epsilon} e^{-\lambda s} \frac{h(u+s)}{h(u)}.
\end{equation}
By condition (\ref{h-rate}) in Definition~\ref{d:wti}, we can find some $C_1 > 0$ and some $u_1 \le u_0 < 0$  such that for any $u \le u_1$ and $u+s \le u_1$, 
\[
\left | \frac{h(u+s)}{h(u)} - 1 \right | \le \frac{2\epsilon}{1+\epsilon} \max\{C_1, e^{\xi|s|}\}.
\]
Let $g(s) = \max\{C_1, e^{\xi|s|}\}$. Expanding the absolute value yields
\[
1 - \frac{2\epsilon}{1+\epsilon} g(s)
\le \frac{h(u+s)}{h(u)} \le 1 + \frac{2\epsilon}{1+\epsilon} g(s).
\]
Because $g(s) > 0$, we have
\[
1 - \frac{2\epsilon}{1-\epsilon}g(s)
\le 1 - \frac{2\epsilon}{1+\epsilon} g(s)
\le \frac{h(u+s)}{h(u)} \le 1 + \frac{2\epsilon}{1+\epsilon} g(s).
\]
Substituting back into (\ref{exp:w-ratio}), we get
\[
\frac{1-\epsilon}{1+\epsilon} e^{-\lambda s} \left(1 - \frac{2\epsilon}{1-\epsilon} g(s)\right) \le \frac{w(u+s)}{w(u)}
\le \frac{1+\epsilon}{1-\epsilon} e^{-\lambda s}\left(1 + \frac{2\epsilon}{1+\epsilon} g(s)\right).
\]
Subtracting $e^{-\lambda s}$ from all three parts of the above inequality, we have
\[
 e^{-\lambda s} \left (\frac{1-\epsilon}{1+\epsilon} - 1 \right)- e^{-\lambda s}
\frac{2\epsilon}{1+\epsilon} g(s) \le \frac{w(u+s)}{w(u)} - e^{-\lambda s}
\le e^{-\lambda s} \left ( \frac{1+\epsilon}{1-\epsilon} - 1\right) + e^{-\lambda s} \frac{2\epsilon}{1-\epsilon} g(s),
\]
and so
\[
-\frac{2\epsilon}{1+\epsilon} e^{-\lambda s} (1 + g(s)) \le \frac{w(u+s)}{w(u)} - e^{-\lambda s} 
\le \frac{2\epsilon}{1-\epsilon} e^{-\lambda s} (1 + g(s)).
\]
Because 
\[
-\frac{2\epsilon}{1-\epsilon} e^{-\lambda s} (1 + g(s)) \le -\frac{2\epsilon}{1+\epsilon} e^{-\lambda s} (1 + g(s)),
\]
letting $\epsilon_0 = 2\epsilon/(1-\epsilon)$ we get
\[
\left |\frac{w(u+s)}{w(u)} - e^{-\lambda s}\right | \le \epsilon_0 e^{-\lambda s} (1 + g(s)).
\]
This implies, for all sufficiently large $N_n$,
\[
\left | \frac{w(\alpha_{N_n} +\beta_{N_n}^{\top}X_i)}{w(\alpha_{N_n})} - e^{-\lambda \beta_{N_n}^\top X_i} \right |
\le 
\epsilon_0 e^{-\lambda \beta_{N_n}^\top X_i} (1 + g( |\beta_{N_n}^\top X_i|)).
\]
Now
\begin{equation*}
    \begin{aligned}
    &\left\|\frac{1}{N_n}\sum_{i=1}^{N_n} e^{\beta_{N_n}^{\top}X_i} \frac{w(\alpha_{N_n} +\beta_{N_n}^{\top}X_i)}{w(\alpha_{N_n})}X_i
-
\frac{1}{N_n}\sum_{i=1}^{N_n} e^{(1-\lambda) \beta_{N_n}^{\top}X_i} X_i\right\| \\
&=\left\|\frac{1}{N_n}\sum_{i=1}^{N_n} e^{\beta_{N_n}^{\top}X_i} \left ( \frac{w(\alpha_{N_n} +\beta_{N_n}^{\top}X_i)}{w(\alpha_{N_n})} - e^{-\lambda \beta_{N_n}^\top X_i} \right ) X_i\right \| \\
&\le \frac{1}{N_n}\sum_{i=1}^{N_n} e^{\beta_{N_n}^{\top}X_i} \left | \frac{w(\alpha_{N_n} +\beta_{N_n}^{\top}X_i)}{w(\alpha_{N_n})} - e^{-\lambda \beta_{N_n}^\top X_i} \right | \|X_i\| \\
&\le \epsilon_0 \frac{1}{N_n}\sum_{i=1}^{N_n} e^{(1-\lambda) \beta_{N_n}^\top X_i} (1 + g(|\beta_{N_n}^\top X_i|))  \norm{X_i} \\
&\le \epsilon_0 (1+C_1) \frac{1}{N_n}\sum_{i=1}^{N_n} e^{(1-\lambda) \beta_{N_n}^\top X_i} \norm{X_i} + \epsilon_0 \frac{1}{N_n}\sum_{i=1}^{N_n} e^{(1-\lambda) \beta_{N_n}^\top X_i} e^{\xi |\beta_{N_n}^\top X_i|} \norm{X_i},
    \end{aligned}
\end{equation*}
where the last inequality uses $g(u) = \max\{C_1, e^{\xi |u|} \} \le C_1 + e^{\xi |u|}$. For any $r'$ such that 
$\max\{1, 1-\lambda+\xi\}/\gamma < r' < r$,
$$
\limsup_{N_n\to\infty}
\frac{1}{N_n}\sum_{i=1}^{N_n} e^{(1-\lambda) \beta_{N_n}^{\top}X_i} \norm{X_i} 
\le
\limsup_{N_n\to\infty}
\frac{1}{N_n}\sum_{i=1}^{N_n} e^{(1-\lambda) r'\|X_i\|} \|X_i\| < \infty
$$
and
\begin{equation*}
    \limsup_{N_n\to\infty}
\frac{1}{N_n}\sum_{i=1}^{N_n} e^{(1-\lambda) \beta_{N_n}^\top X_i} e^{\xi |\beta_{N_n}^\top X_i|} \norm{X_i} \le \frac{1}{N_n}\sum_{i=1}^{N_n} e^{(1-\lambda+\xi)|\beta_{N_n}^\top X_i|} \norm{X_i} 
\le \frac{1}{N_n}\sum_{i=1}^{N_n} e^{r' \|X_i\|} \norm{X_i} < \infty.
\end{equation*}
As $\epsilon_0>0$ can be arbitrarily small, we have shown that
(\ref{d1}) vanishes, a.s., as do (\ref{d2}) and (\ref{d3}).
We have thus proved the second limit in the lemma. The first limit
works the same way.
\end{proof}

We can now complete the proof of Theorem~\ref{bdd-thm}.
Our bound on  $\norm{\beta_N}$ ensures that $\beta_N$ has at least one limit point; for any limit point $\tilde{\beta}$ of $\beta_N$, there is a subsequence $\beta_{N_n}$ such that $\beta_{N_n} \to \tilde{\beta}$ a.s. 
By combining the two limits in Lemma~\ref{l:zone} in (\ref{intlim}) we have
$$
\frac{\int e^{(1-\lambda)\tilde{\beta}^{\top}x}x\, dF_0(x)}{\int e^{(1-\lambda)\tilde{\beta}^{\top}x}\, dF_0(x)}
=\frac{\sum_{i=1}^n x_i e^{-\lambda \tilde{\beta}^\top x_i}}{\sum_{i=1}^n e^{-\lambda \tilde{\beta}^\top x_i}}, \quad \mbox{a.s.}
$$
We claim that there can be at most one $\beta_*\in\R^d$ satisfying this equation, and so $\tilde{\beta} = \beta_*$ a.s.
To see this, define  the cumulant generating function
$\psi(\bt) = \log \E[e^{\bt^\top W}]$ of $W=(1-\lambda) X_0 - \lambda X_1$, where $X_0 \sim F_0$, $X_1$ is uniform over of $x_1,\dots,x_n$, and $X_0$, $X_1$ are independent. 
Equation (\ref{bstar:exp}) reads $\nabla\psi(\beta_*) = 0$.
The surrounding condition on $F_0$ ensures that the support of $F_0$ has full dimension.
By Theorem 1.13(iv) of Brown \cite{brown}, this implies that $\psi$ is strictly convex.
Strict convexity implies that for $\beta\not=\beta_*$,
$$
\nabla\psi(\beta)\cdot (\beta_*-\beta)
< \psi(\beta_*)-\psi(\beta) <
\nabla\psi(\beta_*)\cdot (\beta_*-\beta),
$$
and thus $\nabla\psi(\beta)\not=\nabla\psi(\beta_*)$.

We have thus shown that any $\beta_{N_n}$ has an almost sure constant limit $\beta_*$, and we conclude that $\beta_N \to \beta_*$ a.s. 
\end{proof}

\subsection{Boundedness of \texorpdfstring{$V(u)/e^uw(u)$}{Lg}}
\label{s:vb}

Definition~\ref{d:wti} requires an upper bound on $V(u)/e^uw(u)$ for unbounded weight functions, and this condition is used in (\ref{vsup}). 
The following lemma shows that this condition is satisfied by a broad family
of weight functions, including $w(u) = Ce^{-\lambda u}$, $\lambda\in(0,1)$,
and $w(u) = Cu^{-k}$, $k>0$, for large $u$.

\begin{lemma}
Suppose there is an increasing log-convex function $g$ for which
$Cg(u)\le e^uw(u) \le g(u)$, for all $u\ge u_0$, for some $u_0\in \R$ with $g'(u_0)\not=0$,
and some $C>0$. Then
$V(u)/e^uw(u)$ is bounded above on $[u_0,\infty)$.
\end{lemma}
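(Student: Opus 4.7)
The plan is to control the ratio by splitting $V$ into a harmless bounded piece plus a tail integral whose growth is controlled by the log-convexity of $g$. By definition, $V(u) = C_V + \int_{-\infty}^{u_0} e^s w(s)\,ds + \int_{u_0}^u e^s w(s)\,ds$, and Condition~\ref{assump_basic} guarantees that $A := C_V + \int_{-\infty}^{u_0} e^s w(s)\,ds$ is a finite constant. On the interval $[u_0,\infty)$ the hypothesis $e^s w(s) \le g(s)$ reduces the problem to bounding $\int_{u_0}^u g(s)\,ds$ by a constant multiple of $g(u)$, and then using $e^u w(u) \ge C g(u)$ in the denominator.

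The core step is the inequality $\int_{u_0}^u g(s)\,ds \le g(u)/\phi'(u_0)$ where $\phi := \log g$. Because $g$ is increasing and log-convex, $\phi$ is non-decreasing and convex; in particular $\phi'$ exists almost everywhere (one can use the left derivative) and is non-decreasing. The assumption $g'(u_0)\neq 0$ together with $g(u_0)>0$ (which follows from $w(u_0)>0$ and $Cg(u_0)\le e^{u_0}w(u_0)$) forces $\phi'(u_0)>0$, hence $\phi'(s) \ge \phi'(u_0) > 0$ for all $s\ge u_0$. So $\phi$ is strictly increasing on $[u_0,\infty)$ and admits a differentiable inverse there. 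The change of variables $t=\phi(s)$ gives
\begin{equation*}
\int_{u_0}^u e^{\phi(s)}\,ds = \int_{\phi(u_0)}^{\phi(u)} \frac{e^t}{\phi'(\phi^{-1}(t))}\,dt \le \frac{1}{\phi'(u_0)}\bigl(e^{\phi(u)}-e^{\phi(u_0)}\bigr) \le \frac{g(u)}{\phi'(u_0)},
\end{equation*}
which is the desired bound.

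Combining these pieces, for $u\ge u_0$ we obtain
\begin{equation*}
\frac{V(u)}{e^u w(u)} \le \frac{A + g(u)/\phi'(u_0)}{C\,g(u)} = \frac{A}{C\,g(u)} + \frac{1}{C\,\phi'(u_0)}.
\end{equation*}
Since $g$ is increasing, $g(u)\ge g(u_0)>0$ on $[u_0,\infty)$, so both terms are uniformly bounded and the conclusion follows.

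The only real subtlety is the legitimacy of the change of variables when $\phi$ is merely convex (and hence a priori not $C^1$); this is handled by working with the left derivative $\phi'$, which is non-decreasing, or equivalently by writing $\phi(u)-\phi(s)=\int_s^u \phi'(t)\,dt \ge \phi'(u_0)(u-s)$ for $s\ge u_0$ and deducing $g(s) \le g(u)\,e^{-\phi'(u_0)(u-s)}$, which integrates directly to the same bound $g(u)/\phi'(u_0)$ without invoking an inverse function. All the explicit examples (e.g.\ $w(u)=Ce^{-\lambda u}$ with $\phi'\equiv 1-\lambda$, and $w(u)=Cu^{-k}$ with $\phi'(u)=1-k/u\to 1$) fit cleanly into this framework.
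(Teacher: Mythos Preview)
Your proof is correct and follows the same overall strategy as the paper: split off $V(u_0)$ (your constant $A$), bound $\int_{u_0}^u e^s w(s)\,ds$ by $\int_{u_0}^u g(s)\,ds$, and then use log-convexity of $g$ to dominate the latter integral by a constant multiple of $g(u)$. The difference is in how that last step is executed. The paper invokes an external result (a Hadamard-type inequality for log-convex functions from Gill, Pearce, and Pe\v{c}ari\'{c}) to obtain $\int_{u_0}^u g(s)\,ds \le (u-u_0)(g(u)-g(u_0))/(\log g(u)-\log g(u_0))$, and then applies convexity of $\log g$ once more to reach the bound $g(u)\cdot g(u_0)/g'(u_0)$. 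You instead go directly: from $\phi'(t)\ge \phi'(u_0)$ you get $g(s)\le g(u)e^{-\phi'(u_0)(u-s)}$ and integrate to obtain $\int_{u_0}^u g(s)\,ds \le g(u)/\phi'(u_0)$. Since $1/\phi'(u_0)=g(u_0)/g'(u_0)$, the two arguments land on exactly the same constant, but yours is self-contained and avoids the citation. One trivial slip: you justify $g(u_0)>0$ from the inequality $Cg(u_0)\le e^{u_0}w(u_0)$, which goes the wrong way; the correct inequality is $e^{u_0}w(u_0)\le g(u_0)$, though in any case log-convexity already forces $g>0$ everywhere.
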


\begin{proof}
Theorem 2.1 of Gill, Pearce, and Pe\v{c}ari\'{c} \cite{gill} provides an upper bound on 
the integral of a log-convex function $g$, which yields, for any $u>u_0$,
$$
V(u) = V(u_0) + \int_{u_0}^u e^sw(s)\, ds \le V(u_0) + \int_{u_0}^u g(s)\,ds
\le V(u_0) + \frac{(u-u_0)(g(u)-g(u_0))}{\log g(u)-\log g(u_0)}.
$$
Since $g(u_0) \ge e^{u_0} w(u_0) \ge 0$, the bound remains valid if we remove $g(u_0)$ from the numerator of the last
term. Convexity of $\log g$ implies that $\log g(u)-\log g(u_0) \ge (\log g(u_0))' (u-u_0)$,
and $(\log g(u_0))' = g'(u_0)/g(u_0)$, so
$$
\frac{V(u)}{e^uw(u)} \le \frac{V(u_0)}{e^uw(u)} + \frac{g(u)}{e^uw(u)}
\frac{(u-u_0)}{\log g(u)-\log g(u_0)}
\le  \frac{V(u_0)}{e^{u_0}w(u_0)} + \frac{1}{C}\frac{g(u_0)}{g'(u_0)}.
$$
\end{proof}

\section{Proofs for Section 5}
\subsection{Proof of Proposition~\ref{p:dmix}}
\begin{proof}[Proof of Proposition~\ref{p:dmix}]
For any value $\mu$ of the common mean required by the constraint in (\ref{dmix}),
we know from Lemma~\ref{l:proj} that $D(G_i \|F_i)$, $i=0,1$, is minimized by
taking $G_i$ to be
an exponentially tilted distribution $F_{i,\beta_i}$, with $\nabla\psi_i(\beta_i)=\mu$,
$i=0,1$.
We then get
$$
D(G_i \|F_i) = \int \beta_i^{\top}x - \psi_i(\beta_i)\, dG_i = 
\beta_i^{\top}\mu - \psi_i(\beta_i), \quad i=0,1,
$$
and an objective function value of
\begin{equation}
\lambda[\beta_0^\top \mu - \psi_0(\beta_0)]
+
(1-\lambda)[\beta_1^\top \mu - \psi_1(\beta_1)].
\label{cmu}
\end{equation}
We can solve (\ref{dmix}) by minimizing (\ref{cmu}) over $\mu$, keeping in mind that
$\beta_0$ and $\beta_1$ depend on $\mu$.

Each function $\mu\mapsto \beta_i^\top \mu - \psi_i(\beta_i)$ in (\ref{cmu})
is the convex conjugate of $\psi_i$, $i=0,1$, at $\mu$, defined by
$$
\sup_b \{b^\top \mu - \psi_i(b)\},
$$
and is therefore convex in $\mu$. It follows that any point at which the
derivative of (\ref{cmu}) with respect to $\mu$ is zero minimizes (\ref{cmu}).

Differentiating  (\ref{cmu}) with respect to $\mu$, writing $\dot{\beta}_i$ for the 
derivative matrix of $\beta_i$ with respect to $\mu$, and setting the derivative equal
to zero to get
$$
\lambda[\dot{\beta_0}\cdot\mu + \beta_0 - \dot{\beta}_0\cdot\nabla\psi_0(\beta_0)]
+
(1-\lambda)[\dot{\beta_1}\cdot\mu + \beta_1 - \dot{\beta}_1\cdot\nabla\psi_1(\beta_1)]=0.
$$
But $\nabla\psi_i(\beta_i)=\mu$, $i=0,1$, so this equation simplifies to
$$
\lambda\beta_0 + (1-\lambda)\beta_1 = 0.
$$
The solution is then of the form
$\beta_0 = (1-\lambda)\beta$ and $\beta_1=-\lambda\beta$, where
$\beta$ solves
$$
\nabla\psi_0((1-\lambda)\beta) = \nabla\psi_1(-\lambda\beta),
$$
which is (\ref{psistar}).
\end{proof}

\section{Connection with Nonlinear Classifiers}\label{s:robust_nonlinear}
In this section, we provide further insight into the connection between (\ref{dmix}) and the original
classification problem by generalizing (\ref{eloss}) to the problem of minimizing
\begin{equation}
\bar{C}_{\lambda}(R)
=
\mathbb{E}\left[Y(1-\lambda)
e^{-\lambda R(X)} +
(1-Y)\lambda
e^{(1-\lambda)R(X)}\right]
\label{clam}
\end{equation}
over possibly nonlinear discriminant functions $R:\R^d\to\R$.
Suppose for simplicity that
$F_0$ and $F_1$ have densities $f_0$ and $f_1$. Then arguing as in 
Lemma 1 of Friedman, Hastie, and Tibshirani \cite{friedman},
(\ref{clam}) is minimized at
$$
R(x) = \log\frac{\pi_1}{\pi_0} + \log\frac{f_1(x)}{f_2(x)}.
$$
Making this substitution in (\ref{clam}) and simplifying yields
\begin{equation}
\bar{C}_{\lambda}(R)
=
\int (\pi_1f_1(x))^{1-\lambda}(\pi_0f_0(x))^{\lambda}\, dx.
\label{clam2}
\end{equation}
The case $\lambda=1/2$ appears in equation (28) of Eguchi and Copas \cite{eguchi}.
We can write (\ref{clam2}) as
\begin{equation}
\bar{C}_{\lambda}(R)
=
\pi_1^{1-\lambda}\pi_0^{\lambda}e^{(\lambda-1)D_{\lambda}(F_0 \|F_1)},
\label{clam3}
\end{equation}
using the R\'enyi divergence,
$$
D_{\lambda}(H \|F) = \frac{1}{\lambda-1}\log \int dH^{\lambda}dF^{(1-\lambda)}.
$$
The R\'enyi divergence has a representation in terms of the Kullback-Leibler divergence as
$$
(1-\lambda)D_{\lambda}(H \|F) = \inf_G \{\lambda D(G \|H) + (1-\lambda)D(G \|F)\},
$$
(Erven and Harremo\"{e}s \cite{erven}, Theorem 30)
which allows us to write (\ref{clam3}) as
\begin{equation}
\bar{C}_{\lambda}(R)
=
\pi_1^{1-\lambda}\pi_0^{\lambda}e^{-\inf_G \{\lambda D(G \|F_0) + (1-\lambda)D(G \|F_1)\}}.
\label{clam4}
\end{equation}
In other words, when we drop the requirement that the discriminant function be linear,
the minimal loss (\ref{clam}) is determined by
$$
\inf_G \{\lambda D(G \|F_0) + (1-\lambda)D(G \|F_1)\}.
$$
The loss in (\ref{clam4}) is small when there is no $G$ that is ``close'' to both $F_0$
and $F_1$ in the sense of this weighted divergence.
When we 
require the discriminant function  $R(\cdot)$ in (\ref{clam}) to be linear,
the optimal $\beta$ is
determined by (\ref{dmix}), which relaxes the constraint that $G_0=G_1$ to the requirement
that these two distributions have the same mean.
But this discussion shows that the effect of $\lambda$ discussed in
Section~\ref{s:robust2} extends at least in part to nonlinear discriminant
functions defined through (\ref{clam}).

\section{Supplementary Tables}
\subsection{Freddie Mac Dataset AUC}\label{appendix-fredmac}

Tables \ref{t:auc_logit}--\ref{t:auc_exp9} report AUC values for training, validation, and test sets
for four choices of classifiers. Results are indexed by training year, so test results for 2003,
for example, are based on defaults predicted for the first quarter of 2006. In all cases,
the test and validation AUCs are close to the training AUCs and above 0.8.

\begin{table}[ht]
\begin{minipage}[b]{0.45\linewidth}\centering
\begin{tabular}{r|rrr}
\hline
Year &  Train &  Val &   Test \\
\hline
2003 & 0.8899 & 0.8748 & 0.8867 \\
2004 & 0.8514 & 0.8368 & 0.8484 \\
2005 & 0.8456 & 0.8465 & 0.8457 \\
2006 & 0.8314 & 0.8220 & 0.8294 \\
2007 & 0.8258 & 0.8234 & 0.8254 \\
2008 & 0.8487 & 0.8512 & 0.8492 \\
2009 & 0.8763 & 0.8769 & 0.8764 \\
2010 & 0.8421 & 0.8663 & 0.8470 \\
2011 & 0.8730 & 0.8539 & 0.8683 \\
2012 & 0.6573 & 0.6379 & 0.6536 \\
2013 & 0.8589 & 0.8383 & 0.8546 \\
\hline
\end{tabular}
\caption{AUC, Logistic Regression}\label{t:auc_logit}
\end{minipage}
\hspace{0.5cm}
\begin{minipage}[b]{0.45\linewidth}\centering
\begin{tabular}{r|rrr}
\hline
Year &  Train &  Val &   Test \\
\hline
2003 & 0.8911 & 0.8766 & 0.8880 \\
2004 & 0.8530 & 0.8338 & 0.8490 \\
2005 & 0.8479 & 0.8493 & 0.8482 \\
2006 & 0.8322 & 0.8206 & 0.8298 \\
2007 & 0.8277 & 0.8259 & 0.8273 \\
2008 & 0.8503 & 0.8526 & 0.8508 \\
2009 & 0.8783 & 0.8796 & 0.8785 \\
2010 & 0.8440 & 0.8636 & 0.8480 \\
2011 & 0.8748 & 0.8580 & 0.8707 \\
2012 & 0.8759 & 0.8583 & 0.8726 \\
2013 & 0.8600 & 0.8372 & 0.8553 \\
\hline
\end{tabular}
\caption{AUC, $\lambda = 0.1$}\label{t:auc_exp1}
\end{minipage}\\

\begin{minipage}[b]{0.45\linewidth}\centering
\begin{tabular}{r|rrr}
\hline
Year &  Train &  Val &   Test \\
\hline
2003 & 0.8914 & 0.8771 & 0.8883 \\
2004 & 0.8555 & 0.8379 & 0.8519 \\
2005 & 0.8531 & 0.8549 & 0.8534 \\
2006 & 0.8339 & 0.8235 & 0.8318 \\
2007 & 0.8281 & 0.8264 & 0.8278 \\
2008 & 0.8514 & 0.8534 & 0.8518 \\
2009 & 0.8794 & 0.8793 & 0.8794 \\
2010 & 0.8444 & 0.8654 & 0.8487 \\
2011 & 0.8754 & 0.8626 & 0.8723 \\
2012 & 0.8760 & 0.8617 & 0.8733 \\
2013 & 0.8600 & 0.8367 & 0.8551 \\
\hline
\end{tabular}
\caption{AUC, $\lambda = 0.5$}\label{t:auc_exp5}
\end{minipage}
\hspace{0.5cm}
\begin{minipage}[b]{0.45\linewidth}\centering
\begin{tabular}{r|rrr}
\hline
Year &  Train &  Val &   Test \\
\hline
2003 & 0.8875 & 0.8740 & 0.8847 \\
2004 & 0.8537 & 0.8376 & 0.8504 \\
2005 & 0.8526 & 0.8558 & 0.8532 \\
2006 & 0.8316 & 0.8215 & 0.8295 \\
2007 & 0.8247 & 0.8231 & 0.8244 \\
2008 & 0.8501 & 0.8518 & 0.8505 \\
2009 & 0.8774 & 0.8763 & 0.8771 \\
2010 & 0.8403 & 0.8633 & 0.8450 \\
2011 & 0.8642 & 0.8622 & 0.8637 \\
2012 & 0.8697 & 0.8579 & 0.8674 \\
2013 & 0.8542 & 0.8263 & 0.8484 \\
\hline
\end{tabular}
\caption{AUC, $\lambda = 0.9$}\label{t:auc_exp9}
\end{minipage}
\end{table}

\subsection{Freddie Mac Testing TPR}\label{appendix-testTPR}

Table \ref{t:train_TPR_test_TPR} shows true positive rates in test data for four
classifiers. In each case, the classification threshold was set in the training data
to achieve a TPR of 99\%. The results show that the same thresholds achieve
very similar TPRs in the test data. Results are indexed by training year, so test results for 2003,
for example, are based on defaults predicted for the first quarter of 2006.

\begin{table}[ht]
\centering
\begin{tabular}{rrrrr}
  \hline
Year &  Logistic &  $\lambda=0.1$ &  $\lambda =0.5$ &  $\lambda = 0.9$ \\
  \hline
       2003 &     98.76 &  98.83 &  98.91 &  98.76 \\
       2004 &     98.86 &  98.72 &  98.65 &  98.72 \\
       2005 &     99.17 &  99.17 &  99.17 &  99.17 \\
       2006 &     99.00 &  98.95 &  98.93 &  98.91 \\
       2007 &     98.98 &  99.00 &  99.02 &  99.04 \\
       2008 &     99.02 &  99.01 &  99.03 &  99.05 \\
       2009 &     99.09 &  99.04 &  99.00 &  99.00 \\
       2010 &     99.23 &  99.23 &  99.10 &  98.97 \\
       2011 &     98.90 &  99.12 &  99.34 &  99.34 \\
       2012 &     98.70 &  98.70 &  98.70 &  98.70 \\
       2013 &     99.01 &  99.01 &  99.21 &  98.81 \\
\hline
\end{tabular}
\caption{TPR (in percent) in test data using classification thresholds that achieve TPR=99\% in training data.} 
\label{t:train_TPR_test_TPR}
\end{table}

\section{Delta Function Weight}
\label{s:delta}
The loss function defined by setting, for some $u_0\in\R$,
$$
U(s) = -\mathbf{1}\{s\le u_0\}, \quad V(s) = e^{u_0}\mathbf{1}\{s>u_0\},
$$
can be interpreted as taking $w$ to be a delta function with unit mass at $u_0$.
This case leads to the objective in (\ref{cdelta}) and its counterpart
\begin{equation}
\bar{C}_N(\alpha,\beta) = \sum_{i=1}^n \mathbf{1}\{\alpha+\beta^{\top}x_i\le u_0\}
+ e^{u_0} \sum_{i=1}^N \mathbf{1}\{\alpha+\beta^{\top}X_i>u_0\}.
\label{cbdelta}
\end{equation}
As discussed in Example 2 in Section 2.4 of Eguchi and Copas \cite{eguchi}, through
appropriate choice of $u_0$, $C(\alpha,\beta)$ can be interpreted as balancing misclassification
costs. However, $\bar{C}_N(\alpha,\beta)$ is not convex, and we will show that the resulting linear discriminant function degenerates under imbalance, in the sense that $\beta_N=0$ a.s.
for all sufficiently large $N$.

For any $\alpha$, $\beta$, we have $\bar{C}_N(\alpha,\beta)\ge 0$, and
for any $\alpha \le u_0$, we have $\bar{C}_N(\alpha, 0) = n$.
In particular, then, if $(\alpha_N,\beta_N)$ minimizes $\bar{C}_N$,
\begin{equation}\label{w_obs}
0 \le \bar{C}_N(\alpha_N, \bt_N) \le \bar{C}_N(\alpha, 0) = n,
\end{equation}
for all $\alpha\le u_0$.

\begin{lemma}
Suppose $F_0$ surrounds some $x_j$, $1\le j \le n$ with parameter $(\epsilon, \delta)$.
Then $\alpha_N + \bt_N^\top x_j \le u_0$ a.s. for all sufficiently large $N$.
\end{lemma}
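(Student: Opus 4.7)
The plan is a proof by contradiction, leveraging the uniform upper bound $\bar{C}_N(\alpha_N,\beta_N)\le n$ from \eqref{w_obs}. Suppose, toward a contradiction, that along some (possibly random) subsequence of $N$'s we have $\alpha_N+\beta_N^\top x_j > u_0$. I want to show that this forces a linear-in-$N$ number of the majority observations $X_i$ to satisfy $\alpha_N+\beta_N^\top X_i > u_0$, so that the second sum in \eqref{cbdelta} grows like $e^{u_0}\delta_1 N$, which eventually exceeds $n$ — the contradiction.

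The key idea is to route the surrounding condition through the direction of $\beta_N$. First I would dispose of the degenerate case $\beta_N = 0$: in that case $\alpha_N > u_0$, so \emph{every} $X_i$ contributes $e^{u_0}$ and $\bar{C}_N(\alpha_N,0)\ge e^{u_0}N$, which trivially exceeds $n$ for large $N$. For $\beta_N\ne 0$, pick any $\epsilon_1\in(0,\epsilon)$ and $\delta_1\in(0,\delta)$. By Lemma~\ref{l:surround}, for all sufficiently large $N$ the empirical distribution of $X_1,\dots,X_N$ surrounds $x_j$ with parameters $(\epsilon_1,\delta_1)$, a.s. Apply this with the unit vector $\omega_N=\beta_N/\|\beta_N\|$ to get
\[
\#\bigl\{i\le N:\beta_N^\top(X_i-x_j)>\epsilon_1\|\beta_N\|\bigr\}
=\#\bigl\{i\le N:(X_i-x_j)^\top\omega_N>\epsilon_1\bigr\}>\delta_1 N.
\]
Since $\epsilon_1\|\beta_N\|>0$, for each such $i$ we have $\alpha_N+\beta_N^\top X_i=(\alpha_N+\beta_N^\top x_j)+\beta_N^\top(X_i-x_j)>u_0$, so these indices contribute $e^{u_0}$ each to $\bar{C}_N(\alpha_N,\beta_N)$.

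Combining the two cases, under the contradiction hypothesis we get $\bar{C}_N(\alpha_N,\beta_N)\ge e^{u_0}\delta_1 N$ for all large $N$, which exceeds $n$ and contradicts \eqref{w_obs}. Hence $\alpha_N+\beta_N^\top x_j\le u_0$ a.s. for all sufficiently large $N$.

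I do not expect a serious obstacle. The only subtle point is that the surrounding-based lower bound requires the normal vector $\omega_N$ to be deterministic for a direct application of Lemma~\ref{l:surround}, so one must appeal to the fact (used in the same way inside the proof of that lemma) that the bound is uniform over all directions $\omega\in\Omega$ for all large $N$ a.s.; this is exactly what the statement of Lemma~\ref{l:surround} gives, so substituting the (random) direction $\omega_N=\beta_N/\|\beta_N\|$ is legitimate.
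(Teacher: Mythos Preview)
Your argument is correct and essentially the same as the paper's: both use the surrounding property (via Lemma~\ref{l:surround}) to show that whenever $\alpha+\beta^\top x_j>u_0$, at least a $\delta_1$-fraction of the majority observations satisfy $\alpha+\beta^\top X_i>u_0$, forcing $\bar{C}_N>e^{u_0}\delta_1 N>n$ and contradicting \eqref{w_obs}. The paper's version is slightly more compact in that it bounds via the event $\{\beta^\top(X_i-x_j)\ge 0\}$, which automatically covers the $\beta=0$ case without splitting it off.
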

\begin{proof}
Let $\delta_! \in (0,\delta)$. At any $(\alpha,\beta)$ for which $\alpha + \bt^\top x_j > u_0$, we have
\begin{equation*}
    \begin{aligned}
    \frac{1}{N} \sum_{i=1}^N \mathbf{1} \{\alpha+\beta^{\top}X_i > u_0\} 
    &= \frac{1}{N} \sum_{i=1}^N \mathbf{1} \{\beta^{\top} (X_i-x_j) > u_0 - (\alpha + \beta^\top x_j)\} \\
    &\ge \frac{1}{N} \sum_{i=1}^N \mathbf{1} \{\beta^{\top} (X_i-x_j) \ge 0\} \\
    &\ge \delta_1
    \end{aligned}
\end{equation*}
a.s., for all sufficiently large $N$, 
where going from the second last step to the last step we use Lemma~\ref{l:surround}.

For $N > n/e^{u_0}\delta_1$, (\ref{cbdelta}) then implies $\bar{C}_N(\alpha,\beta)>n$.
But we know $\bar{C}_N(\alpha_N, \bt_N) \le n$ from (\ref{w_obs}), so we
must have $\alpha_N + \bt_N^\top x_j \le u_0$ a.s., for all sufficiently large $N$. 
\end{proof}

With this lemma, we have the following result.
\begin{proposition}
Suppose Condition~\ref{assump_surr_all_min} holds. Then for all sufficiently large $N$, 
every pair $(\alpha,\beta)$ of the form $\alpha\le u_0$, $\beta=0$ is a global
minimizer of $\bar{C}_N$.
\end{proposition}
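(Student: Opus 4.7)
The proof essentially assembles the pieces already in hand: the observation in (\ref{w_obs}), the computation $\bar{C}_N(\alpha,0)=n$ for $\alpha\le u_0$, and the preceding lemma. I would organize it as follows.

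First, I would fix $N$ large enough that the preceding lemma applies simultaneously to every $x_j$, $j=1,\dots,n$; this is possible because Condition~\ref{assump_surr_all_min} ensures $F_0$ surrounds each $x_j$ with common parameters $(\epsilon^o,\delta^o)$, so the finite intersection of the almost-sure events from the lemma still has probability one. For any minimizer $(\alpha_N,\beta_N)$, the lemma then gives $\alpha_N+\beta_N^\top x_j\le u_0$ for all $j=1,\dots,n$, which makes every indicator in the first sum of (\ref{cbdelta}) equal to $1$. Hence
\[
\bar{C}_N(\alpha_N,\beta_N) \;\ge\; \sum_{i=1}^n \mathbf{1}\{\alpha_N+\beta_N^\top x_i\le u_0\} \;=\; n,
\]
since the second sum in (\ref{cbdelta}) is nonnegative.

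Next, I would combine this with (\ref{w_obs}), which already provides the reverse inequality $\bar{C}_N(\alpha_N,\beta_N)\le n$. Therefore the minimum value of $\bar{C}_N$ equals $n$ almost surely, for all sufficiently large $N$. A direct computation shows that for any $\alpha\le u_0$, taking $\beta=0$ yields $\alpha+\beta^\top X_i=\alpha\le u_0$ for every $i$, so the second sum in (\ref{cbdelta}) vanishes and the first sum equals $n$, giving $\bar{C}_N(\alpha,0)=n$. Since this matches the minimum value established above, every such pair $(\alpha,0)$ is a global minimizer.

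There is no real obstacle here; the nontrivial input is the preceding lemma, which is already proved using the surrounding condition and Lemma~\ref{l:surround}. The only delicate point is to note that the preceding lemma was stated for a single $x_j$, so one must invoke it for each of the $n$ minority observations and intersect the associated almost-sure events, which is harmless since $n$ is fixed. This yields the proposition.
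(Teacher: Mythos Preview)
The proposal is correct and takes essentially the same approach as the paper's proof: apply the preceding lemma to each $x_j$ to force $\bar{C}_N(\alpha_N,\beta_N)\ge n$ at any minimizer, combine with (\ref{w_obs}) to conclude the minimum value is exactly $n$, and observe that $(\alpha,0)$ with $\alpha\le u_0$ achieves this value. Your version is slightly more explicit about taking the finite intersection of almost-sure events across the $n$ minority points, which the paper leaves implicit.
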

\begin{proof}
If $(\alpha_N, \bt_N)$ minimizes $\bar{C}_N$, then we know from the previous lemma that
$\alpha_N + \bt_N^\top x_i \le u_0$ a.s., for all $i=1,\dots,n$, for all sufficiently large $N$.
It follows from (\ref{cbdelta}) that $\bar{C}_N(\alpha_N,\beta_N)\ge n$.
At any $\alpha\le u_0$, (\ref{cbdelta}) also shows that $\bar{C}_N(\alpha,0)=n$.
Thus, every $(\alpha,0)$, $\alpha\le u_0$, is a global minimizer for sufficiently large $N$.
\end{proof}

We conclude from this result that the objective (\ref{cbdelta}) degenerates under
sufficient imbalance, in the sense that it returns $\beta_N=0$ a.s., for all sufficiently
large $N$. The linear discriminant function $\alpha_N + \beta_N^{\top}x$ assigns
the same value $\alpha_N$ to every observation, and $\alpha_N$ could be
any value less than or equal to $u_0$.
\end{document}